\documentclass{article}

\usepackage{microtype}
\usepackage{graphicx}
\usepackage{caption}
\usepackage{subcaption}
\usepackage{booktabs} 
\usepackage{tikz}
\usepackage{import}
\usepackage{bm}
\usepackage{tabularx}

\usepackage{hyperref}

\usepackage[preprint]{icml2026}

\usepackage{amsmath}
\usepackage{amssymb}
\usepackage{mathtools}
\usepackage{amsthm}
\usepackage{subcaption}

\DeclareMathOperator{\Tr}{Tr}

\newcommand{\out}{\text{out}}
\newcommand{\softmax}{\operatorname{softmax}}

\newcommand{\din}{d_{\text{in}}}
\DeclareMathOperator*{\argmin}{\operatorname{argmin}}
\DeclareMathOperator*{\E}{\mathbb{E}}

\DeclareMathOperator*{\tr}{\operatorname{tr}}
\newcommand{\JL}{\text{JL}}
\newcommand{\reg}{\text{reg}}

\newcommand{\cB}{\mathcal{B}}
\newcommand{\cD}{\mathcal{D}}
\newcommand{\cE}{\mathcal{E}}
\newcommand{\cF}{\mathcal{F}}
\newcommand{\cG}{\mathcal{G}}
\newcommand{\cH}{\mathcal{H}}
\newcommand{\cI}{\mathcal{I}}
\newcommand{\cL}{\mathcal{L}}
\newcommand{\cP}{\mathcal{P}}
\newcommand{\cR}{\mathcal{R}}
\newcommand{\cS}{\mathcal{S}}
\newcommand{\cT}{\mathcal{T}}
\newcommand{\cU}{\mathcal{U}}
\newcommand{\cX}{\mathcal{X}}
\newcommand{\cY}{\mathcal{Y}}
\newcommand{\e}{\varepsilon}
\newcommand{\R}{\mathbb{R}}

\newcommand{\tD}{\tilde{\mathcal{D}}}
\newcommand{\tX}{\tilde{X}}

\newcommand{\tY}{\tilde{Y}}

\newcommand{\tf}{\tilde{f}}
\newcommand{\tPhi}{\tilde{\Phi}}

\newcommand{\tK}{\tilde{K}}
\newcommand{\vtx}{\mathbf{\tilde{x}}}
\newcommand{\vty}{\mathbf{\tilde{y}}}

\newcommand{\tlambda}{\tilde{\lambda}}
\newcommand{\tdelta}{\tilde{\delta}}

\newcommand{\STATEX}{\item[]}

\newcommand{\valpha}{\boldsymbol{\alpha}}
\newcommand{\vu}{\mathbf{u}}
\newcommand{\vv}{\mathbf{v}}
\newcommand{\vx}{\mathbf{x}}
\newcommand{\vy}{\mathbf{y}}
\newcommand{\vz}{\mathbf{z}}

\usepackage[capitalize,noabbrev]{cleveref}

\theoremstyle{plain}
\newtheorem{theorem}{Theorem}[section]
\newtheorem{proposition}[theorem]{Proposition}
\newtheorem{lemma}[theorem]{Lemma}
\newtheorem{corollary}[theorem]{Corollary}
\theoremstyle{definition}
\newtheorem{definition}[theorem]{Definition}

\theoremstyle{remark}
\newtheorem{remark}[theorem]{Remark}

\usepackage[textsize=tiny]{todonotes}

\usepackage{marginnote}
\definecolor{burntorange}{rgb}{0.8, 0.33, 0.0}

\icmltitlerunning{Efficient Analysis of the Distilled Neural Tangent Kernel}

\begin{document}
\twocolumn[
  \icmltitle{Efficient Analysis of the Distilled Neural Tangent Kernel}

  \icmlsetsymbol{equal}{*}

  \begin{icmlauthorlist}
    \icmlauthor{Jamie Mahowald}{lanl}
    \icmlauthor{Brian Bell}{lanl}
    \icmlauthor{Alex Ho}{lanl}
    \icmlauthor{Michael Geyer}{lanl}
  \end{icmlauthorlist}

  \icmlaffiliation{lanl}{Los Alamos National Laboratory, Los Alamos, NM}

  \icmlcorrespondingauthor{Jamie Mahowald}{j.mahowald@lanl.gov}

  \icmlkeywords{Kernel methods, neural tangent kernel, compression}

  \vskip 0.3in
]


\printAffiliationsAndNotice{}  

\begin{abstract}
Neural tangent kernel (NTK) methods are computationally limited by the need to evaluate large Jacobians across many data points. Existing approaches reduce this cost primarily through projecting and sketching the Jacobian. We show that NTK computation can also be reduced by compressing the \textit{data dimension itself} using NTK-tuned dataset distillation. We demonstrate that the neural tangent space spanned by the input data can be induced by dataset distillation, yielding a 20-100\(\times\) reduction in required Jacobian calculations. We further show that per-class NTK matrices have low effective rank that is preserved by this reduction. Building on these insights, we propose the \textbf{distilled neural tangent kernel} (\textbf{DNTK}), which combines NTK-tuned dataset distillation with state-of-the-art projection methods to reduce up NTK computational complexity by up to five orders of magnitude while preserving kernel structure and predictive performance. 

\end{abstract}
\section{Introduction}

The neural tangent kernel (NTK) \cite{jacot_ntk_2018} gives a theoretical lens for understanding neural network (NN) training, particularly in the overparameterized regime. As the width of a given NN approaches infinite, the network's training dynamics become linear and equivalent to kernel regression. In this ``lazy training'' regime, the network adjusts a linear combination of fixed features, and does not learn to represent features from scratch. Under certain circumstances, a model can be approximated within the kernel regime defined by the NTK, which enables analysis that would be intractable in the parametric regime.

Unfortunately, the NTK is intractable to compute for all but the smallest networks: for a parameterized network \(f\) with \(P\) fixed total parameters trained on \(n\) data points, the cost of computing the NTK scales as \(O(n^2 P)\), while storing scales as \(O(n^2)\). To naïvely compute the NTK of a ResNet50 on all 1.3 million ImageNet points requires at least \(4.2 \cdot 10^{19}\) floating point operations and \(1.69 \cdot 10^{12}\) memory entries. Projects that leverage NTK-like formulations to detect distributional shifts \cite{huang_distshift_2021}, quantify uncertainty \cite{wilson_uq_2025}, and characterize robustness \cite{tsilivis_adv_2022} have therefore been limited to small models. To enable these analyses for larger models, we need a robust NTK approximation regime.

Prior attempts to approximate the NTK for downstream tasks tend to either address parameter complexity alone \cite{hirsch_pinn_2025} or approximate it using different, cheaper kernels \cite{loo_rfad_2022}. Our method creates tractable, accurate approximations of the NTK itself by noting that the empirical NTK of a pretrained neural network exhibits significant redundancy at \textit{three} levels: in the dataset, in the parameters, and in the gradient subspace. To overcome the intractability of computing full NTKs, we combine three complementary strategies that target redundancy at each of these levels: (1) data distillation, which synthesizes compact datasets that preserve task performance; (2) random projection, which reduces the dimensionality of the tangent space while preserving kernel structure; and (3) structure-aware gradient distillation, designed to further compress the NTK by exploiting its local and global spectral structure. 

Central to our approach is that these components – dataset distillation, random projection, and gradient distillation – are theoretically justified methods that target distinct sources of redundancy. To that end, we provide proofs in \Cref{sec:setup} and in \Cref{appsec:error_bounds,appsec:pca_proof} that explain how and when these methods preserve variances and subspaces.
We refer to NTK approximations constructed via this unified framework as \textbf{distilled neural tangent kernels} (\textbf{DNTKs}). Combining these techniques, we achieve up to a \(\approx 10^5\times\) reduction in both computation time and storage in our experiments on a mid-size image classification task while maintaining downstream task performance with theoretical guarantees on approximation quality.


The paper is organized as follows: \Cref{sec:related} situates our work within NTK theory and dataset distillation. \Cref{sec:setup} formalizes the empirical NTK and notions of redundancy that motivate our approach. \Cref{sec:method} describes our compression pipeline: dataset distillation, random projection, and gradient distillation. \Cref{sec:experiments} presents experimental validation.
\section{Related work}\label{sec:related}

This work draws on two lines of research: (i) kernel perspectives on deep networks, especially the \textbf{neural tangent kernel (NTK)}, and (ii) \textbf{dataset distillation} / coreset methods for summarizing data.

\paragraph{Kernel methods and deep kernel learning.}
A long line of work connects neural networks to kernel machines. In particular, deep kernel learning combines hierarchical representations with the nonparametric flexibility of kernels \citep{wilson_kernel_2016,huang_hierarchy_2023}, and admits representer-theorem-style characterizations for composed RKHS models \citep{bohn_representer_2019}.

\paragraph{Neural tangent kernel (NTK).} 
\citet{jacot_ntk_2018} first proposed the NTK in 2018 to explain training dynamics in the infinite-width limit. Later work on making NTK computation tractable has focused on sketching and random-feature approximations to obtain fast (near input-sparsity) approximations of NTK matrices. For example, \citet{zandieh_sketching_2021} sketch polynomial expansions of arc-cosine kernels and combine sketching with random features to obtain spectral approximations, while \citet{han_activations_2022} generalize these ideas beyond ReLU via truncated Hermite expansions for broad activation classes.
\citet{hirsch_pinn_2025} develop a random-sketching-based approach with a physics-informed loss function. 
These methods demonstrate the effectiveness of random projection for reducing parameter complexity. 

In contrast, our work targets redundancy on the \emph{data} side (via distillation) as a complement to sketching-based accelerations of kernel construction.
To diagnose when such kernel approximations are effective, recent work studies NTK spectra: \citet{lin-feature-2025} uses empirical-NTK eigenanalysis to surface learned features, while \citet{benigni-eigen-2025} characterizes limiting NTK eigenvalue distributions under high-dimensional scaling.

\paragraph{Dataset distillation.} First proposed by  \citet{wang_distillation_2018}, dataset distillation has inspired numerous offshoots that aim both to streamline the core algorithm and to apply it to various use cases. Several kernel-based distillation methods cast distillation in a kernel ridge-regression objective inspired by infinite-width neural kernels \citep{nguyen_kip_2021}, and accelerate it via random-feature approximations \citep{loo_rfad_2022}. Whereas these works use neural kernels to formulate and accelerate distillation, we use distillation to accelerate downstream NTK computations.
We employ a modified version of WMDD \citep{liu_wmdd_2025}, which distills data by Wasserstein-metric feature matching (via a Wasserstein barycenter) in a pretrained feature space.

\paragraph{Coresets and sampling.}
Our local/global selection procedure (\Cref{alg:local_global_comp}) is related to coreset frameworks for clustering and shape fitting \citep{feldman-clustering-2016}. Our kernel sketching strategy also connects to random-feature approximations for scaling kernel machines \citep{rahimi-kernel-2007}.
\section{Setup}\label{sec:setup}

Let \(\theta \in \R^P\) denote the \(P\)-dimensional parameters of a neural network
\(f(\vx;\theta):\R^{d_{\mathrm{in}}}\to\R^C\), mapping inputs to class logits.

\subsection{NTK and KRR computation}\label{sec:krr-complexity}
The \textbf{neural tangent kernel} (NTK) measures gradient alignment between inputs. Letting \(\phi(\vx) = \nabla_\theta f(\vx; \theta) \in \R^{C \times P}\),
\begin{equation}\label{eqn:ntk-simple}
K(\vx,\vx')= \Tr(\phi(\vx)^\top \phi(\vx')) = \phi \phi^\top.
\end{equation}
For classification, we work with per-class kernels \(K^c\) formed from gradients
\(\phi^c(\vx):=\nabla_\theta f^c(\vx;\theta)\in\R^P\) of each logit (in this setting, \(K^c(\vx, \vx') = \phi^c(\vx) \phi^c(\vx')\) is a scalar). Given a training set \(X\) of \(n\) training points,
the \textbf{class gradient matrix} \(\Phi^c\in\R^{n\times P}\) has rows \([\Phi^c]_i=\phi^c(x_i)\),
yielding the \textbf{class kernel} \(K^c=\Phi^c(\Phi^c)^\top\in\R^{n\times n}\).

Under certain conditions (the network operates near the lazy training regime where dynamics are approximately linear in parameters, \(K_{XX}\) eigenvalues decay rapidly, and training labels align with dominant eigendirections, see \Cref{appsec:error_bounds}), we can approximate \(f^c\) by a kernel representer \(f_K^c\) obtained via kernel ridge regression (KRR, \Cref{appsec:ridge-regression}): \(f_K^c\) is fit on \(X\) and evaluated on a test set \(X^*\) of size \(n_{\mathrm{test}}\). The memory required to materialize \(K^c\) and \(K^c_{XX^*}\) is \(O(nP + n^2 + nn_{\rm test})\), which is prohibitive at scale for all \(C\) classes.
Alternatively, if we store only gradients and compute kernel entries on the fly, memory is \(O((n+n_{\rm test})PC)\), but computation remains dominated by \(P\)-dimensional inner products.

To address these complexity concerns, we introduce and exploit notions of redundancy in several spaces.

\subsection{Redundancy in data and parameters}\label{sec:redundancy}
By \textbf{redundancy}, we mean that most variation in model training dynamics and predictions can be
explained in a significantly smaller subspace than the model uses in practice.\footnote{Many of the notions
surrounding this idea are based on the \textbf{manifold hypothesis}; see \citet{fefferman_manifold_2016}.}
Consider a fixed parameter vector \(\theta\) of dimension \(P\), a labeled dataset \(\cD=(X,Y)\), and class \(c\). Let
\(K_{XX}^c\) be the NTK class kernel matrix at \(\theta\) between \(X\) and itself (that is, the matrix \(K^c\) where \(K_{ij}^c = K^c(\vx_i, \vx_j)\) for \(\vx_i\) and \(\vx_j \in X\)). We focus on two kinds of
redundancy, defined via Gram matrices on interrelated spaces.

\begin{definition}[Data redundancy]\label{def:data-redundancy}
Let \(\lambda_1 \ge \cdots \ge \lambda_n\) denote the eigenvalues of \(K_{XX}^c\) in decreasing order. The
\textbf{truncation rank} of \(K_{XX}^c\) at threshold \(\e\) is
\[
r_{\mathrm{trunc}}(K_{XX}^c,\e)
:=\min\left\{k:\frac{\sum_{i=1}^k\lambda_i}{\sum_{i=1}^n\lambda_i}\ge 1-\e\right\}.
\]
The input set \(X\) is \textbf{\((r,\e)\)-data-redundant} if
\(r_{\mathrm{trunc}}(K_{XX}^c,\e) \le n/r\).
\end{definition}
In other words, an \((r,\e)\)-data-redundant input set admits a \((1-\e)\)-variance kernel
approximation using an input set \(r\) times smaller.

\begin{definition}[Parameter redundancy]\label{def:param-redundancy}
Fix a class \(c\).
Given \(\cD=(X,Y)\), the parameters at \(\theta\) are \textbf{\((r,\e)\)-parameter-redundant} if there
exists a subspace \(V \subset \R^P\) with \(\dim(V)=P/r\) such that
\[
\frac{\|\Pi_V(\Phi_X^c)\Pi_V(\Phi_X^c)^\top-K^c_{XX}\|_F}{\|K^c_{XX}\|_F}\le \e,
\]
where \(\Pi_V(\Phi_X^c)\) projects each row of \(\Phi_X^c\) onto \(V\) by right-multiplication: \(\Pi_V (\Phi_X^c) = \Phi_X^c \Pi_V\).
\end{definition}
In other words, for this dataset and class, a \((1-\e)\)-fraction of predictive variation can be captured
using a parameter subspace that is \(r\) times smaller.

\Cref{def:data-redundancy,def:param-redundancy} characterize when an NTK can be approximated in low dimension due to redundancy in data or concentration in a lower-dimensional parameter space. 
To \emph{construct} such a structure in practice, we study how redundancy appears across data subsets and output Jacobians of the NTK.

\subsection{Dataset distillation as gradient subspace selection}\label{sec:dd_as_pca}
We now show that (in a standard one-step / lazy regime) dataset distillation (DD) can be viewed as \emph{selecting a low-dimensional tangent subspace in parameter space} spanned by logit gradients \(\nabla_\theta f(\tilde \vx;\theta)\) at the distilled inputs. 
In this view, distilled inputs \(\tilde X\) act as \emph{inducing points} for the NTK: they determine a parameter-space projector, and the loss incurs an update that is a task-dependent linear combination of those tangent features.

\subsubsection{Bilevel distillation induces a tangent-feature subspace.}
DD aims to synthesize a compact dataset \(\tD=(\tX,\tY)\) of size \(m \ll n\) such that training on \(\tD\) matches training on \(\cD=(X,Y)\). A common bilevel formalization is
\begin{equation}\label{eqn:bilevel}
\tD^* \in \argmin_{\tD}\;
\cL_d\!\left[f\!\left(X;\,\argmin_\theta \cL_p[f(\tX;\theta),\tY]\right),Y\right],
\end{equation}
where the constraints of the optimization problem are determined by the soft or hard biases of the network, and \(\cL_p\) and \(\cL_d\) are designed to optimize the parameters and datasets, respectively.

We work in a frozen-feature (``lazy'') regime around a fixed reference parameter vector \(\theta \in \R^P\), and interpret DD geometrically through the span of logit gradients at \(\theta\).

For an input collection \(X = \{\vx_i\}_{i=1}^m\), we define for each logit gradient \(\phi^c\) the gradient matrices \(\Phi_X^c\in\R^{m\times P}\) by \([\Phi_X^c]_i=\phi^c(\vx_i)\). We also define the \emph{stacked} logit-gradient matrix
\[
\Phi_X := \begin{bmatrix}\Phi_X^1\\ \vdots\\ \Phi_X^C\end{bmatrix}\in\R^{m_{\mathrm{tot}}\times P},
\qquad m_{\mathrm{tot}}:=mC.
\]
For the distilled inputs \(\tX\) we write \(\tPhi := \Phi_{\tX}\) and define the associated tangent subspace
\begin{equation}\label{eqn:VtD_def}
V(\tD)\equiv V(\tX) := \mathrm{colspan}(\tPhi^\top)\subset\R^P,
\end{equation}
where \(\Pi_{\tD}\) is the orthogonal projector onto \(V(\tD)\).
Importantly, \(\tX\) determines the subspace \(V(\tD)\), while \(\tY\) determines how gradients combine within it.

\paragraph{Chain-rule identity.} Here, we show that loss-gradients live in the span of logit-gradients.
Let the distilled inner objective be \(\cL(\theta)=\sum_{i=1}^m \ell[f(\vtx_i;\theta), \vty_i]\) for some per-example loss \(\ell[\cdot,\cdot]\). 
Define the \emph{logit sensitivities}
\[\delta_i(\theta):=\nabla_{f(\vtx_i)}\ell[f(\vtx_i;\theta),\vty_i] \in \R^C,\] 
and stack them into
\(\tdelta(\theta) \in \R^{m_{\mathrm{tot}}}\). By the chain rule,
\begin{equation}\label{eqn:chain_rule_gtilde}
g_{\tD}(\theta):=\nabla_\theta \cL(\theta)
\;=\; \tPhi^\top\,\tdelta(\theta)
\;\in\; V(\tD).
\end{equation}
Thus, although the DD objective is written in terms of loss gradients, those loss gradients are always linear combinations of the \emph{logit} gradients \(\nabla_\theta f^c(\vtx_i;\theta)\) that define our kernel features.

\subsubsection{One-step view: outer progress is controlled by a projection residual.}
The training method we describe in \Cref{sec:method} fixes network parameters at a reference \(\theta\) at
the conclusion of training, approximating a converged state. Using the distilled set, we define a
\emph{frozen} tangent-feature subspace \(V(\tD)=\mathrm{colspan}(\tPhi^\top)\subset\R^P\) (via \(\tX\)) and a
coefficient vector \(\tdelta(\theta)\) (via \(\tY\)), producing the inner update
\(g_{\tD}(\theta)=\tPhi^\top\tdelta(\theta)\in V(\tD)\) as in \eqref{eqn:chain_rule_gtilde}. Thus we do not
model a full training trajectory; instead, we hold the evaluation point \(\theta\) fixed and ask whether the
bilevel objective succeeds at selecting \(\tD\) whose induced one-step update improves typical outer
objectives \(t \sim \mathcal T\) at this same \(\theta\).

This analysis is inherently local about the converged point, so we compare the \emph{realized} one-step
update \(\theta^+(\tD) = \theta - \eta g_{\tD}(\theta)\) to the \emph{best} update available within the same
subspace \(V(\tD)\) under the standard smoothness (quadratic upper-model) approximation. We formulate this
as \emph{regret}: the price of using the update produced by \((\tX,\tY)\) relative to the best
subspace-restricted step, whose achievable decrease is controlled by the projection residual
\(\|(I - \Pi_{\tD}) g_t \|^2\).

\begin{theorem}[One-step smoothness regret bound]\label{thm:dd_proj_final}
Assume \(t \sim \mathcal T\), \(g_t:=\nabla_\theta \mathcal L_t(\theta)\), each \(\mathcal L_t\) is \(L\)-smooth,
and the realized update is \(\theta^+(\tD) = \theta-\eta g_{\tD}(\theta)\) with \(g_{\tD}(\theta)\in V(\tD)\).
Fix \(\tD\) and take expectation over \(t \sim \mathcal T\). Define the one-step smoothness upper model
\[
M_t(\Delta\theta)\;:=\;\langle g_t,\Delta\theta\rangle+\frac{L}{2}\|\Delta\theta\|^2,
\]
and let \(\Delta\theta_t^\star:=\argmin_{\Delta\theta\in V(\tD)} M_t(\Delta\theta)\) denote the best
subspace-restricted step in this model. Then the realized update
\(\Delta\theta_{\tD}:=-\eta\,g_{\tD}(\theta)\in V(\tD)\) satisfies
\begin{equation}\label{eq:headline_regret_bound}
\begin{aligned}
\E_t\!\big[\mathcal L_t(\theta+\Delta\theta_{\tD})-\mathcal L_t(\theta+\Delta\theta_t^\star)\big]\\
\le
\eta\,\E_t\!\Big[
\Big\langle g_t,\frac{\Delta\theta_t^\star}{\eta}\Big\rangle
-\langle g_t,g_{\tD}(\theta)\rangle
\Big] \\
\quad
+\frac{L\eta^2}{2}\Big(
\|g_{\tD}(\theta)\|^2
-\Big\|\frac{\Delta\theta_t^\star}{\eta}\Big\|^2
\Big),
\end{aligned}
\end{equation}
where the first term isolates the penalty for failing to realize the best coefficients within \(V(\tD)\)
(via \(\tY\)), and the second term is the corresponding quadratic-model penalty.

Moreover, the minimizer of \(M_t\) over \(V(\tD)\) is
\[
\Delta\theta_t^\star=-\frac{1}{L}\Pi_{\tD}g_t,
\]
and \(L\)-smoothness implies the best attainable guaranteed decrease within \(V(\tD)\) is
\begin{equation}\label{eq:proj_residual_controls_progress_final}
\begin{aligned}
\mathcal L_t(\theta)-\mathcal L_t(\theta+\Delta\theta_t^\star)
&\ge \frac{1}{2L}\|\Pi_{\tD}g_t\|^2 \\
&= \frac{1}{2L}\Big(\|g_t\|^2-\|(I-\Pi_{\tD})g_t\|^2\Big).
\end{aligned}
\end{equation}
In particular, when the coefficients produced by \(\tY\) make \(g_{\tD}(\theta)\) close to \(\Pi_{\tD}g_t\)
(e.g.\ under soft-label realizability), the one-step regret is small and maximizing expected one-step
progress reduces to minimizing the expected projection residual \(\E_t[\|(I-\Pi_{\tD})g_t\|^2]\).
\end{theorem}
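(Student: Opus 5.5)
The plan is to sandwich the two loss values with the two-sided quadratic bounds that $L$-smoothness provides, using the gradient $g_t$ at the common base point $\theta$, and then take $\E_t$. The second part of \Cref{thm:dd_proj_final}, the closed form of $\Delta\theta_t^\star$ and \eqref{eq:proj_residual_controls_progress_final}, is self-contained, so I would prove it first and use it to rewrite the bound.

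\emph{Step 1 (minimizer and progress).} On $V(\tD)$ write $\Delta\theta=\Pi_{\tD}\Delta\theta$. Then $\langle g_t,\Delta\theta\rangle=\langle \Pi_{\tD}g_t,\Delta\theta\rangle$, so $M_t$ restricted to $V(\tD)$ is a strictly convex isotropic quadratic. Setting its gradient to zero gives $\Delta\theta_t^\star=-\tfrac1L\Pi_{\tD}g_t$, and
\[
M_t(\Delta\theta_t^\star)=-\tfrac{1}{2L}\|\Pi_{\tD}g_t\|^2 .
\]
The smoothness upper bound $\mathcal L_t(\theta+\Delta)\le \mathcal L_t(\theta)+M_t(\Delta)$ then yields \eqref{eq:proj_residual_controls_progress_final}. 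The equality with $\|g_t\|^2-\|(I-\Pi_{\tD})g_t\|^2$ is Pythagoras for the orthogonal projector $\Pi_{\tD}$. I will also record the identities
\[
\langle g_t,\Delta\theta_t^\star\rangle=-L\|\Delta\theta_t^\star\|^2,\qquad \|\Delta\theta_t^\star\|^2=\|\Pi_{\tD}g_t\|^2/L^2,
\]
which are used in Step 3.

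\emph{Step 2 (upper bound on the realized step).} Apply the upper smoothness bound to $\Delta\theta_{\tD}=-\eta g_{\tD}(\theta)$:
\[
\mathcal L_t(\theta+\Delta\theta_{\tD})\le\mathcal L_t(\theta)-\eta\langle g_t,g_{\tD}(\theta)\rangle+\tfrac{L\eta^2}{2}\|g_{\tD}(\theta)\|^2 .
\]

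\emph{Step 3 (lower bound on the comparator step).} Apply the lower smoothness bound $\mathcal L_t(\theta+\Delta)\ge\mathcal L_t(\theta)+\langle g_t,\Delta\rangle-\tfrac L2\|\Delta\|^2$ at $\Delta=\Delta\theta_t^\star$. Subtracting this from Step 2 and taking $\E_t$, with $\tD$ fixed so that $g_{\tD}(\theta)$ is deterministic, gives
\[
\E_t\big[\mathcal L_t(\theta+\Delta\theta_{\tD})-\mathcal L_t(\theta+\Delta\theta_t^\star)\big]\le
\E_t\Big[-\eta\langle g_t,g_{\tD}\rangle-\langle g_t,\Delta\theta_t^\star\rangle+\tfrac{L\eta^2}{2}\|g_{\tD}\|^2+\tfrac L2\|\Delta\theta_t^\star\|^2\Big].
\]
The remaining work is to reconcile this with the displayed form \eqref{eq:headline_regret_bound} term by term.

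\emph{Main obstacle.} Read literally, the right-hand side of \eqref{eq:headline_regret_bound} equals $M_t(\Delta\theta_{\tD})+\langle g_t,\Delta\theta_t^\star\rangle-\tfrac L2\|\Delta\theta_t^\star\|^2$. By the Step 1 identities this is $M_t(\Delta\theta_{\tD})-\tfrac{3L}{2}\|\Delta\theta_t^\star\|^2$. My bound from Step 3 is instead $M_t(\Delta\theta_{\tD})+\tfrac{3L}{2}\|\Delta\theta_t^\star\|^2$, so the two differ in the sign of the $\Delta\theta_t^\star$ terms.

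The obstruction is genuine and not an artefact of loose bounds. Take the sanity case where $\tY$ realizes the best step exactly, $\Delta\theta_{\tD}=\Delta\theta_t^\star$ with $\Pi_{\tD}g_t\neq0$. The left-hand side is then $0$, while the displayed right-hand side is strictly negative. An exactly quadratic $\mathcal L_t$ with Hessian $L I$ gives a concrete instance, for which the regret is exactly $M_t(\Delta\theta_{\tD})-M_t(\Delta\theta_t^\star)$.

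So the argument will establish the bound in the sign convention that the derivation forces, namely the Step 3 inequality. Equivalently, it bounds the model regret $M_t(\Delta\theta_{\tD})-M_t(\Delta\theta_t^\star)$ plus a curvature slack of at most $2L\,\E_t\|\Delta\theta_t^\star\|^2$, which vanishes in the exactly quadratic case. I would flag that the displayed \eqref{eq:headline_regret_bound} holds only after flipping the signs of the two $\Delta\theta_t^\star$ terms. This corrected form still supports the concluding remark: when $g_{\tD}(\theta)\approx\tfrac{1}{L\eta}\Pi_{\tD}g_t$, the model regret vanishes, and progress is governed by $\E_t\|(I-\Pi_{\tD})g_t\|^2$ through Step 1.
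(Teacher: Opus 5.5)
Your analysis is correct, and on the first claim it is more careful than the paper. For the closed form of \(\Delta\theta_t^\star\) and for \eqref{eq:proj_residual_controls_progress_final} you argue as the paper does: the optimality condition on \(V(\tD)\), the upper model evaluated at \(\Delta\theta_t^\star\), and Pythagoras. For \eqref{eq:headline_regret_bound}, the paper applies the upper smoothness inequality only to the realized step, getting \(\mathcal L_t(\theta^+(\tD))\le\mathcal L_t(\theta)-\eta\langle g_t,g_{\tD}(\theta)\rangle+\frac{L\eta^2}{2}\|g_{\tD}(\theta)\|^2\). It then asserts that taking expectations gives the regret bound, but the comparator \(\mathcal L_t(\theta+\Delta\theta_t^\star)\) is never bounded from below. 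Your Step 3 supplies that missing lower bound. Your sanity check shows the display cannot be rescued: take \(\mathcal T\) a point mass (needed, since \(\Delta\theta_{\tD}\) does not depend on \(t\)), \(\Pi_{\tD}g_t\neq0\), and \(g_{\tD}(\theta)=\frac{1}{\eta L}\Pi_{\tD}g_t\in V(\tD)\). Then the left side is \(0\) and the right side is \(-\frac{2}{L}\|\Pi_{\tD}g_t\|^2\). Flipping both \(\Delta\theta_t^\star\) signs, as you do, is the right repair. Flipping only the linear one would turn the right side into the pure model regret \(M_t(\Delta\theta_{\tD})-M_t(\Delta\theta_t^\star)\), which is what the statement's `coefficient penalty plus quadratic penalty' gloss suggests. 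That still fails: for the linear loss \(\mathcal L_t(\theta')=\langle g_t,\theta'-\theta\rangle\) and \(g_{\tD}(\theta)=0\), the regret is \(\frac1L\|\Pi_{\tD}g_t\|^2\), twice the model regret.

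Two refinements. First, the curvature slack is exactly \(L\,\E_t\|\Delta\theta_t^\star\|^2\), not merely at most \(2L\,\E_t\|\Delta\theta_t^\star\|^2\). Since \(M_t(\Delta\theta_t^\star)=-\frac L2\|\Delta\theta_t^\star\|^2\), your Step 3 right side equals \(\E_t[M_t(\Delta\theta_{\tD})-M_t(\Delta\theta_t^\star)]+L\,\E_t\|\Delta\theta_t^\star\|^2\). What vanishes for a quadratic with Hessian \(LI\) is the gap between true and model regret, not this slack. Second, and as a consequence, your last sentence overreaches. At \(\Delta\theta_{\tD}=\Delta\theta_t^\star\) your bound still equals \(\frac1L\E_t\|\Pi_{\tD}g_t\|^2\), so it does not show that the regret is small. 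To support that remark, expand around the comparator instead of around \(\theta\): \(\mathcal L_t(\theta+\Delta\theta_{\tD})-\mathcal L_t(\theta+\Delta\theta_t^\star)\le\langle\nabla\mathcal L_t(\theta+\Delta\theta_t^\star),\Delta\theta_{\tD}-\Delta\theta_t^\star\rangle+\frac L2\|\Delta\theta_{\tD}-\Delta\theta_t^\star\|^2\). This tends to \(0\) as \(g_{\tD}(\theta)\to\frac{1}{\eta L}\Pi_{\tD}g_t\).
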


\begin{proof}
See \Cref{appsec:thm-proof}.
\end{proof}

\begin{remark}[Coefficient realizability.]
Equation~\eqref{eqn:chain_rule_gtilde} shows \(g_{\tD}(\theta)\) is always in \(V(\tD)\), but it need not equal the \emph{best} restricted update \(\Pi_{\tD} g_t\). 
Intuitively, \(\tX\) chooses the subspace (i.e.\ which tangent features are available), and \(\tY\) chooses coefficients within that subspace through through \(\tdelta(\theta)\). 
When \(\tY\) is sufficiently expressive (e.g.\ soft labels / locally linearized losses), DD can closely approximate \(\Pi_{\tD}g_t\).
\end{remark}

\subsubsection{Competing objectives imply a PCA subspace of gradient covariance.}
\begin{corollary}[Competing objectives \(\Rightarrow\) PCA subspace of gradient covariance]\label{cor:pca_final}
Let \(G:=\E_t[g_t g_t^\top]\) with eigenvalues \(\lambda_1\ge\cdots\ge\lambda_P\).
Among all \(r\)-dimensional subspaces \(V\) (a relaxation of realizable \(V(\tD)\); note
\(\dim V(\tD)\le\mathrm{rank}(\tPhi)\le m_{\mathrm{tot}}\)), the minimizer of
\[
\E_t[\|(I-\Pi_V)g_t\|^2]=\mathrm{tr}(G)-\mathrm{tr}(\Pi_V G)
\]
is the top-\(r\) eigenspace of \(G\). Moreover, if \(\mathrm{tr}(\Pi_{V^\star}G)-\mathrm{tr}(\Pi_VG)\le\delta\), then
\[
\E_t[\|(I-\Pi_V)g_t\|^2]\le \sum_{j>r}\lambda_j+\delta.
\]
\end{corollary}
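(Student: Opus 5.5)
The argument has three steps: an exact trace identity for the expected residual, a Ky Fan–type maximization of \(\mathrm{tr}(\Pi_V G)\), and a one-line rearrangement for the approximate bound.

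\emph{Step 1: the trace identity.} Fix an \(r\)-dimensional subspace \(V\) with orthogonal projector \(\Pi_V\). Since \(I-\Pi_V\) is symmetric and idempotent,
\[
\|(I-\Pi_V)g_t\|^2=g_t^\top(I-\Pi_V)g_t=\mathrm{tr}\big((I-\Pi_V)g_tg_t^\top\big).
\]
Taking expectations and using linearity of trace and expectation gives \(\E_t[\|(I-\Pi_V)g_t\|^2]=\mathrm{tr}(G)-\mathrm{tr}(\Pi_V G)\). This needs \(\E_t\|g_t\|^2<\infty\), so that \(G\) is well defined; I assume this implicitly. Minimizing the residual is therefore equivalent to maximizing \(\mathrm{tr}(\Pi_V G)\) over rank-\(r\) orthogonal projectors.

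\emph{Step 2: the maximization (Ky Fan).} This is the only substantive step. Write \(\Pi_V=UU^\top\) with \(U\in\R^{P\times r}\) having orthonormal columns, and let \(G=\sum_j\lambda_j u_ju_j^\top\) be the eigendecomposition, where \(G\succeq 0\). Then
\[
\mathrm{tr}(\Pi_V G)=\sum_j\lambda_j w_j,\qquad w_j:=\|U^\top u_j\|^2=u_j^\top\Pi_V u_j .
\]
The weights satisfy \(0\le w_j\le 1\), because \(\Pi_V\) is a projector, and \(\sum_j w_j=\mathrm{tr}(\Pi_V)=r\). Maximizing the linear functional \(\sum_j\lambda_j w_j\) over this polytope is attained by putting \(w_j=1\) for \(j\le r\) and \(w_j=0\) otherwise. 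A short exchange argument shows this: moving mass from a larger index to a smaller one never decreases the sum, since the \(\lambda_j\) are ordered. The maximum value is \(\sum_{j\le r}\lambda_j\), and it is achieved by \(V^\star=\mathrm{span}(u_1,\dots,u_r)\). If \(\lambda_r=\lambda_{r+1}\) the minimizer is not unique, so the claim should be read as ``a top-\(r\) eigenspace is a minimizer.''

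\emph{Step 3: the approximate bound.} By Steps 1 and 2,
\[
\E_t\|(I-\Pi_{V^\star})g_t\|^2=\mathrm{tr}(G)-\sum_{j\le r}\lambda_j=\sum_{j>r}\lambda_j .
\]
For a general \(V\), Step 1 gives
\[
\E_t\|(I-\Pi_V)g_t\|^2=\sum_{j>r}\lambda_j+\big(\mathrm{tr}(\Pi_{V^\star}G)-\mathrm{tr}(\Pi_VG)\big)\le\sum_{j>r}\lambda_j+\delta
\]
under the stated hypothesis.

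The parenthetical about realizability, \(\dim V(\tD)\le\mathrm{rank}(\tPhi)\le m_{\mathrm{tot}}\), is not part of what must be proved. It only records that \(V(\tD)\) from \eqref{eqn:VtD_def} lies in the relaxed class whenever \(r\ge\dim V(\tD)\); if needed, a lower-dimensional \(V\) can be padded to dimension \(r\), which can only decrease the residual.
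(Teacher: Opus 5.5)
Your proof is correct and follows the paper's argument step for step. Both use the trace identity $\E_t\|(I-\Pi_V)g_t\|^2=\mathrm{tr}(G)-\mathrm{tr}(\Pi_VG)$, then the Ky Fan maximum principle for $\mathrm{tr}(\Pi_VG)$, then adding and subtracting $\mathrm{tr}(\Pi_{V^\star}G)$ for the $\delta$ bound; the only difference is that you prove the Ky Fan step directly via the weights $w_j=u_j^\top\Pi_Vu_j\in[0,1]$ with $\sum_j w_j=r$, where the paper simply cites it.
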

\begin{proof}
See \Cref{appsec:cor-proof}.
\end{proof}

\subsubsection{Inducing-point view and kernel fidelity.}
The tangent subspace viewpoint also explains why distilled inputs behave like \emph{inducing points} in kernel space. 
Fix a scalar output (e.g.\ a logit \(c\)) and let \(\Phi:=\Phi_X^c\in\R^{n\times P}\) and \(\tPhi^c:=\Phi_{\tX}^c\in\R^{m\times P}\) denote the corresponding gradient feature matrices at \(\theta\).

Given any parameter-space projector \(\Pi\) (e.g.\ \(\Pi=\Pi_{\tD}\)), define the projected-feature class kernel
\begin{equation}\label{eqn:projected_kernel_def}
K^{c,\Pi}_{XX} := (\Phi\Pi)(\Phi\Pi)^\top = \Phi\Pi\Phi^\top.
\end{equation}
If we choose \(\Pi\) as the projector onto the span of distilled class features,
\begin{equation}\label{eqn:Pi_from_tPhi}
\Pi_{\tD}^c := (\tPhi^c)^\top \big(\tPhi^c(\tPhi^c)^\top\big)^\dagger \tPhi^c,
\end{equation}
then \eqref{eqn:projected_kernel_def} becomes the Nystr\"om / inducing-point form
\begin{equation}\label{eqn:nystrom_identity}
\begin{split}
K^{c,\Pi_{\tD}^c}_{XX}
= K^c_{X\tX}\,\big(K^c_{\tX\tX}\big)^\dagger\,K^c_{\tX X}, \\
K^c_{X\tX}:=\Phi(\tPhi^c)^\top, \qquad
\;K^c_{\tX\tX}:=\tPhi^c(\tPhi^c)^\top.
\end{split}
\end{equation}
Thus, selecting \(\tX\) selects an inducing set in the tangent-feature kernel.

Finally, kernel fidelity is controlled by how well \(\Pi\) preserves gradient features:
\begin{equation}\label{eqn:kernel_error_bound}
\|K^c_{XX}-K^{c,\Pi}_{XX}\|_F
=
\|\Phi(I-\Pi)\Phi^\top\|_F
\;\le\;
\|\Phi\|_F\,\|\Phi(I-\Pi)\|_F.
\end{equation}
In particular, misalignment of \(\Pi\) with the dominant right-singular subspace of \(\Phi\) directly translates into kernel approximation error.

\begin{proposition}[Energy-gap decomposition in gradient-feature space]\label{prop:constrained_pca_decomp}
Let \(\Phi=U\Sigma W^\top\) and let \(\Pi^\star:=W_r W_r^\top\) be the rank-\(r\) PCA projector (top \(r\) right singular vectors). 
Then for any rank-\(r\) projector \(\Pi\) (including realizable choices such as
\(\Pi=\Pi_{\tD}^c\)),
\begin{equation}\label{eqn:gap_decomp}
\|\Phi(I-\Pi)\|_F^2
= \underbrace{\sum_{j>r}\sigma_j(\Phi)^2}_{\text{PCA tail}}
+ \underbrace{\Big(\mathrm{tr}(\Phi^\top\Phi\,\Pi^\star)-\mathrm{tr}(\Phi^\top\Phi\,\Pi)\Big)}_{\text{captured-energy gap (misalignment)}}.
\end{equation}
The second term is nonnegative and vanishes if and only if \(\Pi\) captures as much feature energy as the PCA subspace, providing a direct quantitative measure of subspace misalignment relevant for kernel fidelity via \eqref{eqn:kernel_error_bound}.
\end{proposition}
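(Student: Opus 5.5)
The plan is to rewrite the residual energy as a trace and then split it into the PCA tail plus a gap term. The nonnegativity of the gap will come from a Ky Fan--type bound.

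\textbf{Step 1 (trace form).} Since \(\Pi\) is an orthogonal projector, \(I-\Pi\) is symmetric and idempotent. Hence
\[
\|\Phi(I-\Pi)\|_F^2=\mathrm{tr}\big((I-\Pi)\Phi^\top\Phi(I-\Pi)\big)=\mathrm{tr}(\Phi^\top\Phi)-\mathrm{tr}(\Phi^\top\Phi\,\Pi),
\]
using cyclicity of the trace and \((I-\Pi)^2=I-\Pi\). This is the same identity that underlies \Cref{cor:pca_final}, now with \(G\) replaced by \(A:=\Phi^\top\Phi\).

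\textbf{Step 2 (add and subtract).} Write
\[
\mathrm{tr}(A)-\mathrm{tr}(A\Pi)=\big(\mathrm{tr}(A)-\mathrm{tr}(A\Pi^\star)\big)+\big(\mathrm{tr}(A\Pi^\star)-\mathrm{tr}(A\Pi)\big).
\]
From \(\Phi=U\Sigma W^\top\) we get \(A=W\Sigma^\top\Sigma W^\top\), so \(\mathrm{tr}(A\Pi^\star)=\mathrm{tr}(W_r^\top A W_r)=\sum_{j\le r}\sigma_j^2\). Since \(\mathrm{tr}(A)=\sum_j\sigma_j^2\), the first bracket equals the PCA tail \(\sum_{j>r}\sigma_j(\Phi)^2\). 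This gives \eqref{eqn:gap_decomp} exactly.

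\textbf{Step 3 (nonnegativity of the gap).} This is the only substantive step. Write \(\Pi=QQ^\top\) with \(Q\in\R^{P\times r}\) orthonormal, so \(\mathrm{tr}(A\Pi)=\mathrm{tr}(Q^\top AQ)\). By Ky Fan's maximum principle,
\[
\mathrm{tr}(Q^\top AQ)\le\sum_{j\le r}\lambda_j(A)=\sum_{j\le r}\sigma_j^2=\mathrm{tr}(A\Pi^\star).
\]
If a self-contained argument is preferred, I would instead expand \(\mathrm{tr}(A\Pi)=\sum_j\sigma_j^2 w_j^\top\Pi w_j\). The weights \(p_j:=\|\Pi w_j\|^2\) satisfy \(0\le p_j\le 1\) and \(\sum_j p_j=\mathrm{tr}(\Pi)=r\), where the sum runs over a full orthonormal basis \(\{w_j\}\) of \(\R^P\). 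Maximizing a sum of nonincreasing weights \(\sigma_j^2\) under these constraints is a linear program, and its optimum puts \(p_j=1\) on the top \(r\) indices. Its value is therefore \(\sum_{j\le r}\sigma_j^2\).

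\textbf{Step 4 (equality case).} The gap vanishes iff \(\mathrm{tr}(A\Pi)=\mathrm{tr}(A\Pi^\star)\), which is literally the statement that \(\Pi\) captures as much feature energy as the PCA subspace. The claimed characterization is therefore tautological given Step 2. As a remark, I would note the finer consequence: when \(\sigma_r>\sigma_{r+1}\), equality forces \(\Pi=\Pi^\star\). The reason is that, in the LP of Step 3, equality forces \(p_j=1\) for \(j\le r\) and \(p_j=0\) for \(j>r\).

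The main obstacle is only Step 3, the Ky Fan bound. With the LP argument above it is routine, but care is needed to use a full orthonormal basis \(W\) of \(\R^P\) (padding with zero singular values when \(P>n\)), so that \(\sum_j p_j=r\) holds exactly.
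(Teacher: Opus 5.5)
Your proposal is correct and follows the paper's proof essentially step for step: the trace identity \(\|\Phi(I-\Pi)\|_F^2=\mathrm{tr}(A)-\mathrm{tr}(A\Pi)\) from symmetry and idempotence of \(\Pi\), adding and subtracting \(\mathrm{tr}(A\Pi^\star)=\sum_{j\le r}\sigma_j(\Phi)^2\), and Ky Fan for nonnegativity of the gap. Your linear-programming argument is a self-contained proof of the Ky Fan step that the paper only cites, and your remark that the gap vanishes only for \(\Pi=\Pi^\star\) when \(\sigma_r>\sigma_{r+1}\) is a correct sharpening of the otherwise tautological vanishing condition.
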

\begin{proof}
See \Cref{appsec:prop-proof}.
\end{proof}

\noindent\textbf{Interpretation.}
\Cref{thm:dd_proj_final} shows that one-step outer progress is governed by how much of \(g_t\) lies in \(V(\tD)\); \Cref{cor:pca_final} identifies the relaxed optimal subspace as a PCA subspace of gradient covariance; and \Cref{prop:constrained_pca_decomp} links realizable subspaces induced by distilled inputs to explicit feature- and kernel-space approximation error. Full proofs and the link from task gradients \(g_t\) to \(\Phi\) under linearized/squared-loss models appear in \Cref{appsec:pca_proof}.

\subsection{Spectral structure of the kernel}\label{sec:spectral}
For class \(c\), let \(K^c = U^c \Sigma^c (U^c)^\top \in \R^{n \times n}\) be the class kernel with truncation rank \(r_g\) for some small \(\e\) (as in \Cref{def:data-redundancy}).
Using kernel clustering (spectral clustering with the adjacency matrix given by \(K^c\)), partition the \(n\) samples into \(H\) clusters \(\{h_1, \ldots, h_H\}\) with index sets \(\cI_1, \ldots, \cI_H\). For cluster \(h_i\), define the \textbf{local class kernel} as the restriction of the (global) class kernel to local indices: \(K^c_i = K^c|_{\mathcal{I}_i} = U^c_i \Sigma^c_i (U^c_i)^\top\) with truncation rank \(r_i\).

Let \(A^{{(r)}}\) denote the first \(r\) columns of the matrix \(A\). For now, we introduce the following properties and assume that class-structured data follows them:
\begin{enumerate}
    \item[\textbf{(A)}] Local variance is almost entirely contained within the global eigenspace. For each cluster \(h_i\), 
    \begin{equation}
        \frac{ \sum_{j=1}^{r_i} [\Sigma_i]_{jj} \cdot \| \Pi_i^{\text{glob}}( \vu_i^{j}) \|^2}{ \sum_{j=1}^{r_i} [\Sigma_i]_{jj} } \approx 1, \quad \text{where}
    \end{equation}
    \begin{itemize}
        \item \([\Sigma_i]_{jj}\) is the \(j^{\text{th}}\) eigenvalue local to cluster \(h_i\),
        \item \(\vu_i^{j} \in \R^{|\cI_i|}\) is the \(j^\text{th}\) eigenvector local to cluster \(h_i\), and
        \item \(\Pi_i^{\text{glob}}\) is the orthogonal projection onto \(\text{span}(U^{(r_g)}|_{\cI_i})\) (after orthonormalization).
    \end{itemize} 
    This property implies that \(\text{span}(U_i) \subseteq \text{span}(U |_{\cI_i})\) approximately for a given cluster \(h_i\).
    \item[\textbf{(B)}] Local eigenspaces collectively do \textbf{not} span global eigenspace, whereby there exist some global principal directions that are poorly represented by the union of the spans of global clusters. 
    Formally, let \(\hat{U}_i \in \R^{n \times r_i}\) be the zero-padded lifting of \(U_i^{(r_i)}\), and let 
    \(\Pi^{\text{loc}}\) denote the orthogonal projector onto \(\text{span}([\hat{U_1}, \ldots, \hat{U}_H]).\)
    Then there exist \(j \leq r_g\) such that, for \(\delta > 0\),
    \[\| \Pi^{\text{loc}} (\vu^j)\|^2 \leq 1 - \delta.\]
    Equivalently, \(\text{span}(U^{(r_g)}) \not \subseteq \text{span}([\hat{U_1}, \ldots, \hat{U}_H]) \), where \(\hat{U}_i\) are the lifted (zero-padded) local eigenvectors.
\end{enumerate}
Qualitatively, properties \textbf{(A)} and \textbf{(B)} describe a hierarchical redundancy structure: within-cluster variance is concentrated (enabling local compression via data redundancy, \Cref{def:data-redundancy}), but cross-cluster relationships span a complementary subspace (requiring global eigenmodes to maintain kernel fidelity). These assumptions are supported by empirical findings in \Cref{fig:local_global_composition} and form the theoretical basis for \Cref{alg:local_global_comp}, which explicitly constructs gradient representatives preserving both spectral regimes.

\section{Method}\label{sec:method}

The DNTK method addresses the complexity profile in \Cref{sec:krr-complexity} through three successive reductions. Starting from the original training set \(X\) of size \(n\), we first apply dataset distillation in input space to obtain a distilled set \((\tX, \tY)\) of size \(m \ll n\). For each class \(c\), we then form a projected gradient matrix \(\tPhi_X^c \in \R^{m \times k}\) by applying a distance-preserving random projection \(g : \R^P \to \R^k\) to the per-sample parameter gradients, thereby reducing the effective parameter dimension from \(P\) to \(k\).
Finally, we perform a second round of distillation in gradient space, replacing \((\tPhi, \tY)\) with a smaller synthetic set \((\hat{\Phi}, \hat{Y})\) of size \(s \ll m\), which is used in the final kernel ridge regression solve.

\subsection{Data distillation}\label{sec:method_dd}

We instantiate the distillation framework of \Cref{sec:dd_as_pca} using \textbf{Wasserstein Metric Dataset Distillation (WMDD)} \cite{liu_wmdd_2025}.
WMDD solves a surrogate objective based on feature matching: it synthesizes data whose intermediate representations are distributionally close to the original data in both input and feature space.
Concretely, WMDD minimizes
\[
\mathcal{L}(\tilde{X}) = \mathcal{L}_{\text{feature}} + \lambda_{\text{BN}} \mathcal{L}_{\text{BN}},
\]
where \(\cL_{\text{feature}}\) matches synthetic features to Wasserstein barycenters of real features, and \(\cL_{\text{BN}}\) aligns batch normalization statistics across layers. The WMDD process generates a soft label for each distilled data point for a set \(\tY\) of distilled labels. 
This objective is aligned with our subspace view in \Cref{sec:dd_as_pca}: by matching intermediate representations (and BN statistics), WMDD tends to produce distilled points whose gradient span is better aligned with the dominant directions of the full gradient matrix, reducing the misalignment term in \Cref{thm:dd_proj_final}. 
Implementation details appear in \Cref{appsec:wmdd}.

\subsection{Random projection}

The Johnson-Lindenstrauss (JL) lemma famously states that, for a desired error bound \(\e_\JL\) and an integer \(k > (8 \ln n) / \e_\JL^2\), there exists a linear map \(g: \R^P \to \R^k\) such that 
\[(1-\e_\JL)\|\vu-\vv\|^2 \leq \|g(\vu) - g(\vv)\|^2 \leq (1+\e_\JL)\|\vu-\vv\|^2\] 
for any \(\vu, \vv \in \R^P\) \cite{dasgupta-jl-2003}.

This is commonly achieved using a random orthonormal projection from \(\R^P\) to \(\R^k\), where \(k \ll P\). Because \(\langle u, v \rangle = \frac{1}{2}(\|u\|^2 + \|v\|^2 - \|u-v\|^2)\), JL distance preservation over a finite set also yields approximate preservation of the inner products that define the NTK entries. 

For this projection, we generate a random orthonormal matrix \(Q \in \R^{P \times k}\), then set \(g(u)=\sqrt{P/k}\;Q^\top u\), where the \(\sqrt{P/k}\) factor offsets the scaling by \(Q\).
This yields our approximated features \(\tPhi_X^c = g(\Phi_X^c)\) for our dataset \(X\) and corresponding approximate kernel \(\tK^c = \tPhi_X^c (\tPhi_X^c)^\top \in \R^{m \times m}\).

\begin{remark}
In practice, JL gives a high-probability bound on \(\|\tK^c - K^c\|\) (and thus on spectral quantities like effective rank), so the redundancy estimates computed from \(\tK^c\) track those of \(K^c\) up to an error defined by \(\e_\JL\) and the kernel's eigenspectrum. 
See \Cref{sec:jl-redundancy} for details.
\end{remark}

\subsection{Gradient distillation}\label{sec:method_gd}
Motivated by the empirical local-global kernel structure analyzed in \Cref{sec:local-global}, we introduce \textbf{local-global gradient distillation} (\Cref{alg:local_global_comp}), which outputs synthetic projected gradients \(\hat{Phi}\) and synthetic targets \(\hat{Y}\) as linear combinations of the original projected gradients. This distillation allows a set of gradients to have some representatives covering the kernel clusters, which compose a majority of the global variance, and some covering the connective gaps shown in  \Cref{fig:local_global_composition}. 

\begin{remark}\label{rem:syn-grads}
Let \(\vu\) be unit norm. If \(\vu\) is an eigenvector of \(K = \frac{1}{k} \Phi \Phi^\top\) with an eigenvalue \(\lambda\), then 
\[\hat{\phi} = \Phi^\top \vu = \sum_{i=1}^n u_i \phi_i\]
satisfies \(\|\hat{\phi}\|^2 = k \lambda\). That is, \(\hat{\phi}\) generates the principal direction \(\vu\) in kernel space. We use this to compute our distilled gradients directly.
\end{remark}

\Cref{alg:local_global_comp} proceeds in six stages:
\begin{enumerate}
    \item[\textbf{(1)}] Compute the average class kernel \(\bar{K} = \frac{1}{C} \sum_c K^c\), and partition \(H\) cluster indices \(\{\cI_h\}_{h=1}^H\) by spectral clustering.
    
    \item[\textbf{(2)}] Record the global eigendecomposition \(U \Sigma U^\top\) of \((\bar{K})\), and set \(r_g := r_{\text{trunc}} (\bar{K}, 1 - \tau_v)\), where \(\tau_v\) is a user-defined threshold.

    \item[\textbf{(3)}] Compute each global rank for which that rank's eigenvectors are not aligned with \textit{any} local cluster above the threshold \(\tau_g\),\footnote{\(\tau_g\) weighs between intra- and inter-cluster focus: the larger the \(\tau_g\), the more gradients are classified as gaps.} another user-defined threshold. ``Alignment'' is quantified by the fraction of restricted energy captured by the local top-eigenspace: \(c_j := \max_h \| P_h (u_j |_{\cI_h})\|^2 / \|u_j|_{\cI_h}\|^2\), and define gap directions \(\cG = \{j: c_j < \tau_g\}\).

    \item[\textbf{(4)}] Distill representatives of the \textbf{clusters} by setting \(\hat{\phi} := \Phi[\cI_h]^\top \vu \in \R^{k \times C}\), i.e., a linear combination of the projected gradients in that cluster (see \Cref{rem:syn-grads}). Similarly, set \(\hat{Y} := Y[\cI_h]^\top \vu \in \R^C\), a soft target.

    \item[\textbf{(5)}] Distill representatives of the \textbf{gaps}  by setting \(\hat{\phi}\) again according to Remark \ref{rem:syn-grads} for the global gradients \(\Phi\) and the eigenvectors indexed by \(\cG\).

    \item[\textbf{(6)}] Orthogonalize the accumulated set of eigenvectors and keep only the \(s\) gradients with non-redundant indices. 
\end{enumerate}
This process results in \(s \ll m\) distilled gradients that collectively span the kernel's subspace better than existing optimal sampling methods. The full algorithm details can be found in \Cref{appsec:local_global_comp}.

\subsection{Kernel solving}
These three methods, run sequentially, yield tensors \(\hat{\Phi}^c\) of ``thrice-distilled'' gradients for each class \(c\).
We then use \((\hat{\Phi}, \hat{Y})\) in the place of \((\Phi, Y)\) in KRR; see \Cref{appsec:ridge-regression}.
\section{Experiments}\label{sec:experiments}

 We evaluate DNTK approximations, demonstrating that kernel models preserve \textbf{high predictive fidelity} while exhibiting \textbf{substantial data and parameter redundancy}. These findings allow for further data- and parameter-reduction techniques that minimally affect performance.

\subsection{Accuracy, fidelity, and error}

The distilled dataset \(\tX\) is distilled from a fixed dataset (ImageNette) and computed on a fixed model architecture (ResNet-18). We then evaluate kernel representations on this same distilled set using two models: one pretrained on real data, and one trained solely on the distilled data. In both cases, the kernel is computed from gradients evaluated at the distilled inputs.

In \Cref{fig:size-acc-fid-mse}, accuracy and fidelity both quickly saturate to the level of the original model with relatively few training points. The rapid saturation in both regimes suggests the kernel matrices \(\tK^c_{\tX \tX}\) exhibit low-rank structure. We quantify this through spectral analysis of the condition number and minimum eigenvalue.

\begin{figure}[!h]
    \centering
    \includegraphics[width=0.9\linewidth]{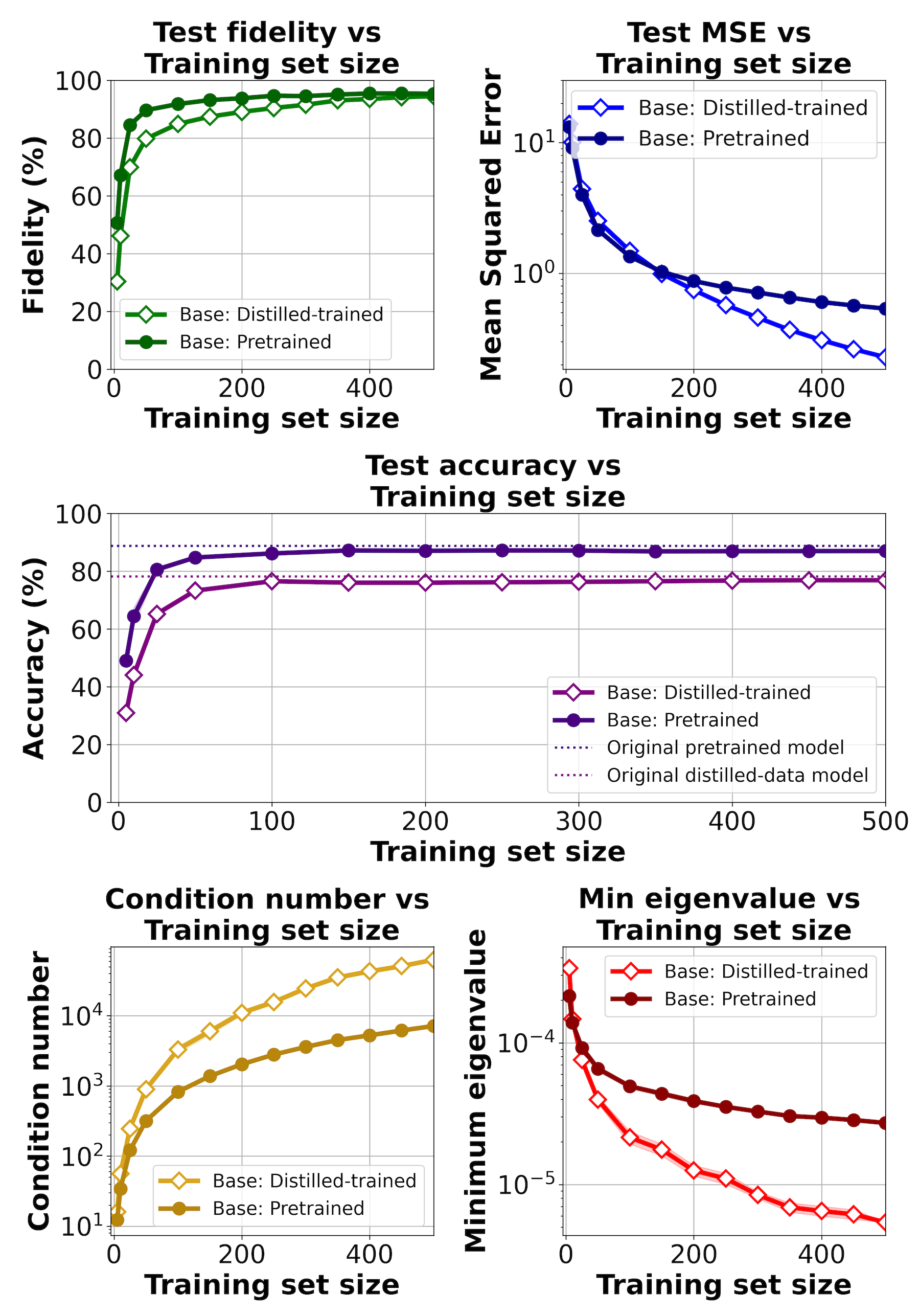}
    \caption{Kernel-model accuracy metrics as a function of sample size, (where samples are are taken evenly across classes from the 500 available distilled gradients). Experiments are run on the ImageNette dataset and ResNet-18 model.
\textbf{Test fidelity:} fraction of matched predictions between \(f_K\) and \(f\).
\textbf{Test MSE:} computed from predicted logit differences. 
\textbf{Test accuracy:} correct predictions on an unseen test set. 
\textbf{Condition number and minimum eigenvalue:} stability of kernel 
matrices \(\tilde{K}^c_{\tilde{X} \tilde{X}}\) averaged across classes.
Across all metrics, we find that a pretrained base model results in lower loss and better-conditioned kernel than a distilled-data base model, although the performance differs by 10\% if only the distilled-data model is available.}
\label{fig:size-acc-fid-mse}
\end{figure}

\subsection{Data redundancy}

The truncation ranks of class kernels are significantly lower than even that exploited by initial data distillation, as shown in \Cref{fig:svd}.  
The exponential decay of singular values implies that the truncated SVD approximation \(\tK^c_{\tX \tX}\) retains most kernel variance, suggesting class kernel can be accurately represented in a low-dimensional subspace.

Equivalently, in kernel ridge regression, the solution \(\valpha\)
minimizing \(\|K_{XX} \; \valpha - Y\|^2 + \lambda_\reg \valpha^\top K_{XX} \valpha\)
is dominated by the leading eigenmodes of \(K_{XX}\).
Specifically, when \(\lambda_i \ll \lambda_\reg = 10^{-4}\) (our regularization parameter), those directions are effectively suppressed.

\begin{figure}[!h]
    \centering
    \includegraphics[width=0.9\linewidth]{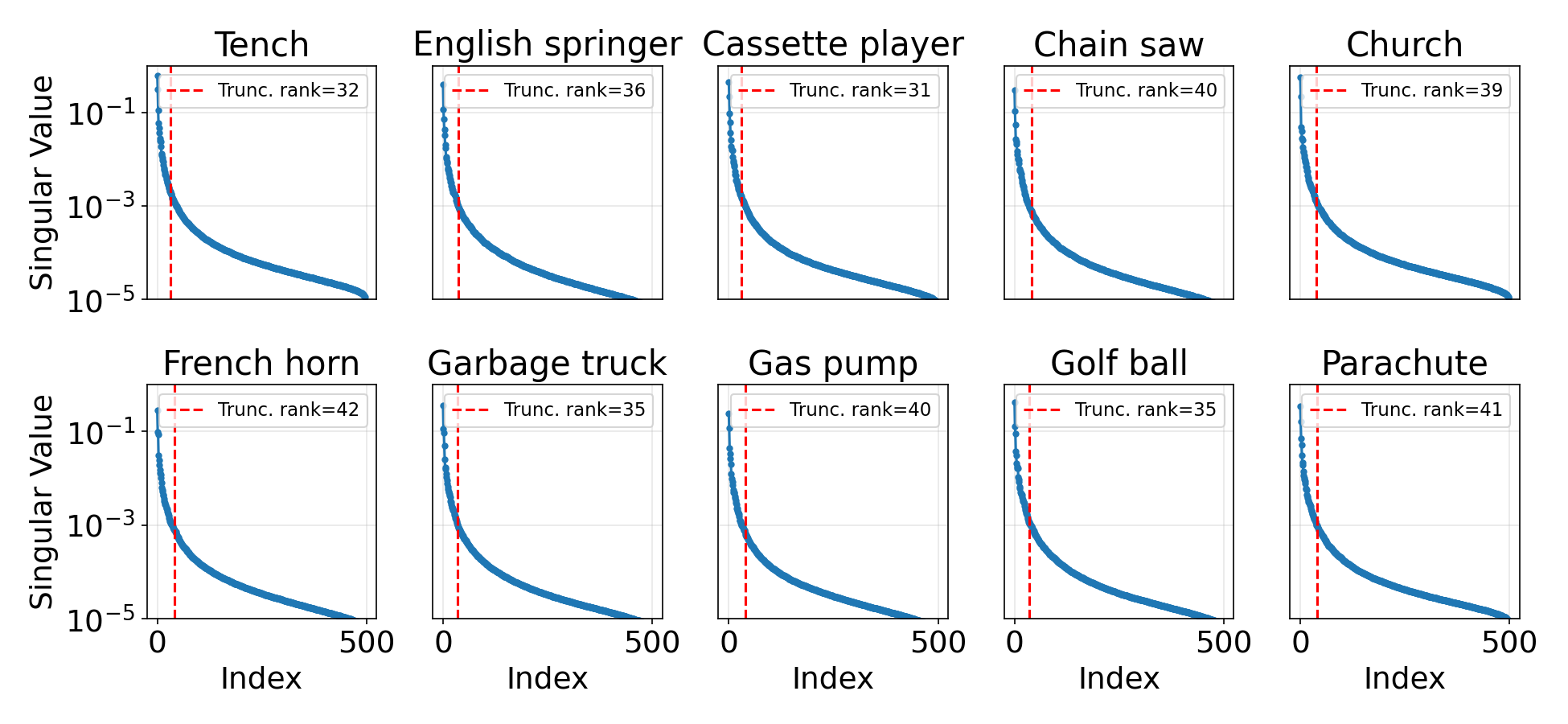}
    \caption{Singular values of class kernels reduce exponentially, with truncation ranks between 31 and 41, denoting \((12, 0.05)\)- to \((16, 0.05)\)-data redundancy, depending on the class.}
    \label{fig:svd}
\end{figure}

Motivated by this observation, we compute rank-\(r\) SVD approximations of the class kernels in \Cref{fig:rank-acc-fid-mse} (\Cref{appsec:other-experiments}) and evaluate the testing accuracy as a function of \(r\).

\subsection{Local-global composition}\label{sec:local-global}
Here we implement \Cref{alg:local_global_comp} as discussed in \Cref{sec:method_gd}
Figure~\ref{fig:size-acc-fid-mse} shows that \Cref{alg:local_global_comp} substantially outperforms gradient sampling baselines (leverage, k-means, random, FPS) across compression ratios. At 100× compression (five distilled gradients), it achieves 76\% accuracy and 78\% fidelity, while baselines plateau well below this ceiling. Moreover, the algorithm matches the full \(f_K\) accuracy with far fewer gradients than any competing method.

\begin{figure}[!ht]
    \centering
    \includegraphics[width=\linewidth]{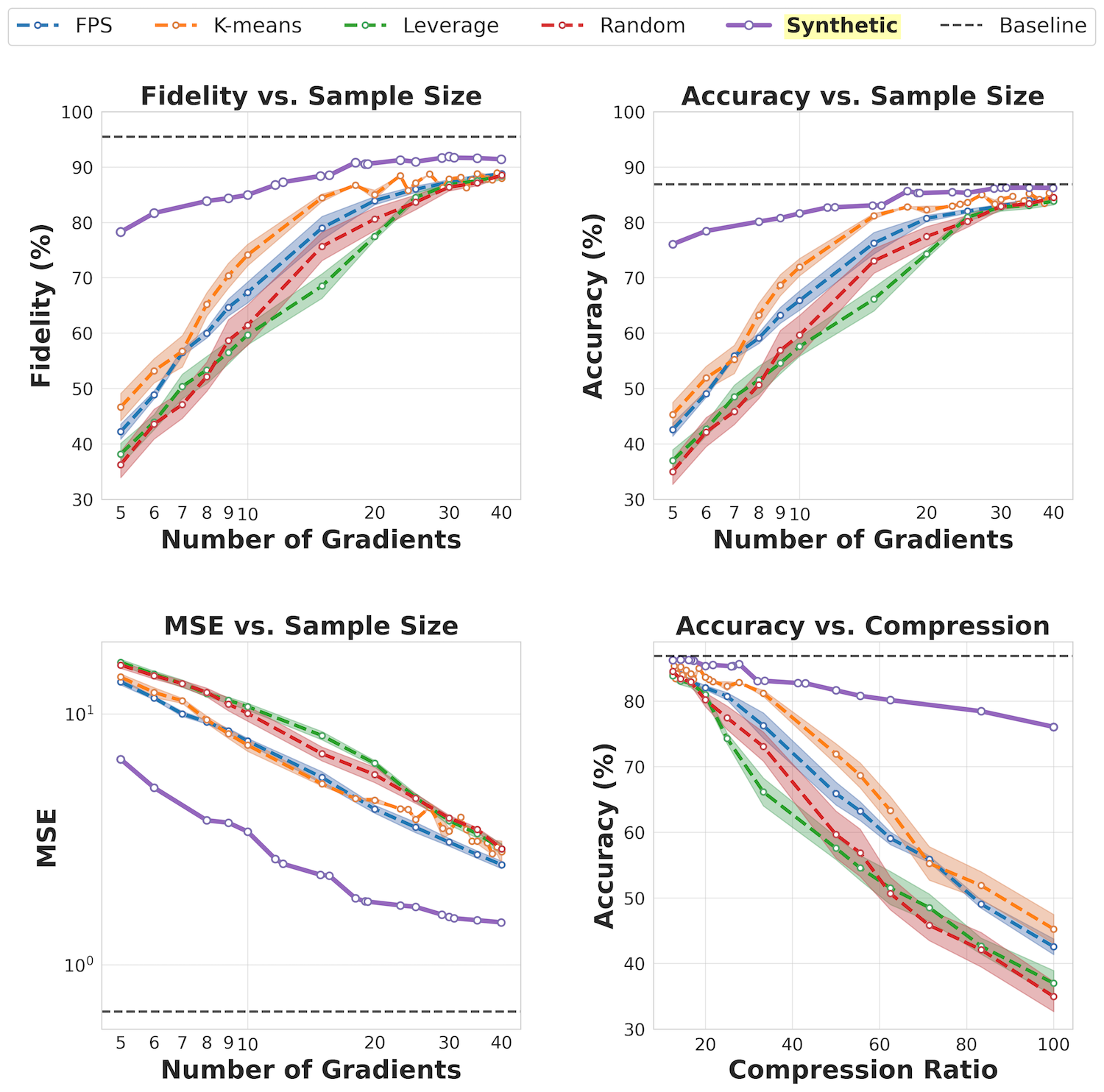}
    \caption{Test metrics (fidelity, accuracy, and MSE) taken from \Cref{fig:size-acc-fid-mse}. Compression ratio (bottom right) is defined as \(m/s\), where \(m\) is the number of original gradients and \(s\) is the number of gradients distilled by \Cref{alg:local_global_comp}.}
    \label{fig:comparison-plot}
\end{figure}

This performance gain stems from the algorithm's ability to capture kernel structure more completely. Figure~\ref{fig:coverage-plot} (\Cref{appsec:other-experiments}) shows that distilled gradients achieve higher subspace variance coverage and lower reconstruction error than baselines, particularly at high compression. By construction, the algorithm synthesizes gradients spanning both intra-cluster concentrated modes (step \textbf{(4)}) and inter-cluster gap modes (step \textbf{(5)}), preserving the kernel's full spectral range.

These results confirm the spectral structure hypothesized in \Cref{sec:spectral}. Figure~\ref{fig:local_global_composition} illustrates properties \textbf{(A)} and \textbf{(B)} empirically: at \(\e=5\%\) truncation, local cluster variance projects almost entirely onto the global eigenspace (property \textbf{(A)}, top panel), yet roughly \(14\%\) of global variance lies outside the union of all local clusters (property \textbf{(B)}, bottom panel). This local-global gap exposes a significant performance gap unfilled by methods relying solely on clustering or leverage scores, which capture only local structure.

\begin{figure}[!h]
    \centering
    \includegraphics[width=\linewidth]{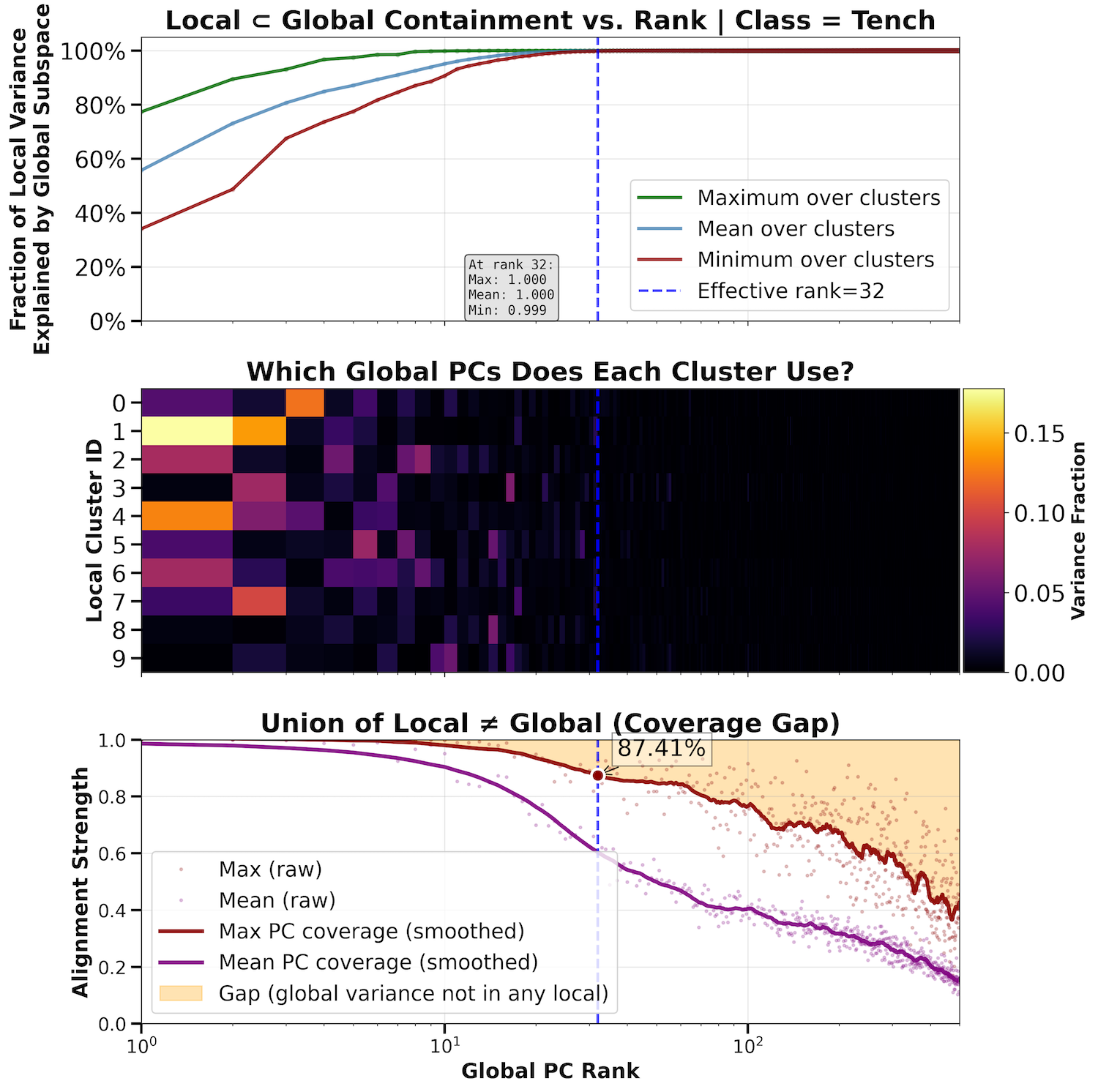}
    \caption{Relationships between the spans of local and global eigenvectors across 10 clusters on the ``tench'' class, whose global truncation rank (at 95\% explained variance) is 32. 
    \textbf{Top}: Local eigenvectors \(\{\vu_i^{j}\}_{j=1}^{r_i}\) project almost entirely onto the subspace spanned by global eigenvectors \(\{\vu^{(r)}\}_{r=1}^{r_g}\) as rank \(r_g\) increases, demonstrating property \textbf{(A)} of \Cref{sec:spectral}. Curves show the fraction of variance-weighted local eigenvectors contained in the first \(r\) global principal components, with maximum, mean, and minimum over clusters approaching 100\% near the truncation rank.
    \textbf{Middle}: Variance decomposition showing which global PCs each cluster uses. Cell \((i,j)\) displays the variance of cluster \(i\)'s kernel along global PC \(j\), computed as \((\vu^{j}|_{\mathcal{I}_i})^\top K_i (\vu^{j}|_{\mathcal{I}_i})\) normalized by \(\text{tr}(K_i)\). Bright regions indicate the global dimensions that explain each cluster's structure.
    \textbf{Bottom}: Coverage gap demonstrating property \textbf{(B)}. For each global PC, the curves show the maximum (dark red) and mean (purple) alignment strength \(\| P_i(\vu^{j}|_{\mathcal{I}_i}) \|^2\) across all clusters. The orange shaded region represents global variance directions that are poorly covered by any local eigenspace, revealing that roughly \(\e=\)12-15\% of global structure is not captured by the union of local clusters at the truncation rank. Analogous patterns across all ten classes appear in \Cref{appsec:contain-gaps}.
    }
    \label{fig:local_global_composition}
\end{figure}

Grid search over \((\tau_v, \tau_g, H)\) (Appendix~\ref{appsec:grid-search}) reveals that Pareto-optimal configurations are dataset-dependent, with performance sensitive to local cluster structure. This indicates a convenient lever that practitioners can use to adapt our method to varying degrees of local/global decomposition to suit particular datasets. 
\section{Conclusion}\label{sec:conclusion}
We demonstrate that neural tangent kernel (NTK) representations for modern neural networks can be computed at practical scale, on datasets with \(O(10^3)\) to \(O(10^4)\) samples, by exploiting redundancy in both the data and parameter dimensions. By combining data-space distillation, Johnson–Lindenstrauss random projection, and gradient-space distillation into a unified compression pipeline, we reduce the computational and storage cost of NTK construction by 4–6 orders of magnitude compared to naïve full NTK computation on image classification benchmarks, and by a further 1–2 orders of magnitude beyond parameter-space random projection alone, while preserving kernel fidelity. Our theoretical analysis justifies each stage of the approximation, and a spectral study shows that per-class NTK matrices have effective ranks far smaller than the dataset size, indicating that gradient features concentrate in low-dimensional subspaces of parameter tangent space. Leveraging this structure, we introduce a gradient synthesis algorithm that achieves up to 100\(\times\) additional compression relative to distilled gradients, attaining 76\% accuracy on ImageNette with only five synthetic gradients and consistently outperforming sampling-based baselines. 

Together, these results extend NTK-based analysis beyond small-scale settings and suggest that, for practical networks, kernel representations can be manipulated efficiently using standard linear-algebraic tools.
\section*{Impact Statement}
This work provides results that enable more efficient computation of the NTK, an important tool for understanding and verifying neural networks. To our knowledge, there are no serious risks associated with the release of these results.
\section*{Acknowledgements}
We'd like to acknowledge Nick Hengartner, Robyn Miller, and Juston Moore for their helpful conversations and support on this project.
\clearpage
\pagebreak

\bibliography{bibliography}
\bibliographystyle{icml2026}

\newpage
\appendix
\onecolumn
The appendices are organized as follows:
\begin{itemize}
    \item \Cref{appsec:ntk-prelims}: we introduce the NTK, including the relevant spaces and relationships, in a more rigorous way than was defined in the main paper.
    \item \Cref{appsec:error_bounds}: we quantify the expected error bounds in approximating a function by its kernel representation, including conditions under which we can expect a good approximation.
    \item \Cref{appsec:pca_proof}: we provide a formal proof of \Cref{thm:dd_proj_final}.
    \item \Cref{appsec:other-experiments}: we reproduce the experiments in \Cref{sec:experiments} on a range of other (dataset, model) pairs.
    \item \Cref{appsec:local_global_comp}: we provide the formal statement of the local-global composition algorithm, including complexity and a grid search analysis on the (ImageNette, ResNet-18) experiment.
\end{itemize}

\section{NTK preliminaries}
\label{appsec:ntk-prelims}
\subsection{Defining the network}
Let \(\Theta\) be a \(P\)-dimensional Riemannian manifold of parameters, and let \(\cX\) and \(\cY\) be data and output spaces, respectively. Furthermore, let \(\cR: \Theta \to (\cX \to \cY)\) be a realization function that maps \(\theta \in \Theta\) to its corresponding network \(f(\cdot \;; \theta): \cX \to \cY\), so that \(\cF := \cR(\Theta)\) is the set of functions from \(\cX \to \cY\) realizable by some \(\theta \in \Theta\). 

\subsubsection{Image classification}
In the case of image classification, \(\cX = \R^{\din}\) is the space of 2D images such that \(\din = (\text{length} \cdot \text{height} \cdot \text{channels})\), while \(\cY = \R^{d_\out}\) is the log-probability space over \(d_\out=C\) classes. The network \(f\) accepts and processes an image \(\vx \in \cX\) by creating sequential activations \(A^{(\ell)}\) across layers \(\ell = 0, 1, \ldots, L\), each with width \(d_\ell\), by the operation
\begin{equation}
    A^{(\ell+1)} = \sigma\left(h^{(\ell)}(A^{(\ell)}; \theta^{(\ell)})\right),
\end{equation}
where \(A^{(0)}(\vx)=\vx\), \(\sigma\) is an activation function like \(\tanh\) or ReLU, and $h^{(\ell)}: \mathbb{R}^{d_\ell} \to \mathbb{R}^{d_{\ell+1}}$ is the transformation at layer $\ell$ (dense, convolutional, attention, etc.) with trainable parameters $\theta^{(\ell)} \in \mathbb{R}^{P_\ell}$. The vector of flattened parameters \(\theta\) thus has size \(P = \sum_{\ell=0}^{L-1} P_\ell\).

The functional output \(f(\vx; \theta)\) is given by \(\log(\softmax(\vz))\), where the final activation \(\vz=A^{(L)}\) is a vector in \(\mathbb{R}^C\) given by 
\[A^{(L)} = h^{(L-1)}(  \ldots \sigma(h^{(0)}\vx) )\]
and the softmax function is given element-wise by \(\softmax(z_i) = e^{z_i} / \left(\sum_j e^{z_j} \right)\).

\subsubsection{Tangents and tangent spaces}
Consider a fixed set of parameters \(\theta \in \Theta\) and the function \(f(\cdot; \theta) = \cR(\theta) \in \cF\) it realizes. For notational clarity, we write \(f(\cdot \;; \theta)\) when emphasizing the dependence on which parameters we are approximating, and \(f(\cdot)\) when \(\theta\) is clear from context. 

The tangent spaces \(\cT_\theta \Theta\) and \(\cT_f \cF\) consist of infinitesimal changes to parameters and functions, respectively. These spaces are connected by the differential of the realization map, \(d\mathcal{R}: \mathcal{T}_\theta \Theta \to \mathcal{T}_f \cF\). For a parameter perturbation \(\delta\theta\), the differential produces a function perturbation \(\delta f = d\mathcal{R}(\delta\theta)\) whose value at input \(\vx\) is given by
\begin{equation}\label{eqn:param-tangent}
\delta f = d\mathcal{R}(\delta\theta) = \langle \nabla_\theta f(\vx; \theta), \delta\theta \rangle_\Theta
\end{equation}

Here, \(\nabla_\theta f(\vx; \theta)\) is the gradient of the evaluation functional at \(\vx\), i.e., the map \(\theta \mapsto f(\vx; \theta)\) that outputs the network's prediction at the specific input \(\vx\). This gradient indicates which direction in parameter space most increases the output at that particular point. The differential \(d \cR\) thus measures how parameter changes translate into function changes across all inputs simultaneously.

\subsection{The neural tangent kernel}

Using the same notation as the previous subsection, \(\nabla_\theta f(\vx; \theta)\) is the gradient of the evaluation functional at \(\vx\) with respect to the parameter.
NTK is defined based on the gradient of the network's output with respect to the parameter.

For any two inputs \(\vx, \vx'\), the \textbf{neural tangent kernel at \(\theta\)} is given by 
\begin{equation} K(\vx, \vx') = \langle \nabla_\theta \cR(\theta)(\vx), \nabla_\theta \cR(\theta)(\vx) \rangle_{\Theta} =\langle \nabla_\theta f(\vx ; \theta), \nabla_\theta f(\vx;\theta) \rangle_\Theta, 
\label{eqn:ntk-space-def}
\end{equation}
where \(\langle \cdot, \cdot \rangle\) is the inner product defined on \(\Theta\) measuring the network's response to infinitesimal parameter change at different inputs.

\subsection{Ridge regression}
\label{appsec:ridge-regression}
Since \(K\) is a symmetric, positive-definite kernel in the infinite-width limit \cite{jacot_ntk_2018}, it defines a reproducing kernel Hilbert space (RKHS) \(\cH_K\) by the Moore–Aronszajn theorem \cite{moore-aronszajn}. While a finite-width trained network \(f\) does not lie exactly in \(\cH_K\), we can approximate it via kernel ridge regression: given training points \(X = \{\vx_i\}_{i=1}^n\) with labels \(Y = \{f(\vx_i; \theta)\}_{i=1}^n\) and regularization \(\lambda_{\reg} \geq 0\), the per-class ridge estimator solves
\begin{equation}
    \valpha^c = (K^c + \lambda_\reg I)^{-1} Y^c
    \label{eqn:krr-general}
\end{equation}
for each class \(c\), yielding the predictor \(f_K^c(\cdot) = \sum_{i=1}^n \alpha_i^c K^c(\vx_i, \cdot)\).

For efficient computation, we decompose \(K^c_{XX} = U \Sigma U^\top\) and apply the Woodbury identity:
\begin{equation}\label{eqn:woodbury}
    \valpha^c = U(\Sigma + \lambda_\reg I)^{-1} U^\top Y^c.
\end{equation}
When using a rank-\(r\) approximation (\Cref{sec:experiments}), we substitute \(U^{(r)}\)  for \(U\) and \(\Sigma^{(r)}\) for \(\Sigma\) in \eqref{eqn:woodbury}.

The predictive mean for class \(c\) at test point \(\vx_*\) is
\begin{equation}\label{eqn:predictive_mean}
f_K^c(\vx_*) = \sum_{i=1}^m \alpha_i^c K^c(\vx_i, \vx_*) = \frac{1}{k} \phi(\vx_*)^\top \Phi^c_{\tX} \valpha^c,
\end{equation}
where \(\phi(\vx_*)\) is the gradient at the test point. The full predictor stacks these across classes: \(f = [f_K^1, \ldots, f_K^C]\).

\section{Error bounds}
\label{appsec:error_bounds}

The NTK in \eqref{eqn:ntk-space-def} describes interesting facets of model training dynamics, most notably concluding that models are lazy trainers that follow kernel gradient descent in the infinite-width limit. Our goal here, however, is to justify that a network defined by a \textit{fixed} parameter set can be approximated and analyzed on a given dataset as a linear combination of kernels.

Suppose, for a given parameter set \(\theta\), we wish to use the form given \Cref{appsec:ntk-prelims} to represent the target function \(f = \cR(\theta) \in \cF\) that is generally not in \(\cH_K\) by a finite linear combination from \(\cB\). We seek a finite-dimensional approximation within the span of partial kernel evaluations at training points, obtained via kernel ridge regression.

\begin{definition}[Kernel ridge estimator]
Given 
$n$
training points \(X = \{\vx_i\}_{i=1}^n\) with 
$n$
labels\footnote{Another approach would be to interpolate the ground-truth labels \(\vy_i\). Since we aim to reconstruct a given model, rather than to simply create the most accurate kernel model in its own right, we interpolate the original model outputs instead.}
\(Y = \{f(\vx_i; \theta)\}_{i=1}^n\) and regularization \(\lambda_{\reg} \geq 0\), the kernel ridge estimator is
\begin{equation}
f_K(\cdot) = \sum_{i=1}^n \alpha_i K(\vx_i, \cdot), \quad \valpha = (K_{XX} + \lambda_{\reg} I)^{-1} Y,
\end{equation}
where \([K_{XX}]_{ij} = K(\vx_i, \vx_j)\) is the NTK at \(\theta\).
\end{definition}

To quantify the error \(f - f_K\), we decompose it through a hierarchy of intermediate approximants.

\begin{definition}[Approximation hierarchy]
Let \(\phi(\vx) = \nabla_\theta f(\vx; \theta)\) denote the gradient features and \(\Sigma = \E[\phi(\vx)\phi(\vx)^\top]\) the population covariance. We define three successive approximations to \(f\):
\begin{enumerate}
    \item The \textbf{best RKHS approximant} \(f_K^*(\vx) = \langle \phi(\vx), w^* \rangle\), where \(w^* = \Sigma^{-1} \E[\phi(\vx) f(\vx; \theta)]\) minimizes the population least-squares loss over all linear functions of gradient features.
    \item The \textbf{regularized population approximant} \(f_{K,\lambda}^*(\vx) = \langle \phi(\vx), w_\lambda^* \rangle\), where \(w_\lambda^* = (\Sigma + \lambda I)^{-1} \E[\phi(\vx) f(\vx; \theta)]\) adds regularization for stability.
    \item The \textbf{predictive estimator} \(f_K\) from \eqref{eqn:predictive_mean}, which estimates \(w_\lambda^*\) from \(n\) samples.
\end{enumerate}
\end{definition}

Each step in this hierarchy introduces error, yielding a three-term decomposition.

\begin{proposition}[Error decomposition]
For a test point \(\vx_*\), the approximation error decomposes as
\begin{align*}
f(\vx_*; \theta) - f_K(\vx_*) = \\
\underbrace{[f - f_K^*] (\vx_*)}_{\cE_{\emph{approx}}} 
+ \underbrace{[f_K^* - f_{K,\lambda}^*](\vx_*)}_{\cE_{\emph{bias}}(\lambda)}
+ \underbrace{[f_{K,\lambda}^* - f_K](\vx_*)}_{\cE_{\emph{var}}(n, \lambda)},
\end{align*}
with expected squared error satisfying
\[\E_{\vx_*}[(f - f_K)^2]^{1/2} \leq \cE_{\emph{approx}} + \cE_{\emph{bias}}(\lambda) + \cE_{\emph{var}}(n, \lambda).\]
\end{proposition}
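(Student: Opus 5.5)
The plan is to prove the pointwise identity first, then pass to the $L^2$ norm. At a fixed test point $\vx_*$, add and subtract the two intermediate approximants $f_K^*(\vx_*)$ and $f_{K,\lambda}^*(\vx_*)$:
\[
f - f_K = (f - f_K^*) + (f_K^* - f_{K,\lambda}^*) + (f_{K,\lambda}^* - f_K).
\]
This telescoping sum is exact. It needs only that all four functions are well defined at $\vx_*$. For $f_K^*$ this means $\Sigma$ must be invertible, or we read $\Sigma^{-1}$ as the pseudoinverse restricted to the range of $\Sigma$; I would note this convention explicitly. For $f_K$ we use the representation \eqref{eqn:predictive_mean}, which writes $f_K(\vx_*) = \langle \phi(\vx_*), \hat w\rangle$ with $\hat w = \frac1k \Phi^\top \valpha$. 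The last two terms are therefore linear in $\phi(\vx_*)$: they equal $\langle \phi(\vx_*), w^* - w_\lambda^*\rangle$ and $\langle \phi(\vx_*), w_\lambda^* - \hat w\rangle$.

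For the norm inequality, view each bracketed term as a function of $\vx_*$ in $L^2(\mathcal{D}_{\vx_*})$. Apply Minkowski's inequality (the triangle inequality for the $L^2$ norm) to the three-term sum. This gives
\[
\E_{\vx_*}[(f-f_K)^2]^{1/2} \le \|f-f_K^*\|_{L^2} + \|f_K^*-f_{K,\lambda}^*\|_{L^2} + \|f_{K,\lambda}^*-f_K\|_{L^2}.
\]
The symbols $\cE_{\mathrm{approx}}$, $\cE_{\mathrm{bias}}(\lambda)$ and $\cE_{\mathrm{var}}(n,\lambda)$ are then interpreted as these $L^2$ norms. I would state this interpretation as part of the proof, since the statement uses the same symbols both pointwise and in norm.

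As a sanity check, and to justify the labels, I would give closed forms for the two kernel terms:
\[
\cE_{\mathrm{bias}}^2 = \|\Sigma^{1/2}(w^*-w_\lambda^*)\|^2, \qquad \cE_{\mathrm{var}}^2 = \|\Sigma^{1/2}(w_\lambda^*-\hat w)\|^2,
\]
using $\E[\phi\phi^\top]=\Sigma$. With $b := \E[\phi f]$, the difference $w^* - w_\lambda^*$ equals $\lambda\Sigma^{-1}(\Sigma+\lambda I)^{-1}b$, so $\cE_{\mathrm{bias}}$ vanishes as $\lambda\to0$. Further bounds are not required by the statement, so I would not pursue them.

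The main obstacle is not analytic but definitional. The first is making sure $f_K$ and $f_{K,\lambda}^*$ live in the same feature space, so that the last term is genuinely a linear functional of $\phi(\vx_*)$; this means handling the $1/k$ normalization and the fact that $\valpha$ solves the sample ridge problem, which is consistent with $\hat w$ estimating $w_\lambda^*$. The second is finiteness of the $L^2$ norms, which I would assume as a standing moment condition: $\E\|\phi\|^2<\infty$ and $\E f^2<\infty$. Once these are fixed, the result follows from the telescoping identity and Minkowski's inequality.
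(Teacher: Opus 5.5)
Your proposal is correct and follows the paper's route. The paper also obtains the identity by adding and subtracting \(f_K^*\) and \(f_{K,\lambda}^*\) (its eq.~\eqref{eqn:error-decomp}), and then treats each \(\cE\) as a root-mean-square quantity over \(\vx_*\); for example, \(\cE_{\text{bias}}^2=\sum_j(\lambda/(\mu_j+\lambda))^2\mu_j\beta_j^2\) is your \(\|\Sigma^{1/2}(w^*-w_\lambda^*)\|^2\) written in the eigenbasis of \(\Sigma\). The norm inequality is then exactly your Minkowski step, which the paper leaves implicit, and the per-term rates the paper goes on to derive are not needed for this statement.
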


The three error terms admit the following characterization:
\begin{center}
\begin{tabularx}{\columnwidth}{| p{0.18\columnwidth} | X | X |}
    \hline
    \textbf{Term} & \textbf{Interpretation} & \textbf{Controlled by} \\
    \hline 
    \(\cE_{\text{approx}}\) & Irreducible error from nonlinearity of \(f\) in its gradient features & Network width, proximity to lazy regime \\
    \hline
    \(\cE_{\text{bias}}(\lambda)\) & Shrinkage from regularization; scales as \(O(\lambda^r)\) if \(w^* \in \text{Range}(\Sigma^r)\) & \(\lambda\), spectral alignment of \(f_K^*\) \\
    \hline
    \(\cE_{\text{var}}(n, \lambda)\) & Finite-sample estimation error; scales as \(O(\sqrt{d(\lambda)/n})\) & Sample size \(n\), effective dimension \(d(\lambda) = \sum_j \frac{\mu_j}{\mu_j + \lambda}\) \\
    \hline
\end{tabularx}
\end{center}

Notionally, the bounds are small when the following properties are met:
\begin{itemize}
    \item[\textbf{[1]}] The network operates near the lazy training regime, where \(f(\vx; \theta) \approx f(\vx; \theta_0) + \langle \phi(\vx), \theta - \theta_0 \rangle_\Theta\) (the difference term is of the differential form in Eqn. \eqref{eqn:param-tangent}, ensuring \(\e_{\text{approx}} \approx 0\)).\footnote{Chizat et al. show that lazy training does not require overparameterized networks and bound the distance between lazy and linearized optimization paths \cite{chizat-lazy-2020}. Though this analysis is highly relevant when approximating changes in parameter states, our work assumes a fixed parameter state.}
    \item[\textbf{[2]}] The eigenvalues \(\lambda_j\) of \(K_{XX}\) decay rapidly, whereby the truncation dimension \(r_g(\lambda_\reg) = \sum_j \frac{\lambda_j}{\lambda_j + \lambda_\reg}\) is small.
    \item[\textbf{[3]}] The training labels \(Y\) concentrate on eigendirections of \(K_{XX}\) with large eigenvalues, reducing bias from regularization.
\end{itemize}

\subsection{Approximation Error}\label{sec:ker-approx-error-deriv}
\paragraph{Setup.} 
Let \(f(\cdot; \theta): \cX \to \R\) be a neural network, \(X = \{\vx_i\}_{i=1}^n \sim \cP\) a dataset taken over a distribution, and \(\phi(\vx) = \nabla_\theta f(\vx; \theta) \in \R^P\) the gradient features. 
Assume without loss of generality that \(\E[f(\vx; \theta)] = 0\). 
This can always be achieved by subtracting the population mean, which 
does not affect the gradient features \(\phi(\vx) = \nabla_\theta f(\vx; \theta)\).
Define
\begin{itemize}
\item The kernel as \(K(\vx, \vx') = \langle \phi(\vx), \phi(\vx') \rangle\)
\item The population covariance: \(\Sigma = \E_{\vx \sim \mathcal{D}} [\phi(\vx) \phi(\vx)^\top]\)
\item Empirical covariance: \(\hat{\Sigma} = \frac{1}{n} \Phi^\top \Phi\)
\end{itemize}

\paragraph{Approximation hierarchy.} We construct three successive approximations:
\begin{enumerate}
\item The best RKHS approximant (unregularized, population)
\begin{equation}\label{eqn:best-rkhs}
    f_K^*(\vx) = \langle \phi(\vx), w^* \rangle, \quad w^* = \Sigma^{-1} \E[\phi(\vx) f(\vx; \theta)]
\end{equation}
is the optimal linear predictor in gradient features, achievable with infinite data and no regularization. There is some distance between \(f_K^*\) and our target \(f\).
\item The regularized population approximant
\[f_{K,\lambda}^*(\vx) = \langle \phi(\vx), w_{\lambda}^* \rangle, \quad w_{\lambda}^* = (\Sigma + \lambda I)^{-1} \E[\phi(\vx) f(\vx; \theta)]\]
adds regularization \(\lambda\) to stabilize the solution. There is some distance between \(f_{K, \lambda}^*\) and \(f_K^*\).
\item The empirical kernel ridge estimator
\[f_K(\vx) = \langle \phi(\vx), \hat{w}_{\lambda} \rangle, \quad \hat{w}_{\lambda} = (\hat{\Sigma} + \lambda I)^{-1} \frac{1}{n} \Phi^\top Y\]
uses only \(n\) samples to estimate \(w_{\lambda}^*\). There is some distance between \(f_K\) and \(f_{K, \lambda}^*\).
\end{enumerate}

\paragraph{Error decomposition.} For a test point \(\vx_*\),
\begin{equation}\label{eqn:error-decomp}
f(\vx_*; \theta) - f_K(\vx_*) 
= \underbrace{[f(\vx_*) - f_K^*(\vx_*)]}_{\mathcal{E}_{\text{approx}}} 
+ \underbrace{[f_K^*(\vx_*) - f_*{K,\lambda}^*(\vx_*)]}_{\mathcal{E}_{\text{bias}}(\lambda)} 
+ \underbrace{[f_{K,\lambda}^*(\vx_*) - f_K(\vx_*)]}_{\mathcal{E}_{\text{var}}(n, \lambda)}
\end{equation}
where
\begin{itemize}
\item \(\mathcal{E}_{\text{approx}}\) is the error from network nonlinearity
\item \(\mathcal{E}_{\text{bias}}\) is the bias from regularization shrinkage
\item \(\mathcal{E}_{\text{var}}\) is the variance from finite sampling
\end{itemize}

\paragraph{Approximation gap.}
The optimal \(w^*\) from \eqref{eqn:best-rkhs} minimizes the least-squares \(\E[(f(\vx; \theta) - \langle \phi(\vx), w \rangle)^2]\). Setting the gradient of this expression to zero gives
\[\E[\phi(\vx) \phi(\vx)^\top] w^* = \Sigma w^* = \E[\phi(\vx)f(\vx; \theta)].\]
From here, the minimal residual variance is 
\[\cE^2_{\text{approx}} = \E[f(x; \theta)^2] - 2(w^*)^\top \E[\phi(\vx) f(\vx; \theta)] + (w^*)^\top \Sigma w^*,\]
where substituting \(\E[\phi(\vx) \phi(\vx)^\top] w^* = \Sigma w^*\) gives
\(\cE^2_{\text{approx}} = \E[f(x; \theta)^2] - (w^*)^\top \Sigma w^*\).
This is equivalent to \((1-R^2) \cdot \text{Var}(f)\), where \(R^2\) is the usual coefficient of determination.

Jacot et al. \cite{jacot_ntk_2018} show that, for a network of width \(d\) that has moved a distance of \(\|\Delta \theta\| = \| \theta - \theta_0 \|\), from its \(\cH_K\) parameterization,
\[\cE_{\text{approx}} = (1-R^2) \cdot \text{Var}(f) \sim O \left( \frac{\|\Delta \theta\|^2}{\sqrt{d}}\right) \cdot \sqrt{\E[||H(\vx)||_F^2]},\]
where \(H(\vx) = \nabla_\theta^2 f(x; \theta)\) is the parameter Hessian.

\paragraph{Bias error.}
The regularized population solution satisfies 
\[(\Sigma + \lambda I)w^*_{\lambda} = \E[\phi(\vx) f(\vx; \theta)] = \Sigma w^*.\]

Expand \(w^*\) in its eigenbasis of \(\Sigma\) by letting \(w^* = \sum_j \beta_j \vv_j\), where \(\beta_j = \langle \vv_j, w^* \rangle\). Then,
\[w^*\lambda = \sum_j \frac{\mu_j}{\mu_j + \lambda} \beta_j \vv_j.\]
The bias at a test point \(\vx_*\) is given by 
\[f_K^*(\vx_*) - f_{K, \lambda}^*(\vx_*) = \phi(\vx_*)^\top (w^* - w^*_{\lambda}) = \sum_j \frac{\lambda}{\mu_j + \lambda} \beta_j \langle \phi(\vx_*), \vv_j \rangle. \]
Taking the expectation of this term over \(\vx_* \sim \cP\) and noting that \(\E[\langle \phi(\vx_*), \vv_j \rangle \langle \phi(\vx_*), \vv_k \rangle] = \vv_j^\top \Sigma \vv_k\), we have
\[\cE^2_{\text{bias}} = \E_{\vx_*} \left[ \left( \sum_j \frac{\lambda}{\mu_j + \lambda} \beta_j \langle \phi(\vx_*), \vv_j \rangle \right)^2 \right] 
= \sum_j \left( \frac{\lambda}{\mu_j + \lambda} \right)^2 \mu_j \beta_j^2.\]

Suppose that there exists some integer \(r > 0\) such that \(w^* \in \text{Range}(\Sigma^r)\).
This means \(w^* = \Sigma^r \xi\) for some \(\xi\) with \(|\xi|^2 < \infty\).
Then, \[\mathcal{E}_{\text{bias}}^2 = \sum_j \left(\frac{\lambda}{\mu_j + \lambda}\right)^2 \mu_j^{2r+1} \xi_j^2 \leq \lambda^{2r} |\xi|^2,\]
whereby we conclude
\(\mathcal{E}_{\text{bias}} \sim O(\lambda^r).\)

\paragraph{Variance error.}
The finite-sample estimator is given by
\(\hat{w}_\lambda = (\hat{\Sigma} + \lambda I)^{-1} \frac{1}{n} \Phi^\top Y\). 
Substituting the residual form \(Y = \Phi w^* + r\) gives
\begin{align}
\hat{w}_\lambda &= \underbrace{\left(\hat{\Sigma} + \lambda I \right)^{-1} \frac{1}{n} \hat{\Sigma} w^*}_{\text{signal weight}} + \underbrace{\left(\hat{\Sigma} + \lambda I \right)^{-1} \frac{1}{n} \Phi^\top r.}_{\text{residual noise}}
\end{align}
Supposing that the covariance of \(r\) can be approximated as \(\sigma^2 I\), where \(\sigma^2 = \cE^2_{\text{approx}}\), we get that
\[
\text{Var}(f_K(\vx_*) | X) = \phi(\vx_*)^\top (\hat{\Sigma} + 
\lambda I)^{-1} \left( \frac{\sigma^2}{n} \hat{\Sigma} \right) 
(\hat{\Sigma} + \lambda I)^{-1} \phi(\vx_*).
\]
Averaging this quantity over test points and using \(\E_{\vx_*} [\phi(\vx) \phi(\vx_*)^\top] = \Sigma\) gives
\[
\E_{\vx_*} [\text{Var} (f_K(\vx_*) | X)] = \frac{\sigma^2}{n} \tr \left( \Sigma (\hat{\Sigma} + \lambda I)^{-1} \hat{\Sigma} (\hat{\Sigma} + \lambda I)^{-1}\right).
\]
For large enough \(n\), \(\hat{\Sigma} \approx \Sigma\). Substituting and using spectral decomposition gives
\[\tr(\Sigma(\Sigma + \lambda I)^{-1} \Sigma (\Sigma + \lambda I)^{-1}) = \sum_j \frac{\mu_j^3}{(\mu_j + \lambda)^2}.\]
Since \(\frac{\mu_j}{\mu_j + \lambda} \leq 1\) and \(\mu_j \leq \mu_1\),
\[
\sum_j \frac{\mu_j^3}{(\mu_j + \lambda)^2} \leq \sum_j \frac{\mu_j^2}{\mu_j + \lambda} 
\leq \mu_1 \cdot d(\lambda),
\]
where \(d(\lambda) = \sum_j \frac{\mu_j}{\mu_j + \lambda}\) is the truncation dimension.

Assuming that \(\mu_1 = O(1)\), this gives the final bound 
\[\cE^2_{\text{var}} \leq \frac{\sigma^2 d(\lambda)}{n}.\]
If \(\mu_j \sim j^{-\alpha}\) for some positive \(\alpha\), then \(d(\lambda) = \sum_j \frac{\mu_j}{\mu_j + \lambda} \approx \sum{j: \mu_j > \lambda} 1 \sim \lambda^{-1/\alpha}\), so that \(\cE^2_{\text{var}} \sim \frac{\lambda^{-1/\alpha}}{n}\).

\begin{remark}
The Moore-Aronszajn theorem states that for functions in the RKHS \(\cH_K\) expressed as a linear combination of partial kernel applications \(K(\vx_i, \cdot)\) for some \(\vx_i\), 
\[\sup_{p \geq 0} \left \| \sum_{i=n}^{n+p} \alpha_i K(\vx_i, \cdot) \right \|_{\cH_K} \to 0 \quad \text{ as } n \to \infty.\]
That is, a finite linear combination of some choice of data points and coefficients can approximate the target function to arbitrary precision. For finite-width networks, however, \(\cH_K \subseteq \R^P\) is finite-dimensional, so any \(f \in \cH_K\) admits an exact finite representation, and truncation error vanishes identically. The relevant finite-sample limitation is instead the restriction to \(\text{span}{\phi(\vx_i)}_{i=1}^n\), which is captured by \(\mathcal{E}_{\text{var}}\) in our decomposition.
\end{remark}

\subsection{Projection error}\label{sec:dis-proj-error}
Approximating \(f_K\) by a projected and distilled version of itself introduces two new error terms (\(\cE_{\text{dis}}\) and \(\cE_{\text{proj}}\)) to Eqn. \eqref{eqn:error-decomp}. Here we treat \(\cE_{\text{proj}}\) (\(\cE_{\text{dis}}\) is given by the misalignment error in Proposition~\ref{prop:constrained_pca_decomp}).

The kernel ridge estimator is given by \(f_K(\vx_*) = K_{*X}(K_{XX}+ \lambda I)^{-1}Y\), where \(K_{*X}\) is the kernel between test poing \(\vx_*\) and training set \(X\). With JL projection, we get \(\tf_K(\vx_*) = \tK_{*X} (\tK_{XX} + \lambda I)^{-1} Y\).

The difference between \(f_K(\vx)\) and \(\tf_K(\vx_*)\) decomposes as
\[\cE_\JL 
= \underbrace{(K_{*X} - \tK_{*X})(\tK_{XX} + \lambda I)^{-1}Y}_{\text{(I) test kernel error}}
+ \underbrace{K_{*X} \left[ (K_{XX} + \lambda I)^{-1} - (\tK_{XX} + \lambda I)^{-1} \right]Y}_{\text{(II) inverse perturbation error}}.\]

Given a random projection \(R \in \R^{k \times P}\) with normalized entries, each kernel entry satisfies \(\E[\tK_{ij}] = K_{ij}\) (i.e., the projection is unbiased), and \(|\tK_{ij} - K_{ij}| \leq \e_\JL \cdot |K_{ij}|\), where \(\e_\JL = O(\sqrt{\log(n)/k})\) for projection dimension \(k\).

We can then bound the two error terms:
\begin{enumerate}
    \item[(I)] Using matrix norm bounds, 
    \[\| (K_{*X} - \tK_{*X})(\tK_{XX} + \lambda I)^{-1} Y \|
    \leq \frac{\e_\JL \|K_{*X}\| \cdot \|Y\|}{\mu_{\min} (\tK_{XX}) + \lambda},\]
    where \(\lambda\) is the regularization parameter 
    and \(\mu_{\min} (\tK_{XX})\) 
    is the minimum eigenvalue of \(\tK_{XX}\).
    \item[(II)] Using the identity \(A^{-1} - B^{-1} = A^{-1}(B-A)B^{-1}\), we have
    \begin{align*}
    \left\| (K_{XX} + \lambda I)^{-1} - (\tK_{XX} + \lambda I)^{-1} \right\|_2 \\
    = \left\| (K_{XX} + \lambda I)^{-1} \left[(K_{XX} + \lambda I) - (\tK_{XX} + \lambda I) \right](\tK_{XX} + \lambda I)^{-1}\right\|_2 \\
    \leq \frac{\e_\JL \cdot \|K_{XX}\|_2}{\lambda^2}
    \end{align*}
    This then propagates to 
    \[
    \left\| 
    K_{*X} \left[ (K_{XX} + \lambda I)^{-1} - (\tK_{XX} + \lambda I)^{-1} Y \right]
    \right\|
    \leq \frac{\e_\JL \cdot \|K_{*X}\| \cdot \|K_{XX}\|_2 \cdot \|Y\|}{\lambda^2}.
    \]
\end{enumerate}
Putting these together gives
\[
\cE_\JL \leq \e_\JL \cdot \|K_{*X}\| \cdot \|Y\| \cdot \left( \frac{1}{\lambda} + \frac{\|K_{XX}\|_2}{\lambda^2} \right).
\]
If \(\mu_1\) is the largest eigenvalue of \(K_{XX}\), then
\(\cE_\JL = O(\e_\JL \cdot \frac{\mu_1}{\lambda^2})\).

\begin{remark}
    This is a very conservative upper bound; the actual projection error can be much smaller if the kernel has low truncation rank \(r \ll n\). In this case, the JL error depends on \(r\) rather than \(n\), since only \(r\) directions carry significant variance. We study parameter redundancy in Sec. \ref{sec:jl-redundancy}.
\end{remark}

\subsection{JL parameter redundancy}\label{sec:jl-redundancy}
\begin{proposition}[Parameter redundancy with JL projection] 
Let \(\tK\) be the JL-approximated kernel with projection dimension \(k \geq \frac{c \ln(n)}{\e^2_\JL}\). Let \(\{\tlambda_i\}\) be the eigenvalues of \(\tK\) with mean \(\bar{\tlambda}\).
If \(\tK\) has truncation rank \(r\) at threshold \((1 -\delta)\),
then the original parameters are \((P/r, \e')\)-parameter redundant, 
where \(\e' \leq O \left(\delta \tlambda_{r+1} / \bar{\tlambda} + \e_\JL \right)\).
\end{proposition}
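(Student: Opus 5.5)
The plan is to build the subspace \(V\) required by \Cref{def:param-redundancy} from the top eigenvectors of the projected kernel \(\tK=\tPhi\tPhi^\top\), where \(\tPhi=\Phi R^\top\) and \(R=\sqrt{P/k}\,Q^\top\). Then I would bound the Frobenius error by a triangle inequality. The inequality splits the error into a truncation term, controlled by \(\delta\) and the spectrum of \(\tK\), and a projection term, controlled by \(\e_\JL\).

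\textbf{Step 1 (choice of \(V\)).} Let \(\tK=\tilde U\tilde\Lambda\tilde U^\top\), and let \(\tilde U_r\) hold the top \(r\) eigenvectors. Define the parameter-space directions \(v_j:=\Phi^\top\tilde u_j\in\R^P\) for \(j\le r\), in the spirit of \Cref{rem:syn-grads}. Set \(V:=\mathrm{span}\{v_j\}\); it has dimension at most \(r\), and we may pad it to the required dimension. Then \(\Phi\Pi_V\Phi^\top\) is the Nyström-type compression \(K\tilde U_r(\tilde U_r^\top K\tilde U_r)^\dagger\tilde U_r^\top K\), which is analogous to \eqref{eqn:nystrom_identity}.

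\textbf{Step 2 (triangle inequality).} Write
\[
\|\Phi\Pi_V\Phi^\top-K\|_F\le \|\Phi\Pi_V\Phi^\top-\tilde U_r\tilde U_r^\top\tK\tilde U_r\tilde U_r^\top\|_F+\|\tK_r-\tK\|_F+\|\tK-K\|_F .
\]
The last term is \(\le\e_\JL\|K\|_F\) up to constants, by entrywise JL inner-product preservation with \(k\ge c\ln n/\e_\JL^2\), as recalled in \Cref{sec:dis-proj-error}. The middle term is the truncation tail \(\big(\sum_{i>r}\tlambda_i^2\big)^{1/2}\). Since \(\tlambda_i\le\tlambda_{r+1}\) for \(i>r\), and \(\sum_{i>r}\tlambda_i\le\delta\sum_i\tlambda_i=\delta n\bar{\tlambda}\), the tail is at most \(\sqrt{\tlambda_{r+1}\,\delta n\bar\tlambda}\). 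Dividing by \(\|K\|_F\ge(1-\e_\JL)\|\tK\|_F\ge(1-\e_\JL)\sqrt n\,\bar\tlambda\) (Cauchy–Schwarz) gives a relative error of order \(\sqrt{\delta\tlambda_{r+1}/\bar\tlambda}\). To reach the stated linear form \(\delta\tlambda_{r+1}/\bar\tlambda\), I expect to need a cruder but matching normalization. Otherwise the constant hidden in \(O(\cdot)\) has to absorb the square root under the regime \(\delta\tlambda_{r+1}/\bar\tlambda\ge\) its square. I will state the bound in whichever form the estimate actually gives and note that it is within the \(O(\cdot)\) claim when the ratio is at most one.

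\textbf{Step 3 (main obstacle: the first term).} This term compares the exact projection of \(K\) onto \(\mathrm{span}(\tilde U_r)\) in kernel space with the projection of \(\tK\). I would bound it by expressing both as \(P_r M P_r\) with \(P_r=\tilde U_r\tilde U_r^\top\), applied to \(M=K\) and to \(M=\tK\). The Nyström form with \(\Pi_V\) is at most \(P_rKP_r\) in the Loewner order, with the gap again controlled by \(\|K-\tK\|\). This yields \(\|P_r(K-\tK)P_r\|_F\le\|K-\tK\|_F=O(\e_\JL\|K\|_F)\). The difficulty is that \(\Pi_V\) is a projection in \(\R^P\), not in kernel space. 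Relating \(\Phi\Pi_V\Phi^\top\) to \(P_rKP_r\) requires that \(K\) and \(\tK\) share nearly the same top eigenspace, for which I would invoke Davis–Kahan with the gap \(\tlambda_r-\tlambda_{r+1}\). If that gap is unavailable, I would instead use the optimality of \(\Pi_V\) among rank-\(r\) projectors, as in \Cref{prop:constrained_pca_decomp}, to bound the misalignment by the same two quantities. Combining the three terms yields \(\e'\le O(\delta\tlambda_{r+1}/\bar\tlambda+\e_\JL)\) with \(\dim V=r\), which is the stated redundancy conclusion.
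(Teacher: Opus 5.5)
\textbf{The gap is in your Step 3, and it comes from your choice of \(V\) in Step 1.}

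Your Loewner claim is false as stated. You assert that the Nystr\"om matrix \(K\tilde U_r(\tilde U_r^\top K\tilde U_r)^\dagger\tilde U_r^\top K\) is dominated by \(P_rKP_r\). Take \(K\) with unit diagonal and off-diagonal entry \(a\neq0\), and \(\tilde U_r=e_1\). Then the Nystr\"om matrix is \((1,a)^\top(1,a)\), and its difference with \(P_rKP_r=e_1e_1^\top\) has determinant \(-a^2<0\), so it is indefinite.

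Your two fallbacks do not rescue the step. Davis--Kahan needs an eigengap \(\tlambda_r-\tlambda_{r+1}\) that is not among the hypotheses, and it would produce a factor \(\|K-\tK\|/(\tlambda_r-\tlambda_{r+1})\), not \(O(\e_{\JL})\). \Cref{prop:constrained_pca_decomp} only names the misalignment of a non-PCA projector like yours; it does not bound it.

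The missing idea is that parameter redundancy is existential. You may therefore take \(V\) to be the top-\(r\) right singular subspace of the original \(\Phi\), which is what the paper does. Then \(\Phi\Pi_V\Phi^\top=K_r\), the best rank-\(r\) approximation of \(K\).
\begin{itemize}
\item The paper runs your Step~2 tail/Cauchy--Schwarz computation on the spectrum of \(K\) and obtains \(\|K-K_r\|_F^2/\|K\|_F^2\le\delta\lambda_{r+1}/\bar{\lambda}\).
\item It then transfers the truncation rank and the ratio \(\lambda_{r+1}/\bar{\lambda}\) to \(\tK\) via a multiplicative eigenvalue-perturbation claim.
\item With that \(V\), your own decomposition closes with no first term. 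By Eckart--Young, \(\|K-K_r\|_F\le\|K-\tK_r\|_F\le\|K-\tK\|_F+\|\tK-\tK_r\|_F\). This uses the truncation rank of \(\tK\) directly and avoids the transfer step.
\end{itemize}
If you prefer to keep your constructive \(V\), the true order relation is \(0\preceq K-\Phi\Pi_V\Phi^\top\preceq(I-P_r)K(I-P_r)\). In the basis \([\tilde U_r,\tilde U_r^\perp]\), the error is the Schur complement of \(\tilde U_r^\top K\tilde U_r\), sitting in the lower block. This gives the same two-term bound, again with no eigengap.

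\textbf{Two further corrections.}

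First, the rate. Your Step~2 rightly yields \(\sqrt{\delta\tlambda_{r+1}/\bar{\tlambda}}\) for the unsquared ratio in \Cref{def:param-redundancy}; the paper's computation likewise only bounds the squared ratio. However, your remark that the square root is absorbed into the \(O(\cdot)\) when the ratio is at most one is backwards. For \(x\le1\) we have \(\sqrt{x}\ge x\), and \(\sqrt{x}/x\to\infty\) as \(x\to0\). You should state the square-root form.

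Second, term (iii). It relies on the entrywise bound \(|\tK_{ij}-K_{ij}|\le\e_{\JL}|K_{ij}|\) quoted in \Cref{sec:dis-proj-error}. Pairwise JL at \(k\gtrsim\ln n/\e_{\JL}^2\) does not deliver this. What it gives is \(|\tK_{ij}-K_{ij}|\le\e_{\JL}\sqrt{K_{ii}K_{jj}}\), hence \(\|\tK-K\|_F\le\e_{\JL}\,\mathrm{tr}(K)\). That is a relative error carrying an extra factor \(\mathrm{tr}(K)/\|K\|_F\in[1,\sqrt{n}]\). The paper's transfer step rests on an equally strong multiplicative eigenvalue claim.

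What the corrected argument actually proves is
\[
\e'\le(1+\eta)\sqrt{\delta\tlambda_{r+1}/\bar{\tlambda}}+\eta,
\qquad \eta:=\|K-\tK\|_F/\|K\|_F .
\]
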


\paragraph{Optimal subspace.} In the terms of Def. \ref{def:param-redundancy}, the optimal choice of \(V\) is the span of the top \(r\) right singular vectors of \(\Phi_X\). Let \(\Phi_X = U \Sigma V^\top\) be the SVD of \(\Phi_X\). Taking \(V_r\) to be the first \(r\) columns of \(V\), the projection of \(\Phi_X\) onto the span of \(V_r\) is
\[\Pi_{V_r}(\Phi_X) = \Phi(X) V_r V_r^\top = U_r \Sigma_r V_r^\top, \]
so that
\[\Pi_{V_r}(\Phi_X) \Pi_{V_r} (\Phi_X)^\top = U_r \Sigma^2_r U_r^\top = \sum_{i=1}^r \lambda_i (\vu_i \otimes \vu_i),\]
where \(\{\lambda_i\}_{i=1}^n\) are the eigenvalues of \(K\).

\paragraph{Numerator error.} The error in the numerator is given by
\begin{equation}\label{eqn:numerator-error}
K - \Pi_{V_r}(\Phi_X) \Pi_{V_r} (\Phi_X)^\top = \sum_{i>r}^n \lambda_i (\vu_i \otimes \vu_i) \implies ||K - \Pi_{V_r}(\Phi_X) \Pi_{V_r} (\Phi_X)^\top||_F^2 = \sum_{i>r}^n \lambda_i^2,
\end{equation}
since \(\vu_i\) is orthonormal to \(\vu_j\) for all \(i \neq j\).

The truncation rank \(r\) at threshold \((1-\delta)\) satisfies
\[\sum_{i>r}^n \lambda_i\ \leq \delta \sum_{i=1}^n \lambda_i = \delta \cdot \tr(K).\]
Furthermore, we have that 
\begin{equation}\label{eqn:eig-square-bound}
\sum_{i>r} \lambda_i^2 \leq \lambda_{r+1} \sum_{i>r} \lambda_i \leq \lambda_{r+1} \cdot \delta \cdot \tr(K).
\end{equation}

\paragraph{Denominator error.} Cauchy-Schwartz gives
\begin{equation}\label{eqn:eig-cs}
    ||K||_F^2 = \sum_{i=1}^n \lambda_i^2 \geq \frac{\tr(K)^2}{n}.
\end{equation}

\paragraph{Combined error.} (\ref{eqn:numerator-error}), (\ref{eqn:eig-square-bound}), and (\ref{eqn:eig-cs}) give
\begin{equation}
    \frac{||K  - K_R||_F^2}{||K||_F^2} \leq \frac{\lambda_{r+1} \cdot \delta \cdot \tr(K)}{\tr(K)^2 / n} = \frac{n \delta \lambda_{r+1}}{\tr(K)} = \delta \cdot \frac{\lambda_{r+1}}{\bar{\lambda}},
\end{equation}
where \(\bar{\lambda} = \tr(K) / n\) is the mean eigenvalue. Choosing a \(\delta\) factor with an truncation rank whose eigenvalue is less than the mean is a practical way to further scale down the error.

\paragraph{JL approximation.} We don't observe \(K\) or its spectrum directly, but JL guarantees that the projected \(\tK\) has eigenvalues \(\tilde{\lambda}_i\) satisfying
\[(1 - \e_\JL)^2 \lambda_i \leq \tlambda_i \leq (1 + \e_\JL)^2 \lambda_i\]
with high probability. Therefore, the truncation rank \(\tilde{r}\) of \(\tK\) approximates that of \(K\), as does the ratio \(\tlambda_{r+1} / \bar{\tlambda}\), giving an error bound of \(O\left(\delta \frac{\tlambda_{r+1}}{\bar{\tlambda}} + \e_\JL \right)\).

\section{Proofs for Section~\ref{sec:dd_as_pca}}
\label{appsec:pca_proof}

\paragraph{Conventions.}
Throughout, matrices \(\Phi\in\R^{n \times P}\) have \emph{rows in parameter space} \(\R^P\) (e.g.\ per-logit
\(\Phi=\Phi_X^c\in\R^{n\times P}\), or a stacked multi-logit matrix). For a parameter-space subspace
\(V\subset\R^P\) with orthogonal projector \(\Pi_V\in\R^{P\times P}\), projecting each row of \(\Phi\) onto \(V\)
corresponds to \emph{right}-multiplication: \(\Phi\Pi_V\), with residual \(\Phi(I-\Pi_V)\).

\subsection{Preliminary lemma: reconciling loss gradients with logit-gradient features}

\begin{lemma}[Chain rule: loss gradients lie in the logit-gradient span]\label{lem:chain_rule_span}
Fix \(\theta\) and logits \(f(\vx;\theta)\in\R^C\). For each \(c\in[C]\) define the logit-gradient feature
\(\phi^c(\vx):=\nabla_\theta f^c(\vx;\theta)\in\R^P\). Let \(\ell:\R^C\times\cY\to\R\) be any per-example loss,
and define the \emph{logit sensitivity}
\[
\delta(\vx,y;\theta):=\nabla_z \ell(z,y)\vert_{z=f(\vx;\theta)}\in\R^C.
\]
Then for any labeled example \((\vx, \vy)\),
\[
\nabla_\theta \ell(f(\vx;\theta), \vy)
= \sum_{c=1}^C \delta_c(\vx, \vy;\theta)\,\phi^c(\vx).
\]
Consequently, for a distilled dataset \(\tD=(\tX,\tilde Y)\) with \(|\tX|=m\), define
the stacked logit-gradient matrix
\[
\tPhi :=
\begin{bmatrix}
\Phi_{\tX}^1\\ \vdots\\ \Phi_{\tX}^C
\end{bmatrix}
\in \R^{(mC)\times P},
\qquad
[\Phi_{\tX}^c]_i:=\phi^c(\vtx_i).
\]
Let \(\tdelta(\theta)\in\R^{mC}\) stack \(\delta(\vtx_i,\vty_i;\theta)\) over \(i\).
Then the distilled loss gradient satisfies
\[
g_{\tD}(\theta):=\nabla_\theta \mathcal L_{\tD}(\theta)
=\tPhi^\top \tdelta(\theta)\in \mathrm{colspan}(\tPhi^\top)=:V(\tD).
\]
\end{lemma}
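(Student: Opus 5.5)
The plan is a direct application of the multivariable chain rule, followed by a bookkeeping step that rewrites a double sum over examples and classes as one matrix--vector product. The first claim needs no structure on \(\ell\) beyond differentiability in its first argument at \(z=f(\vx;\theta)\), and differentiability of \(\theta\mapsto f(\vx;\theta)\). Both are implicit in the definitions of \(\phi^c\) and \(\delta\).

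\textbf{Step 1 (single example).} Write \(\ell(f(\vx;\theta),\vy)\) as the composition \(\theta\mapsto z=f(\vx;\theta)\in\R^C\mapsto \ell(z,\vy)\). The Jacobian of the inner map is the \(C\times P\) matrix \(J(\vx)\) whose \(c\)-th row is \(\phi^c(\vx)^\top=\nabla_\theta f^c(\vx;\theta)^\top\). By the chain rule,
\[
\nabla_\theta \ell(f(\vx;\theta),\vy)=J(\vx)^\top\,\nabla_z\ell(z,\vy)\big|_{z=f(\vx;\theta)}=J(\vx)^\top\delta(\vx,\vy;\theta).
\]
Expanding \(J(\vx)^\top\) column by column gives \(\sum_{c=1}^C\delta_c(\vx,\vy;\theta)\,\phi^c(\vx)\), which is the first claim.

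\textbf{Step 2 (summing over the distilled set).} By linearity of the gradient, \(g_{\tD}(\theta)=\sum_{i=1}^m\sum_{c=1}^C \delta_c(\vtx_i,\vty_i;\theta)\,\phi^c(\vtx_i)\). The only point needing care is matching this double sum to \(\tPhi^\top\tdelta(\theta)\).

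The matrix \(\tPhi\) stacks class blocks, so its row with index \((c,i)\) is \(\phi^c(\vtx_i)\). The stacking of \(\tdelta(\theta)\) must therefore use the same \((c,i)\) ordering: the entry of \(\tdelta\) paired with row \((c,i)\) is \(\delta_c(\vtx_i,\vty_i;\theta)\). If the statement's ``stack over \(i\)'' is instead read as example-major ordering, insert the corresponding permutation matrix \(Q\), which leaves \(\mathrm{colspan}(\tPhi^\top)\) unchanged. With matched ordering, \(\tPhi^\top\tdelta=\sum_{(c,i)}\tdelta_{(c,i)}\,[\tPhi]_{(c,i)}^\top\), which equals the double sum.

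\textbf{Step 3 (membership in \(V(\tD)\)).} The vector \(\tPhi^\top\tdelta\) is a linear combination of the columns of \(\tPhi^\top\), so it lies in \(\mathrm{colspan}(\tPhi^\top)=V(\tD)\) by the definition in \eqref{eqn:VtD_def}.

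I expect no real obstacle. The only step that needs stating carefully is the index convention in Step 2, so I would fix the ordering of \(\tdelta\) explicitly at the start of that step.
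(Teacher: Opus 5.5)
Your proposal is correct and takes the same approach as the paper, which justifies the identity only by invoking the chain rule ($\nabla_\theta \ell = J^\top \delta$ per example, then summing over the distilled set). Your explicit matching of the class-major stacking of $\tPhi$ with the ordering of $\tdelta$ is a worthwhile clarification the paper omits, and as you note it does not affect membership in $V(\tD)$.
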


\noindent
Lemma~\ref{lem:chain_rule_span} is the key reconciliation used in Sec.~\ref{sec:dd_as_pca}: although DD is
written in terms of \emph{loss} gradients, those gradients always lie in the span of \emph{logit} gradients
\(\nabla_\theta f^c(\tX;\theta)\), which are exactly the kernel features.

\subsection{Proof of Theorem~\ref{thm:dd_proj_final} (projection residual controls one-step progress)}
\label{appsec:thm-proof}
\begin{proof}[Proof of Theorem~\ref{thm:dd_proj_final}]
Fix \(t\) and abbreviate \(\mathcal L_t(\theta)\) by \(\mathcal L(\theta)\) and \(g_t:=\nabla_\theta\mathcal L(\theta)\).
Since \(\mathcal L\) is \(L\)-smooth, for any update direction \(v\) and step size \(\eta\),
\[
\mathcal L(\theta-\eta v)
\le \mathcal L(\theta)
+\langle g_t, -\eta v\rangle
+\frac{L}{2}\|\eta v\|^2
=
\mathcal L(\theta)-\eta\langle g_t,v\rangle+\frac{L\eta^2}{2}\|v\|^2.
\]
Applying this with \(v=g_{\tD}(\theta)\) yields
\[
\mathcal L_t(\theta^+(\tD))
\le \mathcal L_t(\theta)
-\eta\langle g_t,g_{\tD}(\theta)\rangle
+\frac{L\eta^2}{2}\|g_{\tD}(\theta)\|^2,
\]
and taking expectation over \(t\sim\mathcal T\) gives \eqref{eq:headline_regret_bound}.

For the second claim, fix a subspace \(V:=V(\tD)\) with orthogonal projector \(\Pi_{\tD}\).
Consider the quadratic upper model
\[
q_t(\Delta):=\langle g_t,\Delta\rangle+\frac{L}{2}\|\Delta\|^2,
\qquad \Delta\in V.
\]
Because \(q_t\) is strictly convex, its minimizer over \(\Delta\in V\) is characterized by the first-order condition:
for all \(\Delta'\in V\),
\[
\langle g_t + L\Delta_t^\star,\ \Delta'-\Delta_t^\star\rangle = 0.
\]
Equivalently, \(g_t+L\Delta_t^\star\) is orthogonal to \(V\), i.e.\ \(\Pi_{\tD}(g_t+L\Delta_t^\star)=0\),
which gives
\[
\Delta_t^\star=-\frac{1}{L}\Pi_{\tD}g_t.
\]
By \(L\)-smoothness, \(\mathcal L_t(\theta+\Delta)\le \mathcal L_t(\theta)+q_t(\Delta)\) for all \(\Delta\); hence
\[
\mathcal L_t(\theta)-\mathcal L_t(\theta+\Delta_t^\star)
\ge
-q_t(\Delta_t^\star)
= -\Big\langle g_t,-\tfrac{1}{L}\Pi_{\tD}g_t\Big\rangle
-\frac{L}{2}\Big\|-\tfrac{1}{L}\Pi_{\tD}g_t\Big\|^2
= \frac{1}{2L}\|\Pi_{\tD}g_t\|^2.
\]
Finally, since \(\Pi_{\tD}\) is an orthogonal projector,
\[
\|\Pi_{\tD}g_t\|^2 = \|g_t\|^2-\|(I-\Pi_{\tD})g_t\|^2,
\]
which gives \eqref{eq:proj_residual_controls_progress_final}.
\end{proof}

\subsection{Proof of Corollary~\ref{cor:pca_final} (PCA of gradient covariance is optimal)}
\label{appsec:cor-proof}
\begin{proof}[Proof of Corollary~\ref{cor:pca_final}]
Let \(V\subset\R^P\) be any \(r\)-dimensional subspace with orthogonal projector \(\Pi_V\). Using idempotence and
symmetry of \(\Pi_V\),
\[
\|(I-\Pi_V)g_t\|^2
= g_t^\top(I-\Pi_V)g_t
= \mathrm{tr}\!\big((I-\Pi_V)g_tg_t^\top\big).
\]
Taking expectation over \(t\) and defining \(G:=\E_t[g_tg_t^\top]\) yields
\[
\E_t[\|(I-\Pi_V)g_t\|^2]
= \mathrm{tr}((I-\Pi_V)G)
= \mathrm{tr}(G)-\mathrm{tr}(\Pi_V G).
\]
Thus minimizing \(\E_t[\|(I-\Pi_V)g_t\|^2]\) over \(\dim(V)=r\) is equivalent to maximizing
\(\mathrm{tr}(\Pi_VG)\) over rank-\(r\) orthogonal projectors. By the Ky Fan maximum principle, the maximizer is
the projector onto the top-\(r\) eigenspace of \(G\), attaining value \(\sum_{j\le r}\lambda_j\). Therefore the
minimum residual equals
\[
\mathrm{tr}(G)-\sum_{j\le r}\lambda_j
= \sum_{j>r}\lambda_j.
\]
Moreover, if \(\mathrm{tr}(\Pi_{V^\star}G)-\mathrm{tr}(\Pi_VG)\le\delta\), then
\[
\E_t[\|(I-\Pi_V)g_t\|^2]
= \mathrm{tr}(G)-\mathrm{tr}(\Pi_VG)
= \underbrace{\mathrm{tr}(G)-\mathrm{tr}(\Pi_{V^\star}G)}_{=\sum_{j>r}\lambda_j}
+\Big(\mathrm{tr}(\Pi_{V^\star}G)-\mathrm{tr}(\Pi_VG)\Big)
\le
\sum_{j>r}\lambda_j+\delta,
\]
as claimed.
\end{proof}

\subsection{Proof of Proposition~\ref{prop:constrained_pca_decomp} (feature-space tail + misalignment)}
\label{appsec:prop-proof}
\begin{proof}[Proof of Proposition~\ref{prop:constrained_pca_decomp}]
Let \(\Phi\in\R^{n\times P}\) be a gradient-feature matrix (rows in \(\R^P\)) with SVD \(\Phi=U\Sigma W^\top\).
Let \(W_r\) be the top-\(r\) right singular vectors and \(\Pi^\star:=W_rW_r^\top\) the rank-\(r\) PCA projector.
For any rank-\(r\) orthogonal projector \(\Pi\) on \(\R^P\) (in particular, \(\Pi=\Pi_{\tD}\) when
\(\dim V(\tD)=r\)), using \(\|M\|_F^2=\mathrm{tr}(M^\top M)\) and \(\Pi=\Pi^\top=\Pi^2\),
\[
\|\Phi(I-\Pi)\|_F^2
= \mathrm{tr}\!\big((I-\Pi)\Phi^\top\Phi(I-\Pi)\big)
= \mathrm{tr}(\Phi^\top\Phi)-\mathrm{tr}(\Phi^\top\Phi\,\Pi).
\]
Define \(A:=\Phi^\top\Phi=W\Sigma^2W^\top\), whose eigenvalues are \(\{\sigma_j(\Phi)^2\}_{j=1}^P\).
By the Ky Fan maximum principle, \(\mathrm{tr}(A\Pi)\) is maximized over rank-\(r\) projectors by \(\Pi^\star\),
with \(\mathrm{tr}(A\Pi^\star)=\sum_{j\le r}\sigma_j(\Phi)^2\). Therefore,
\[
\|\Phi(I-\Pi)\|_F^2
= \underbrace{\big(\mathrm{tr}(A)-\mathrm{tr}(A\Pi^\star)\big)}_{=\sum_{j>r}\sigma_j(\Phi)^2\ \text{(PCA tail)}}
+ \underbrace{\big(\mathrm{tr}(A\Pi^\star)-\mathrm{tr}(A\Pi)\big)}_{\ge 0\ \text{(captured-energy gap / misalignment)}},
\]
which is exactly the decomposition stated in Proposition~\ref{prop:constrained_pca_decomp}.

\subsection{Wasserstein metric dataset distillation}\label{appsec:wmdd}
Since directly solving this optimization problem is often prohibitive, it is often convenient to accomplish this through a surrogate objective. Here, we use a modified version of \textbf{Wasserstein metric DD (WMDD)} \cite{liu_wmdd_2025}, which aims to solve the surrogate problem of feature matching, under the assumption that a performant distilled dataset will be \textit{distributionally close to the original dataset in both the data space and the feature space}. To that end, it defines the loss function on the dataset: \[\cL(\tX) = \cL_{\text{feature}} + \lambda_{\text{BN}} \cL_{\text{BN}}, \quad \text{where}\] \begin{itemize} 
    \item \(\cL_{\text{feature}} = \sum_{k,j} \|f_e (\vtx_{k,j}) - b_{k,j}\|2^2\) matches synthetic features given by \(f_e\) to their barycenter targets \(b_{k,j}\), which are computed from a pretrained \(f\). 
    \item \(\cL_{\text{BN}} = \sum_{\ell} \| \mu_\ell^{(k)} - \hat{\mu}_\ell^{(k)}\| + \|\sigma_\ell^{(k)} - \hat{\sigma}_\ell^{(k)}\|\) enforces that per-class batch normalization statistics (mean \(\mu\) and standard deviation \(\sigma\) at each layer \(\ell\)) of the synthetic data match those precomputed from real data. \end{itemize} 
    
The barycenter weights \(w_{k,j}\) are preserved and used during downstream knowledge distillation to weight each synthetic sample's contribution. See \cite{liu_wmdd_2025} for full details.

\end{proof}
\section{Experiments on other datasets}
\label{appsec:other-experiments}

In \Cref{fig:imagewoof}, although the more difficult classification task prevents the same accuracy as ImageNette, performance still quickly saturates to the baseline. Interestingly, ImageWoof exhibits a very different condition number and minimum eigenvalue profile. Furthermore, the ImageNette-Resnet18 pair responds to \Cref{alg:local_global_comp}.

\begin{figure}[!h]
    \centering
    \includegraphics[width=0.45\linewidth]{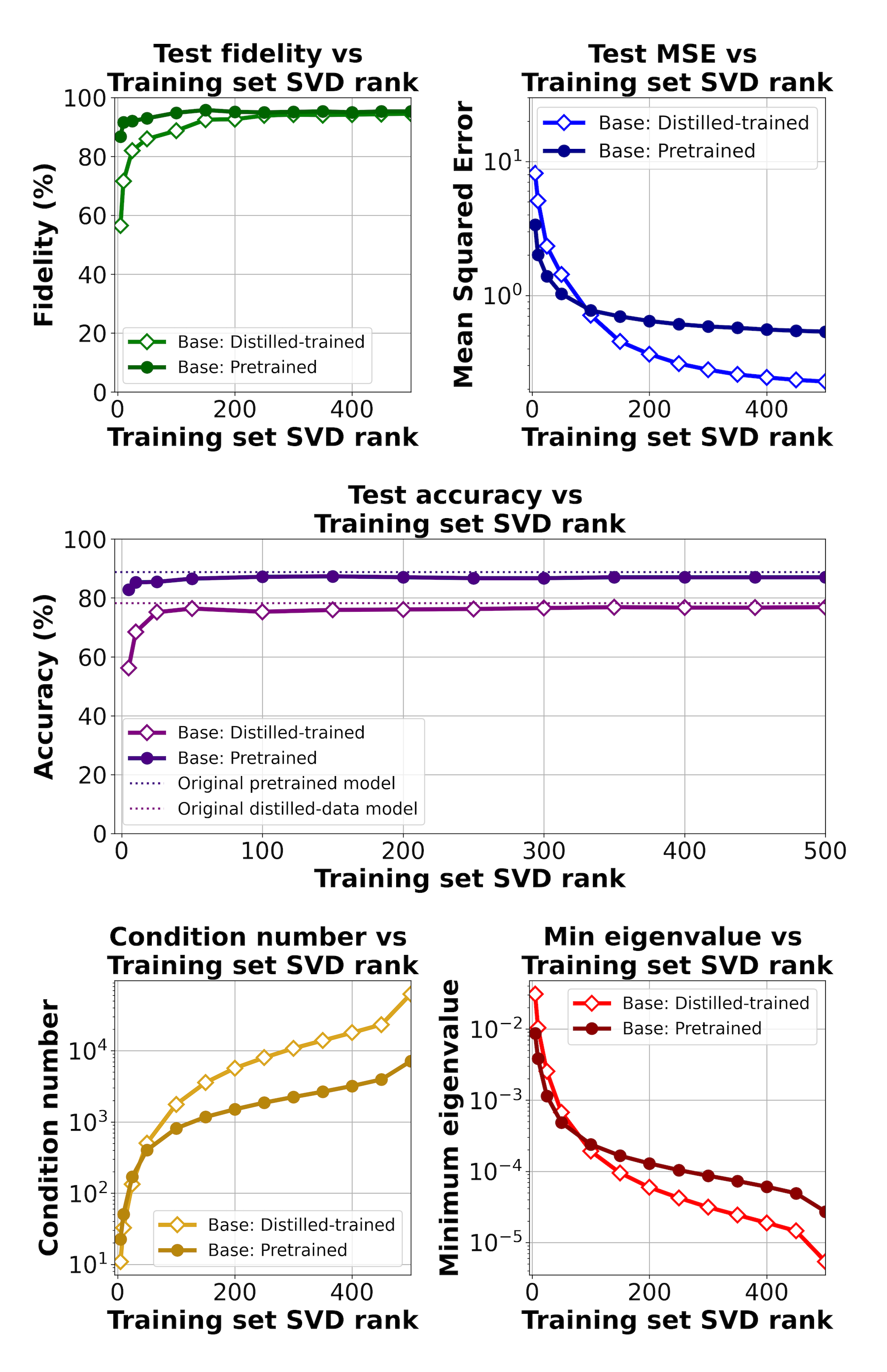}
    \caption{The same measures of accuracy as in \Cref{fig:size-acc-fid-mse} saturate quickly with increasing rank. The best rank-\(r\) approximation is taken by substituting \(U\) and \(\Sigma\) with \(U^{(r)}\) and \(\Sigma^{(r)}\) in \eqref{eqn:woodbury}.}
    \label{fig:rank-acc-fid-mse}
\end{figure}

\begin{figure}[!h]
    \centering
    \includegraphics[width=0.50\linewidth]{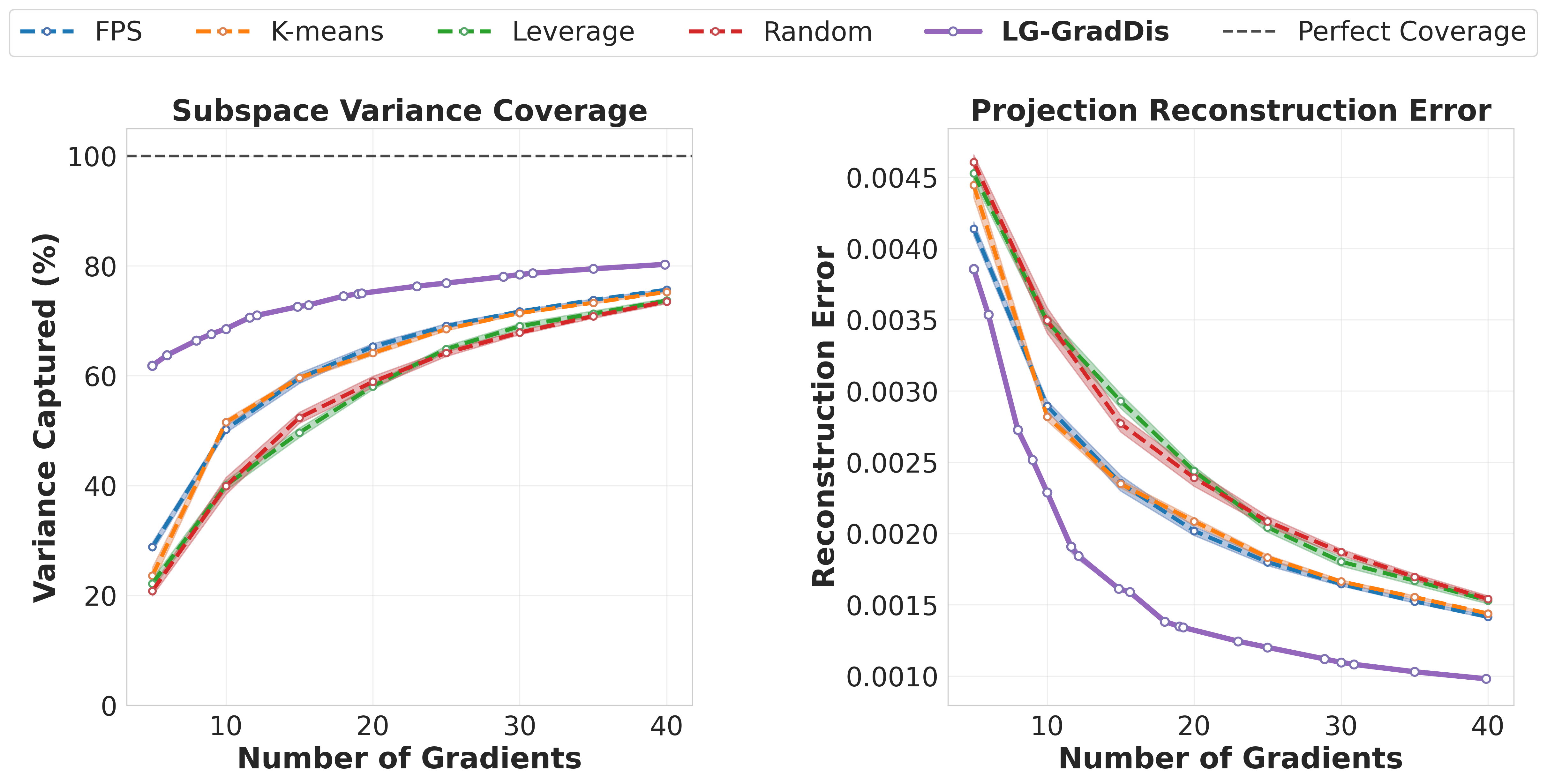}
    \caption{\textbf{Subspace variance coverage} is computed by projecting the centered gradients \(\Phi_X\) onto \(VV^\top\), where \(V \in \R^{P \times m}\) is an orthonormal basis of the distilled subspace computed using QR decomposition. Total variance coverage is measured as the ratio \(\|\Phi_X V V^\top\|_F^2 / \|\Phi_X\|^2\).
    \textbf{Projection reconstruction error} Reconstruction error is then computed by \(\|\Phi_X - \Phi_X VV^\top\|^2 / n\), measuring the information lost when representing the training gradients in the lower-dimensional distilled subspace. All results are taken with \(H=10\) clusters and various thresholds \(\tau_v\) and \(\tau_g\).}
    \label{fig:coverage-plot}
\end{figure}

\begin{figure}
    \centering
    \begin{subfigure}{0.45\linewidth}
        \centering
        \includegraphics[width=\linewidth]{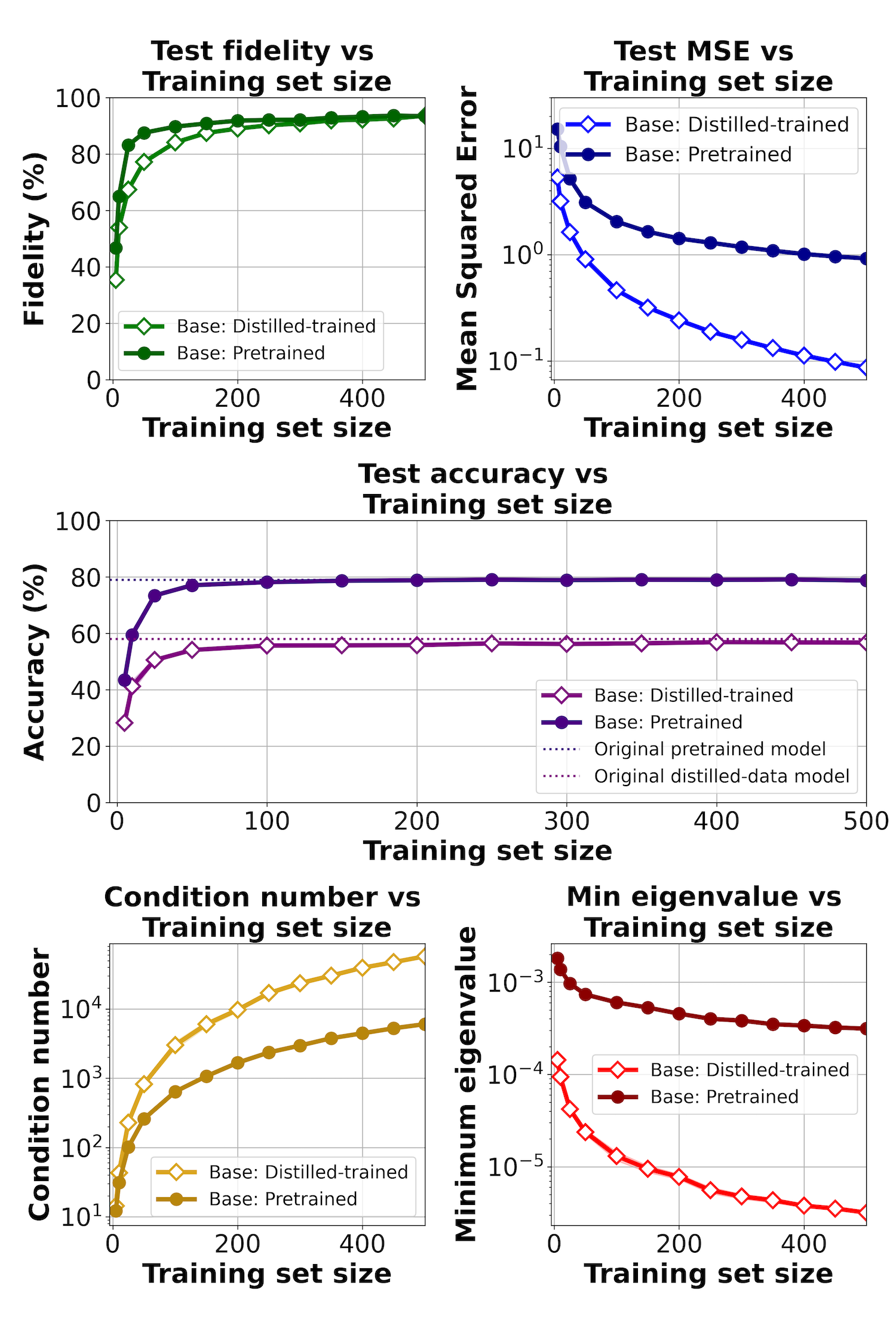}
        \caption{Measures given in \Cref{fig:size-acc-fid-mse} as a function of the number of gradients.}
    \end{subfigure}
    \begin{subfigure}{0.45\linewidth}
        \centering
        \includegraphics[width=\linewidth]{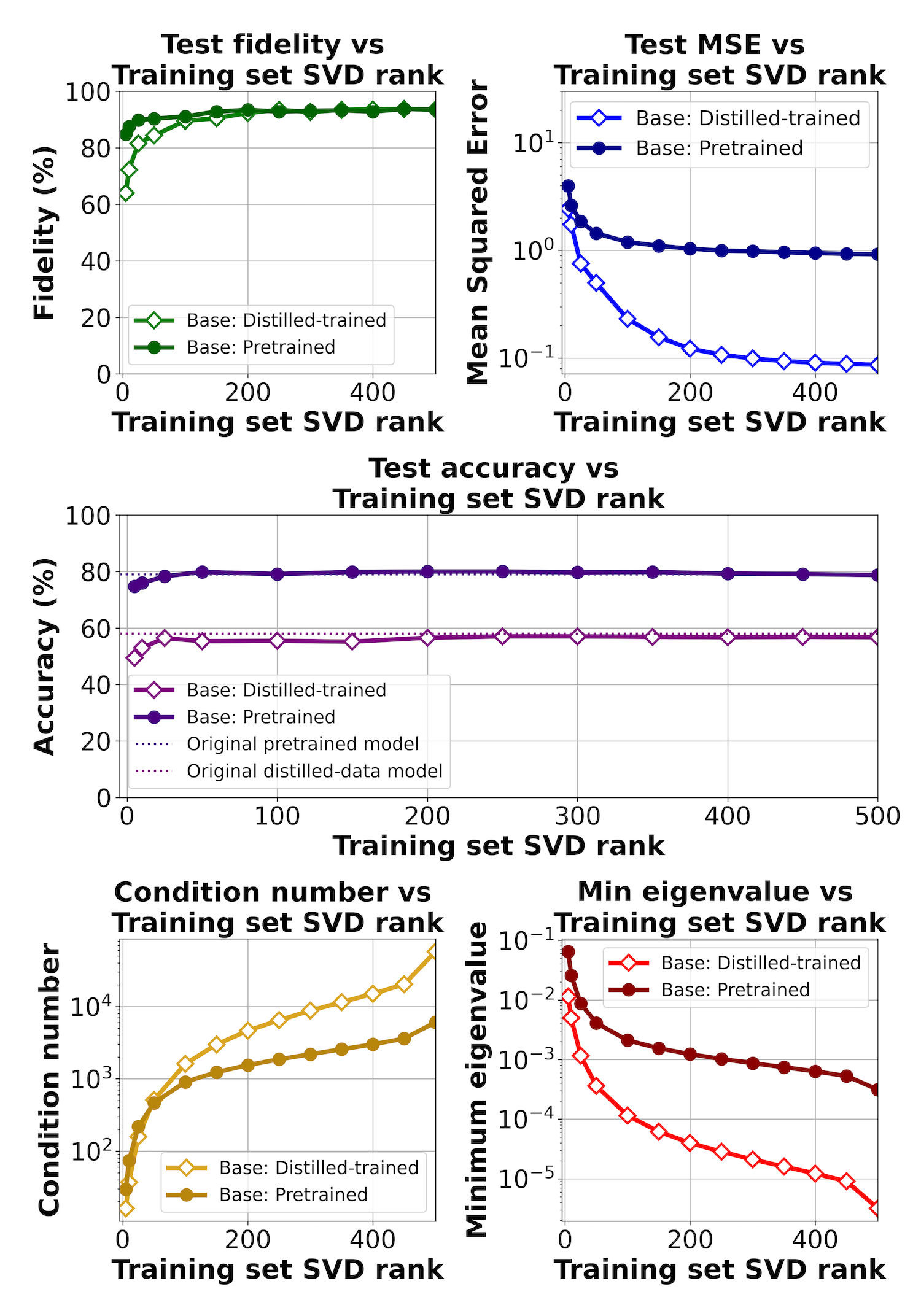}
        \caption{Measures given in \Cref{fig:size-acc-fid-mse} as a function of the rank-\(r\) SVD truncation of the kernel matrix.}
    \end{subfigure}
    \begin{subfigure}{0.6\linewidth}
        \centering
        \includegraphics[width=\linewidth]{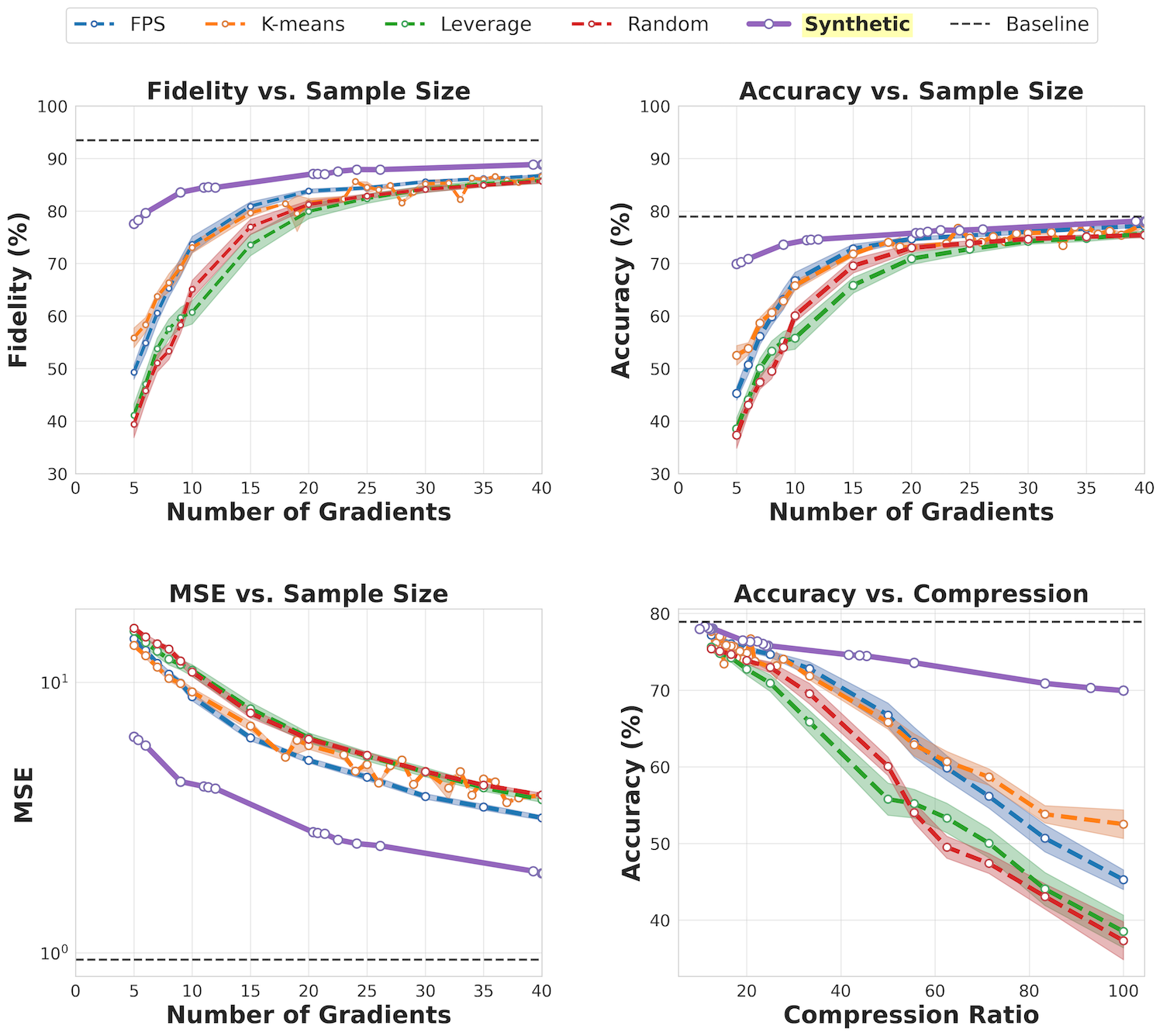}
        \caption{ImageWoof fidelity, accuracy, and MSE under \Cref{alg:local_global_comp}.}
    \end{subfigure}
    \caption{Fidelity, accuracy, and spectral tests on the \text{ImageWoof} dataset with a ResNet-18 model.}
    \label{fig:imagewoof}
\end{figure}

\section{Local-global gradient distillation}
\label{appsec:local_global_comp}

\subsection{Containment and gaps}
\label{appsec:contain-gaps}
\Cref{alg:local_global_comp} is based on the observation that there exists a significant gap between the subspace spanned by individual clusters and the subspace needed for the kernel to make correct predictions. In Fig. \ref{fig:containment_gaps}, we provide the subspace containment results for the other nine classes described in Sec. \ref{sec:local-global}. The same trends are observed –– each cluster attends to a small portion of total PCs in the effective rank –– but different clusters are activated in the first few principal directions for different classes.

\begin{figure}[!h]
\centering
\begin{tabular}{@{}ccc@{}}
\includegraphics[width=0.25\linewidth]{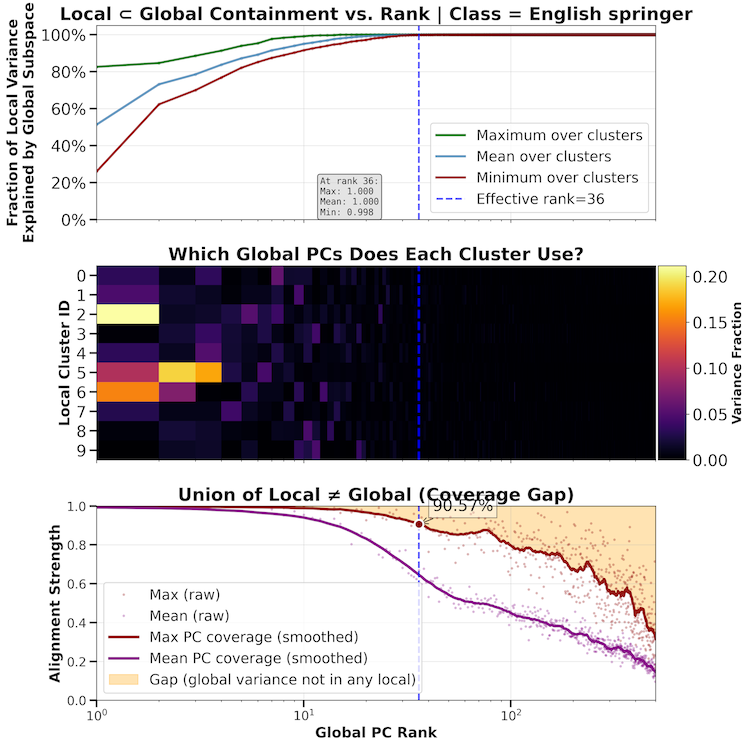} &
\includegraphics[width=0.25\linewidth]{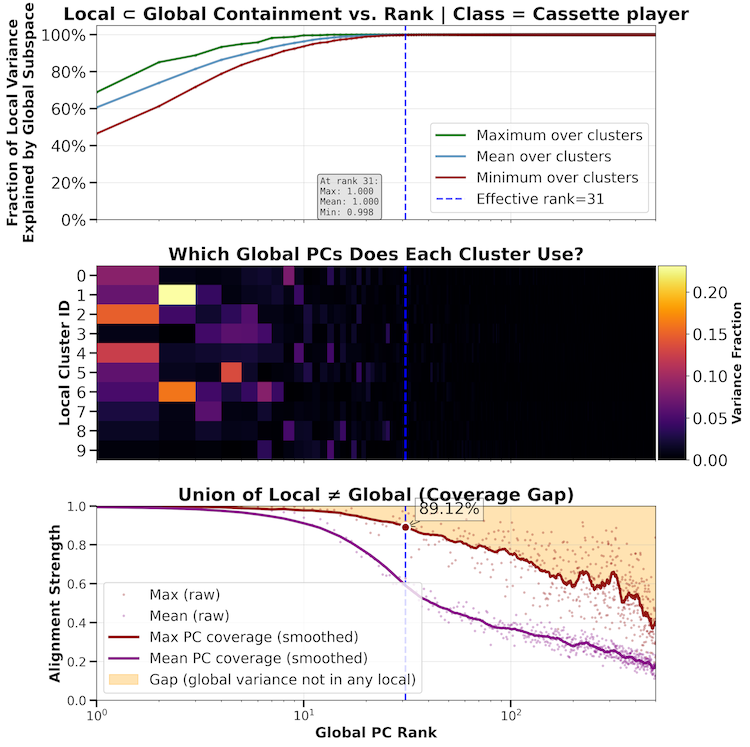} &
\includegraphics[width=0.25\linewidth]{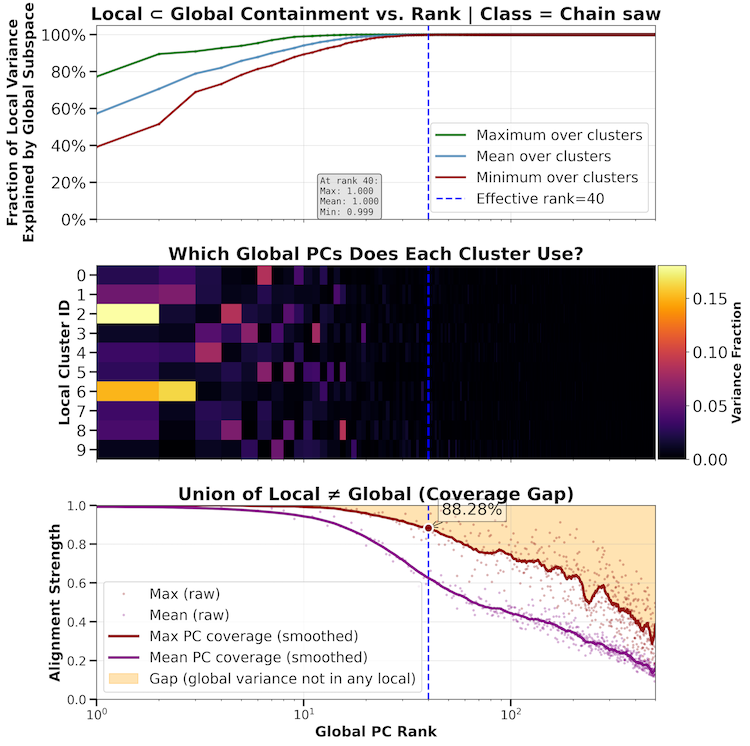} \\[2mm]

\includegraphics[width=0.25\linewidth]{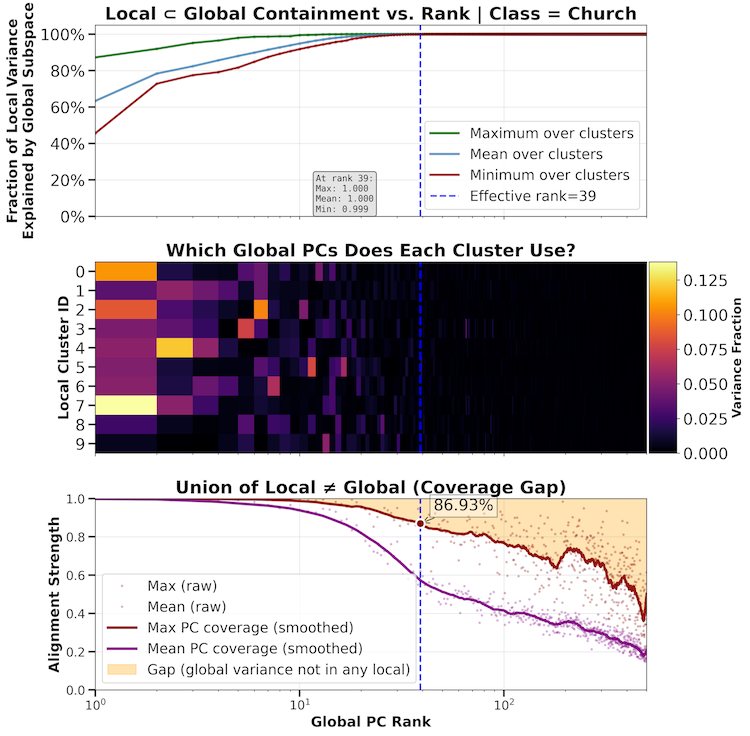} &
\includegraphics[width=0.25\linewidth]{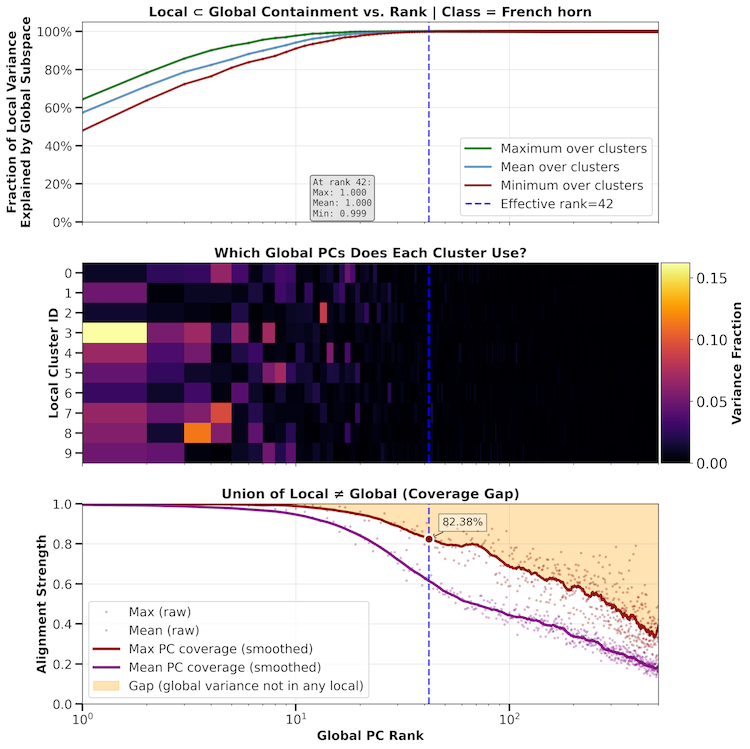} &
\includegraphics[width=0.25\linewidth]{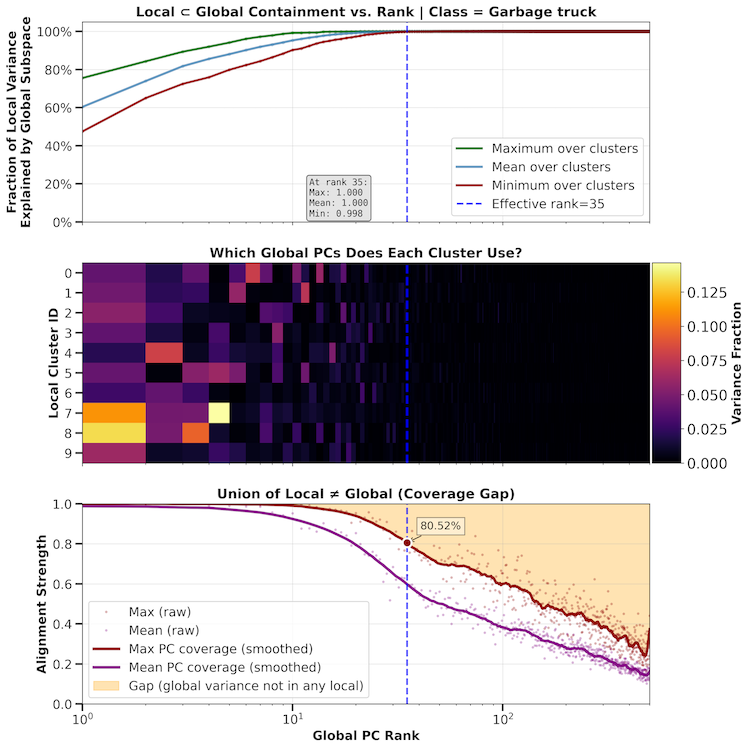} \\[2mm]

\includegraphics[width=0.25\linewidth]{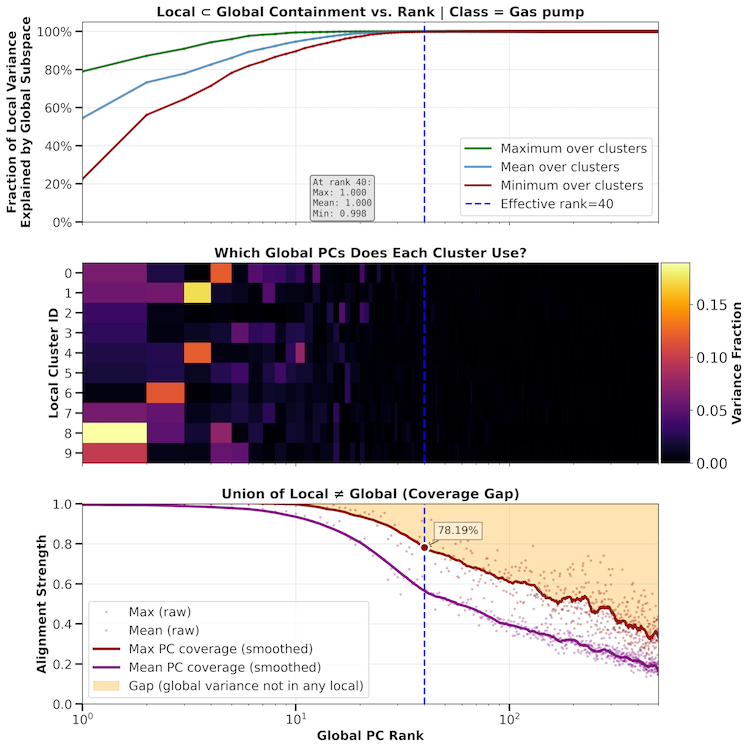} &
\includegraphics[width=0.25\linewidth]{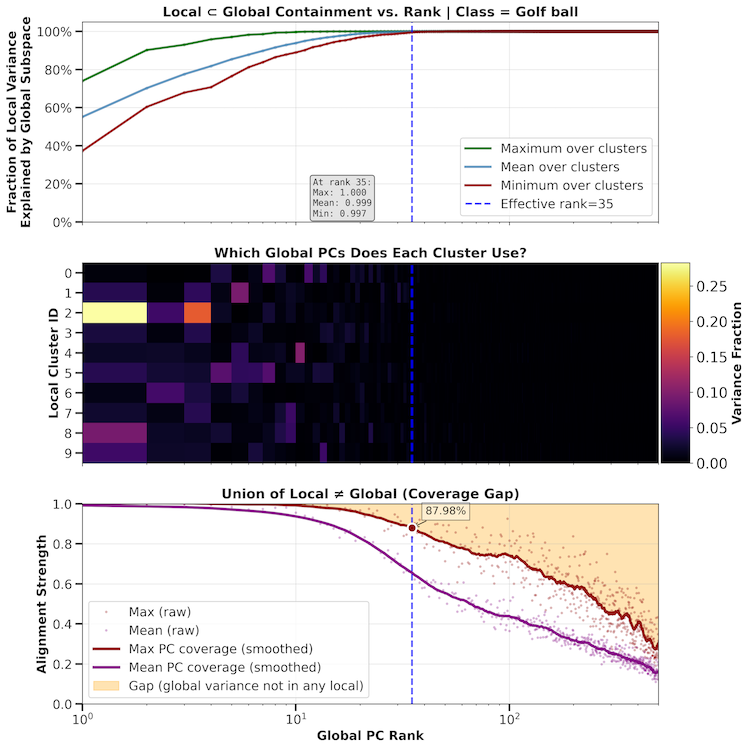} &
\includegraphics[width=0.25\linewidth]{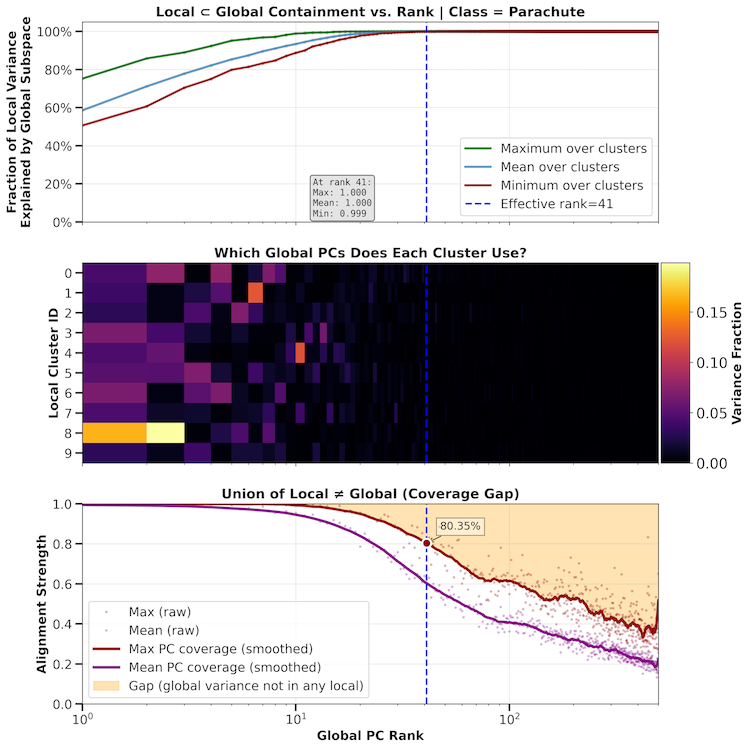}
\end{tabular}
\caption{Explained variance and containment gaps across classes as a function of rank (ImageNette dataset, ResNet-18 model).}
\label{fig:containment_gaps}
\end{figure}

\begin{algorithm}[!h]
\caption{Local-Global Gradient Distillation}
\label{alg:local_global_comp}
\begin{algorithmic}[1]
\REQUIRE Stacked gradients \(\Phi \in \R^{m \times k \times C}\), labels \(Y \in \R^{m \times C}\), cluster count \(H\), variance threshold \(\tau_v\), gap threshold \(\tau_g\)
\ENSURE Synthetic gradients \(\hat{\Phi} \in \R^{s \times k \times C}\), synthetic labels \(\hat{Y} \in \R^{s \times C}\)

\STATEX
\STATEX \textbf{// Step 1: Kernel computation and clustering}
\STATE \(K \gets \frac{1}{k} \Phi \Phi^\top \in \R^{m \times m \times C}\) \COMMENT{Per-class kernel}
\STATE \(\bar{K} \gets \frac{1}{C} \sum_{c=1}^{C} K^{c}\) \COMMENT{Class-averaged kernel}
\STATE \(\{\cI_h\}_{h=1}^{H} \gets \textsc{SpectralCluster}(\bar{K}, H)\) \COMMENT{Cluster indices}

\STATEX
\STATEX \textbf{// Step 2: Global eigendecomposition}
\STATE \(U_g \Sigma_g U_g^\top \gets \textsc{eigendecomp}(\bar{K})\)
\STATE \(r_g \gets \min\{r : \sum_{i=1}^{r} \sigma_i / \sum_{i=1}^{m} \sigma_i \geq \tau_v\}\) \COMMENT{Global effective rank}

\STATEX
\STATEX \textbf{// Step 3: Local eigendecomposition and coverage analysis}
\STATE \(\mathbf{c} = [c_1, \ldots, c_{r_g}] \gets \mathbf{0} \in \R^{r_g}\) \COMMENT{Coverage of each global direction}
\FOR{\(h = 1, \ldots, H\)}
    \STATE \(\bar{K}_h \gets \bar{K}[\cI_h, \cI_h]\) \COMMENT{Local kernel}
    \STATE \(U_h \Sigma_h U_h^\top \gets \textsc{SVD}(\bar{K}_h)\)
    \STATE \(r_h \gets \min\{r : \sum_{i=1}^{r} [\Sigma_h]_{ii} / \text{tr}(\Sigma_h) \geq \tau_v\}\)
    \FOR{\(j = 1, \ldots, r_g\)}
        \STATE \(\vu \gets U_g[\cI_h, j]\) \COMMENT{Global eigenvector restricted to cluster}
        \STATE \(\vu_{\text{proj}} \gets U_h[:, 1{:}r_h] \, (U_h[:, 1{:}r_h]^\top \vu)\) \COMMENT{Project onto local span}
        \STATE \(c_j \gets \|\vu_{\text{proj}}\|^2 / \|\vu\|^2\) \COMMENT{Update coverage}
    \ENDFOR
\ENDFOR
\STATE \(\cG \gets \{j : c_j < \tau_g\}\) \COMMENT{Gap directions}

\STATEX
\STATEX \textbf{// Step 4: Distill local representatives}
\STATE \(\hat{\Phi} \gets [\,], \quad \hat{Y} \gets [\,], \quad \cU \gets [\,]\)
\FOR{\(h = 1, \ldots, H\)}
    \FOR{\(j = 1, \ldots, r_h\)}
        \STATE \(\vu \gets U_h[:, j] / \|U_h[:, j]\|\) \COMMENT{Normalized local eigenvector}
        \STATE \(\hat{\phi} \gets \Phi[\cI_h]^\top \vu \in \R^{k \times C}\) \COMMENT{Synthesize gradient}
        \STATE \(\hat{\mathbf{y}} \gets Y[\cI_h]^\top \vu \in \R^{C}\) \COMMENT{Synthesize label}
        \STATE Append \(\hat{\phi}\) to \(\hat{\Phi}\), \(\hat{\mathbf{y}}\) to \(\hat{Y}\)
        \STATE \(\hat{\vu} \gets \mathbf{0} \in \R^N\); \(\hat{\vu}[\cI_h] \gets \vu\) \COMMENT{Lift to full space}
        \STATE Append \(\hat{\vu}\) to \(\cU\)
    \ENDFOR
\ENDFOR

\STATEX
\STATEX \textbf{// Step 5: Distill gap representatives}
\FOR{\(j \in \cG\)}
    \STATE \(\vv \gets U_g[:, j] / \|U_g[:, j]\|\) \COMMENT{Normalized global eigenvector}
    \STATE \(\hat{\phi} \gets \Phi^\top \vv\) \COMMENT{Synthesize from full set}
    \STATE \(\hat{\mathbf{y}} \gets Y^\top \vv\)
    \STATE Append \(\hat{\phi}\) to \(\hat{\Phi}\), \(\hat{\mathbf{y}}\) to \(\hat{Y}\), \(\vv\) to \(\cU\)
\ENDFOR

\STATEX
\STATEX \textbf{// Step 6: Orthogonalize to remove redundancy}
\STATE \(Q, R \gets \textsc{QR}([\cU])\) \COMMENT{\([\cU] \in \R^{N \times m}\)}
\STATE \(\cS \gets \{i : |R_{ii}| > \e \cdot \max_j |R_{jj}|\}\) \COMMENT{Non-redundant indices}
\STATE \(\hat{\Phi} \gets \hat{\Phi}[\cS], \quad \hat{Y} \gets \hat{Y}[\cS]\)

\STATEX
\STATE \textbf{return} \(\hat{\Phi}, \hat{Y}\)
\end{algorithmic}
\end{algorithm}

\subsection{Complexity and grid search}
\label{appsec:grid-search}
The length of Algorithm \ref{alg:local_global_comp} belies a reasonable complexity profile. Let \(n\) be the number of original gradients, \(H\) the number of clusters, \(r_g\) the global effective rank, and \(\bar{r} = \frac{1}{H}\sum_h r_h\) the average local effective rank. We get the following complexity considerations:
\begin{itemize}
    \item \textbf{Kernel computation} scales as \(O\left(n^2 k C \right)\)
    \item \textbf{Clustering} is \(O\left(n^3 \right)\) for spectral clustering (dominated mostly by eigendecomposition)
    \item \textbf{Global SVD} is \(O(n^3)\)
    \item \textbf{Local SVDs} is \(O \left(\sum_h |\cI_h|^3 \right) = O\left(n^3 / H^2 \right)\), assuming roughly balanced clusters
    \item \textbf{Synthesis} is \(O(m \bar{n} k C)\) where \(m\) is the number of synthetic gradients and \(\bar{n} =n/H\)
\end{itemize}
Since these considerations are independent of one another, the complexity is dominated by the \(O(n^3)\) global SVD, which is a one-time cost. This compares favorably to other methods, which scale as in \Cref{tab:complexity}.
\begin{table}[!h]
\centering
\begin{tabular}{|c|c|c|c|}
\hline
\textbf{Method} & \textbf{Complexity} & \textbf{Dominant factor} & \textbf{Notes} \\
\hline\hline
Random & \(O(1)\) & None & Least performant \\
Leverage score & \(O(C\cdot n^3)\) & Cubic in dataset size & Theoretically most expensive \\
Greedy FPS & \(O(n_{\text{select}} \cdot n \cdot k \cdot C) \) & Quadratic in selection size & Tradeoff in \(n_{\text{select}}\) \\
K-Means & \(O(n_\text{init} \cdot t \cdot n \cdot n_{\text{select}} \cdot k \cdot C)\) & Linear in all factors & Tradeoff in \(n_{\text{select}}\); practically expensive \\
\textbf{Synthetic} & \(O(n^3 + m_\text{gap} \cdot n \cdot k \cdot C)\) & Cubic for one-time SVD & No complexity tradeoff \\
\hline
\end{tabular}
\caption{Complexity profiles of several sampling methods.}
\label{tab:complexity}
\end{table}

The number of synthetic gradients is given by
\[
m = \underbrace{\sum_{h=1}^{H} r_h}_{\text{local}} + \underbrace{|\cG|}_{\text{gap}} - \underbrace{m_{\text{redundant}}}_{\text{orthogonalization}}
\]

In practice, the number of gradients synthesized depends on the kernel structure (larger gaps will require more gradients), number of clusters provided, and hyperparameters \(\tau_v\) and \(\tau_g\).

In Fig. \ref{fig:grid-search}, we run a grid search on fidelity, accuracy, and number \(m\) of synthetic gradients as a function of \(\tau_v\) and \(\tau_g\). We also show the count of local and global distilled gradients as a function of \(\tau_v\) and \(\tau_g\), seeing that results on the optimal Pareto frontier are sporadically distributed across configurations.

\begin{figure}[!h]
    \centering
    \begin{subfigure}{0.75\linewidth}
        \centering
        \includegraphics[width=\linewidth]{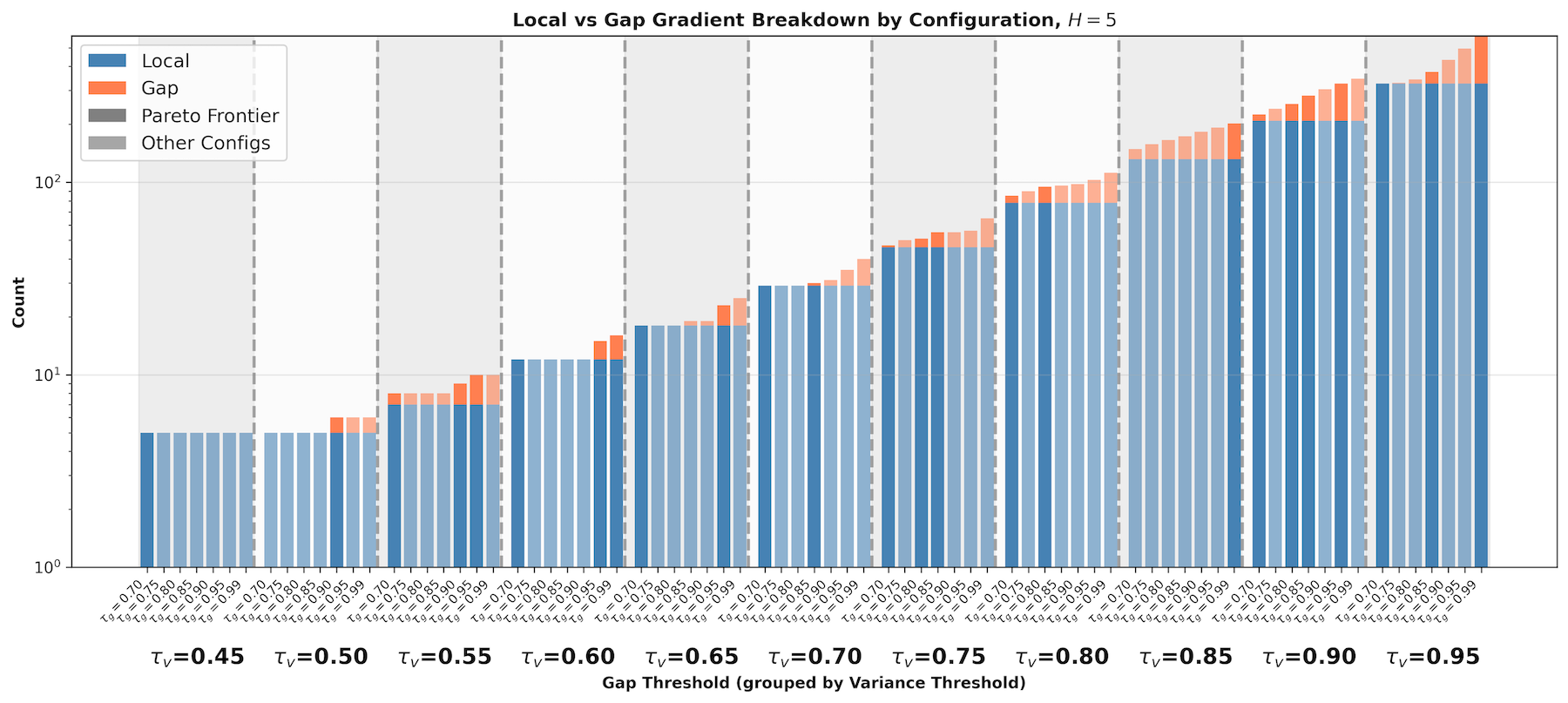}
        \includegraphics[width=\linewidth]{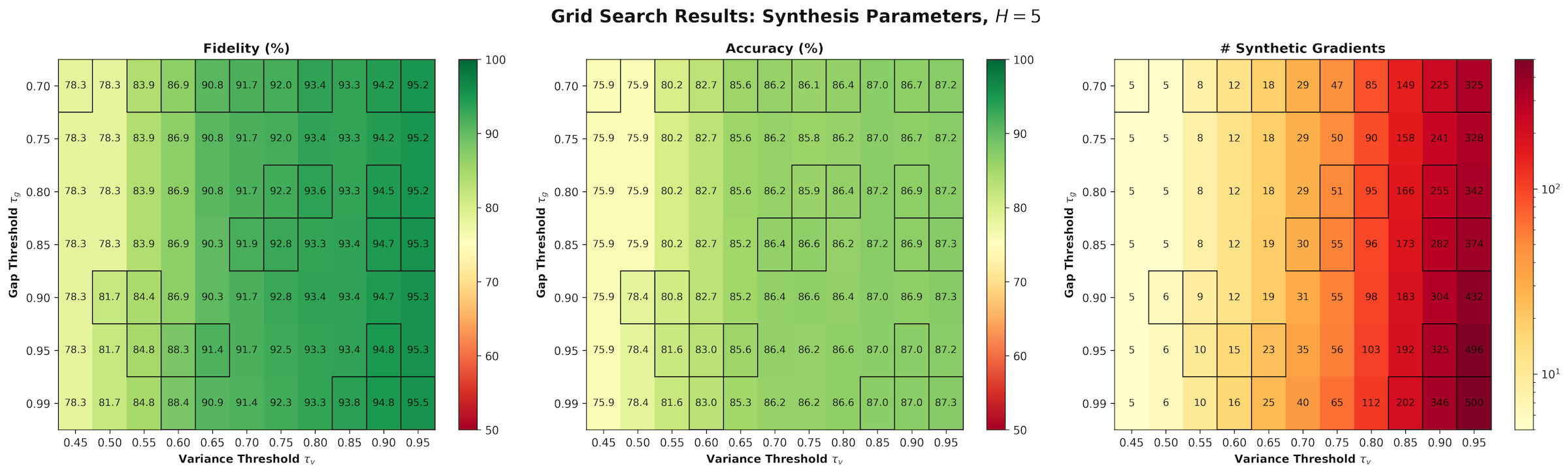}
        \caption{\(H=5\) clusters}
    \end{subfigure}
    \begin{subfigure}{0.75\linewidth}
        \centering
        \includegraphics[width=\linewidth]{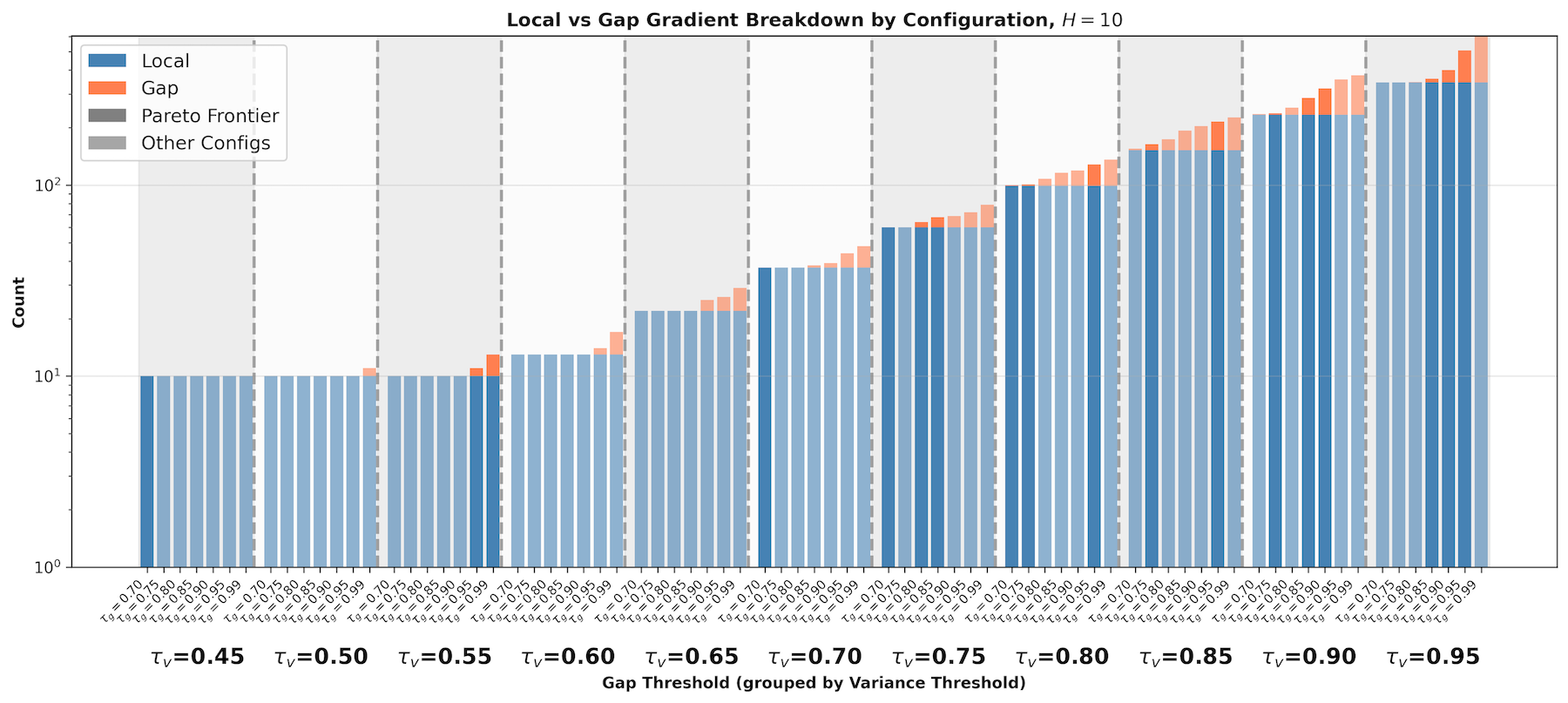}
        \includegraphics[width=\linewidth]{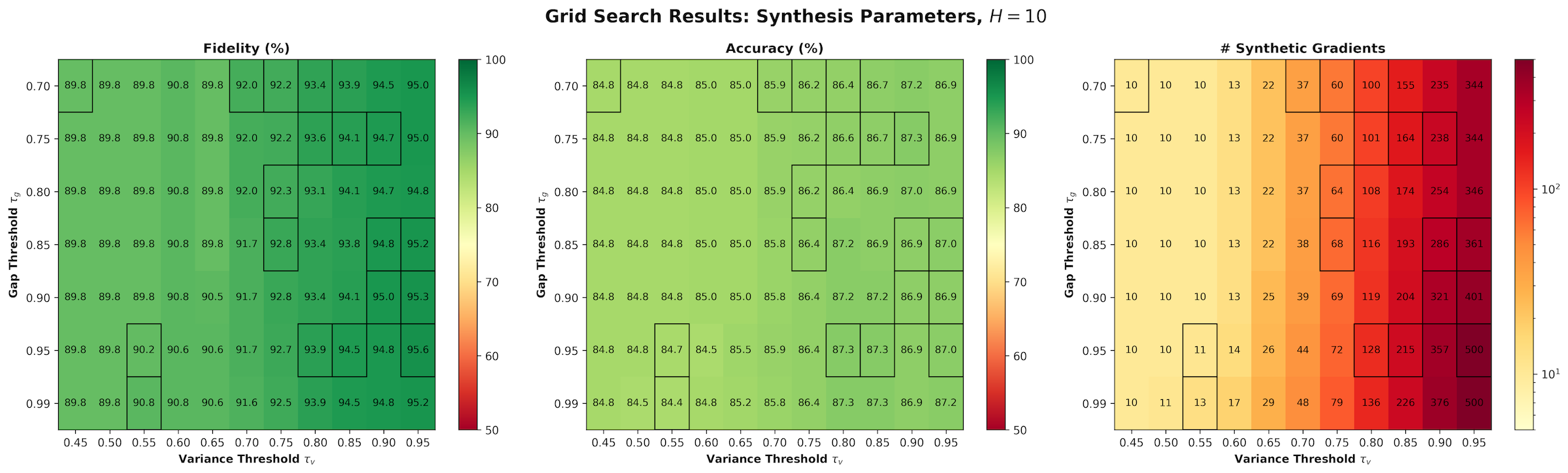}
        \caption{\(H=10\) clusters}
    \end{subfigure}
    \caption{Grid search results for \(H=5,10,15,20\) clusters on the ImageNette dataset (ResNet-18 model). 
    \textbf{Top}: Breakdown of the number of global \& local gradients synthesized by the algorithm for each configuration. Fewer clusters means that more effective ranks lie in the gaps. Configurations along the Pareto frontier (between \(\tau_v\) and \(\tau_g)\) are bolded.
    \textbf{Bottom}: Grid search between \(\tau_v\) and \(\tau_g\). Naturally, performance increases with the number of synthetic gradients. Configurations on the Pareto frontier are boxed.
    }
    \label{fig:grid-search}
\end{figure}

\clearpage  

\begin{figure}[!h]
    \ContinuedFloat
    \centering
    \begin{subfigure}[t]{0.75\linewidth}
        \centering
        \includegraphics[width=\linewidth]{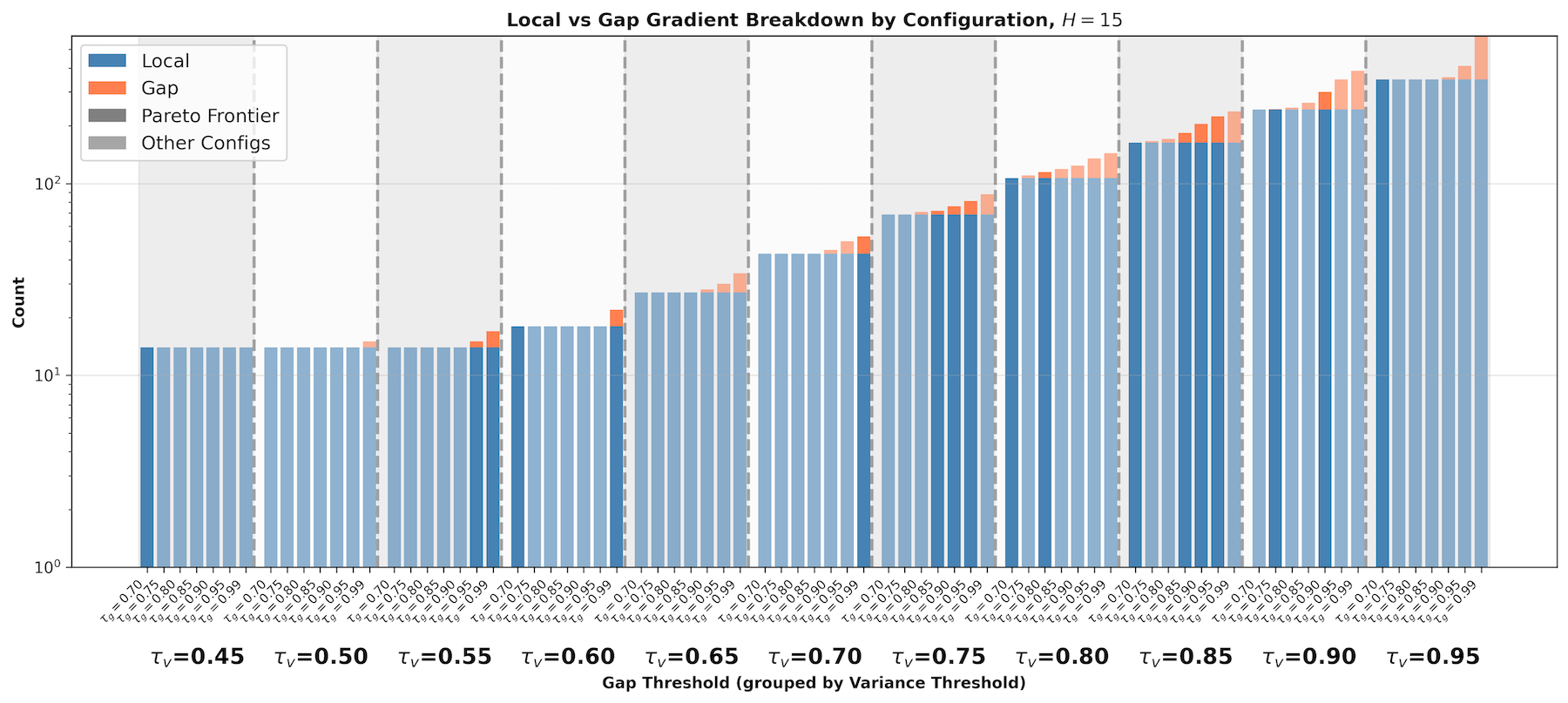}
        \includegraphics[width=\linewidth]{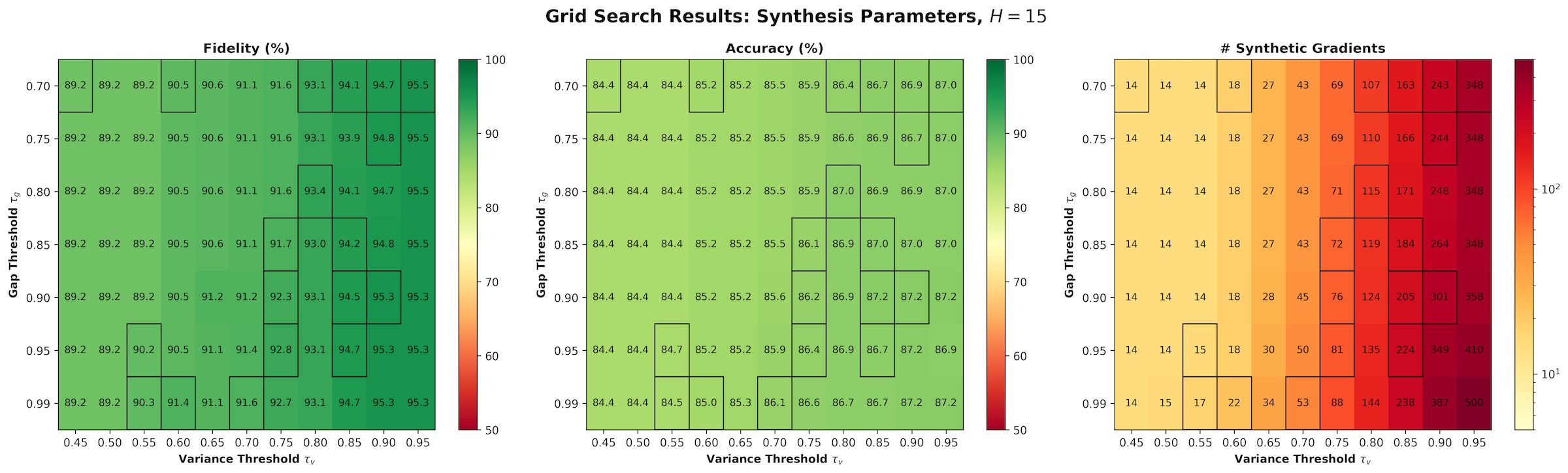}
        \caption{\(H=15\) clusters}
    \end{subfigure}
    \begin{subfigure}[t]{0.75\linewidth}
        \centering
        \includegraphics[width=\linewidth]{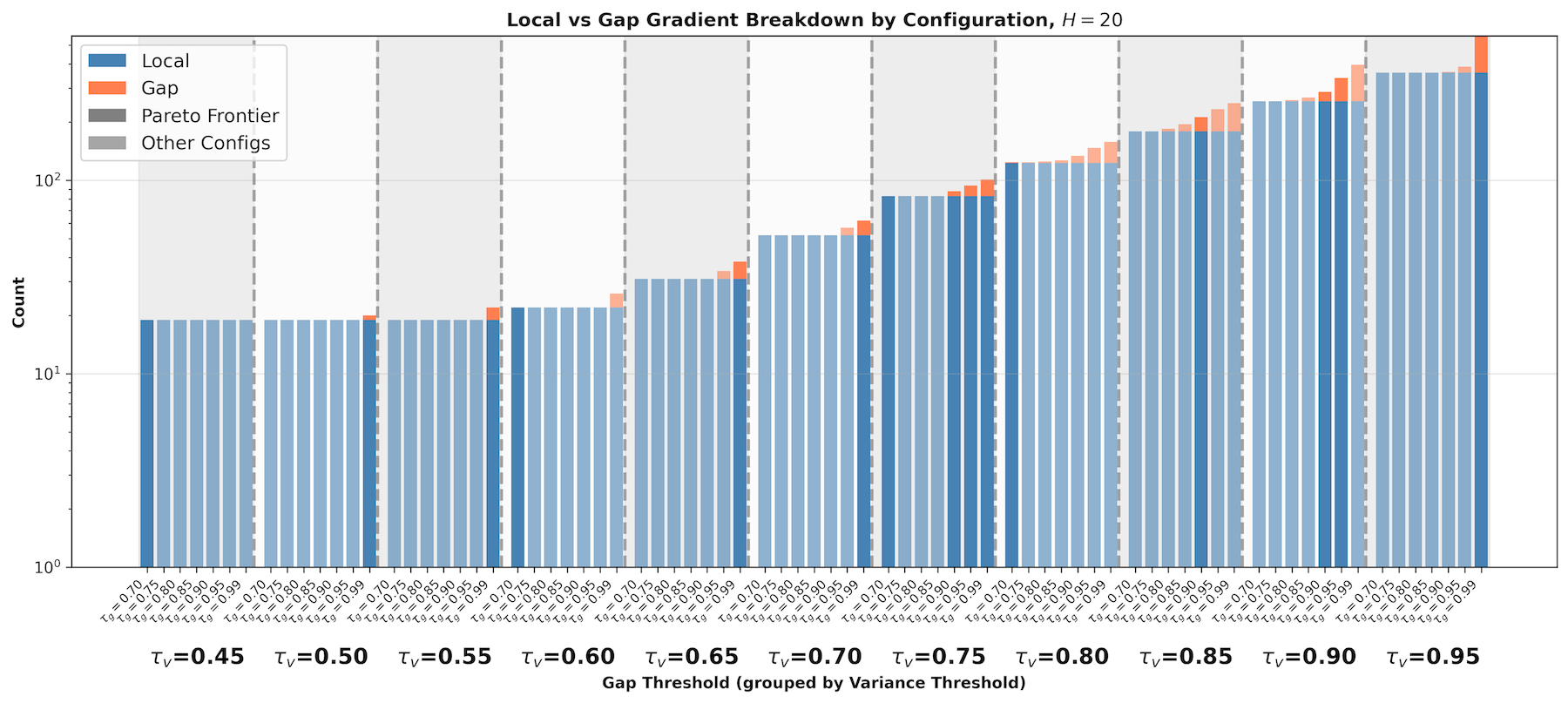}
        \includegraphics[width=\linewidth]{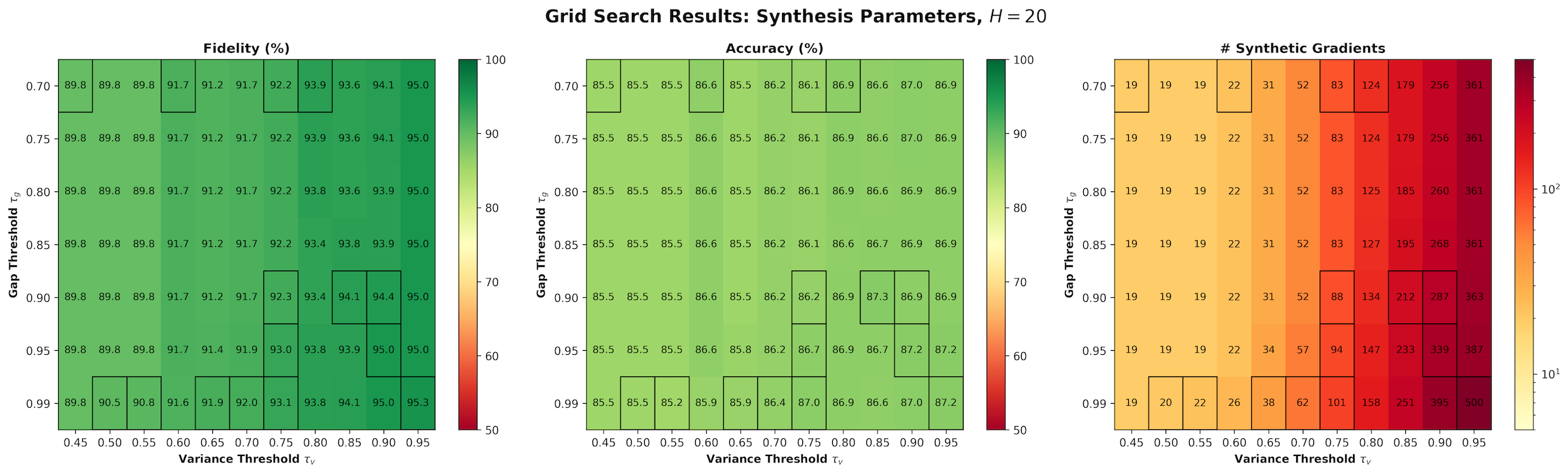}
        \caption{\(H=20\) clusters}
    \end{subfigure}
    \caption[]{(continued)}
\end{figure}

\end{document}